\documentclass[10pt]{article} %

\usepackage[preprint]{tmlr}

\usepackage{amsmath,amssymb,amsthm,mathtools}
\usepackage{bm}
\usepackage{bbm}
\usepackage{enumitem}
\usepackage{microtype}
\usepackage{booktabs}
\usepackage{adjustbox}
\usepackage{multirow}
\usepackage{subcaption}
\usepackage{threeparttable}
\usepackage{graphicx}
\usepackage{tikz}
\usepackage{pgfplots}
\pgfplotsset{compat=1.18}
\definecolor{C0}{HTML}{1B6CA8}
\definecolor{C1}{HTML}{E1701A}
\definecolor{C2}{HTML}{2C8B57}
\definecolor{C3}{HTML}{A02C2C}
\definecolor{C4}{HTML}{6A4C93}
\definecolor{C5}{HTML}{8C6D31}
\definecolor{C6}{HTML}{B5379B}
\definecolor{C7}{HTML}{444444}
\usepgfplotslibrary{groupplots}
\usepackage{xcolor}
\definecolor{correctioncolor}{rgb}{0,0,0} %

\newcommand{\correction}[1]{\textcolor{correctioncolor}{#1}}
\usepackage{lscape}
\usepackage{pifont}
\usetikzlibrary{arrows.meta, calc, positioning, fit,
  decorations.pathreplacing, shapes.geometric, patterns}
\usepackage{algorithm}
\usepackage[noend]{algpseudocode}
\usepackage{float}

\usepackage{hyperref}
\usepackage{url}
\usepackage[capitalize,nameinlink]{cleveref}

\newcommand{\codeurl}{\url{https://anonymous.4open.science/r/dp-rfedproj-83B0/}}
\crefname{assumption}{assumption}{assumptions}
\Crefname{assumption}{Assumption}{Assumptions}

\definecolor{crdp}{HTML}{0077BB}
\definecolor{chuang}{HTML}{CC3311}
\definecolor{chybrid}{HTML}{009988}
\definecolor{crho1}{HTML}{EE6677}
\definecolor{crho2}{HTML}{228833}
\definecolor{crho5}{HTML}{4477AA}
\definecolor{crho10}{HTML}{CCBB44}
\definecolor{crho20}{HTML}{AA3377}

\theoremstyle{plain}
\newtheorem{theorem}{Theorem}
\newtheorem{lemma}[theorem]{Lemma}
\newtheorem{corollary}[theorem]{Corollary}
\newtheorem{proposition}[theorem]{Proposition}
\theoremstyle{definition}
\newtheorem{definition}[theorem]{Definition}
\newtheorem{assumption}{Assumption}
\theoremstyle{remark}
\newtheorem{remark}[theorem]{Remark}

\newcommand{\R}{\mathbb{R}}

\newcommand{\M}{\mathcal{M}}
\newcommand{\E}{\mathbb{E}}

\DeclareMathOperator{\grad}{grad}
\DeclareMathOperator{\conv}{conv}
\DeclareMathOperator{\sens}{sens}
\newcommand{\opnorm}[1]{\|#1\|_{\mathrm{op}}}
\newcommand{\norm}[1]{\left\lVert #1 \right\rVert}

\newcommand{\dimE}{D}
\newcommand{\sdp}{\sigma_{\mathrm{dp}}}
\newcommand{\clip}{\mathrm{clip}}
\newcommand{\N}{\mathcal{N}}
\newcommand{\Geff}{G_{\mathrm{eff}}}
\newcommand{\LM}{L_{\mathcal{M}}}

\newcommand{\Ldp}{L_{P}}

\newcommand{\Sym}{\mathrm{Sym}}
\newcommand{\dM}{d_{\mathcal{M}}}

\newcommand{\Idp}{I_{D}}

\newcommand{\trace}{\mathrm{tr}}
\newcommand{\gradC}{\grad^{C}\!}            %
\newcommand{\hgradC}{\widehat{\grad}^{C}\!} %
\newcommand{\Cqr}{C_{\mathrm{qr}}}
\newcommand{\EtG}{\E_t^{\mathcal{G}}}       %

\newcommand{\sighet}{\sigma_{\mathrm{het}}}
\pgfplotsset{table/search path={figures}}

\usetikzlibrary{arrows.meta, calc, positioning, fit, backgrounds}

\tikzset{
  netbox/.style={
    draw=black, semithick, rounded corners=2pt, align=center, fill=white,
    font=\scriptsize, minimum height=9mm, minimum width=15mm, inner sep=3pt
  },
  netdata/.style={netbox, dashed},
  netarrow/.style={-{Latex[length=2mm]}, thick, draw=black},
  netshapelabel/.style={midway, above, font=\tiny, yshift=5mm},
  netnote/.style={font=\scriptsize, align=left},
}

\newcommand{\EEGNetGeneralFig}{%
\begin{tikzpicture}[node distance=9mm and 11mm]
  \node[netdata] (inp) {Input\\\textbf{Trial}};
  \node[netbox, right=of inp] (block1)
    {\textbf{Block 1}\\Temporal \&\\spatial conv.};
  \node[netbox, right=of block1] (block2)
    {\textbf{Block 2}\\Separable\\convolution};
  \node[netbox, right=of block2] (head)
    {\textbf{Head}\\flatten\\$\to$ linear};
  \node[netdata, right=of head] (out) {Output\\$\hat{y}$};

  \draw[netarrow] (inp) -- node[netshapelabel]
    {$(B,1,n_{\text{chan}},\mathcal{T})$} (block1);
  \draw[netarrow] (block1) -- node[netshapelabel]
    {$(B,F_1\!\times\! D,1,\mathcal{T}/p_1)$} (block2);
  \draw[netarrow] (block2) -- node[netshapelabel]
    {$(B,F_2,1,\mathcal{T}/(p_1p_2))$} (head);
  \draw[netarrow] (head) -- node[netshapelabel]
    {$(B,n_{\text{cl}})$} (out);

  \node[netnote, below=7mm of block2] {%
    $F_1{=}8$ (temporal filters),
    $D{=}2$ (depth multiplier),
    $F_2{=}F_1\!\times\! D{=}16$,
    $p_1{=}4$, $p_2{=}8$ (pooling factors).\\
    $K_t=\max(\lfloor 0.5 f_s \rceil,32)$ (temporal, ${\approx}500$\,ms),
    $K_s=\max(\lfloor f_s/(2p_1) \rceil,8)$.%
  };
\end{tikzpicture}%
}

\newcommand{\EEGNetFigureWithNotes}{%
\begin{figure}[h]
  \centering
  \adjustbox{max width=\linewidth}{\EEGNetGeneralFig}
\caption{EEGNet pipeline. Shapes above each arrow, with $B$ the batch size and
$\mathcal{T}$ the number of time samples.}
  \label{fig:eegnet}
\end{figure}
}

\newcommand{\SPDNetDiagram}{%
\begin{tikzpicture}[node distance=9mm and 11mm,
  box/.style={netbox}, data/.style={netdata}]
  \node[data] (input) {Input\\[2pt]\textbf{S.C.M.}};
  \node[box, right=of input] (bimap) {\textbf{BiMap}\\$W^{\top}\!\cdot W$};
  \node[box, right=of bimap] (re) {\textbf{ReEig}\\$(\varepsilon_{\mathrm{R}})$};
  \node[box, right=of re] (logeig) {\textbf{LogEig}\\$\log(\cdot)$};
  \node[box, right=of logeig] (vec) {\textbf{Vech}\\half-vec.};
  \node[box, right=of vec] (fc) {\textbf{FC}};
  \node[data, right=of fc] (output) {Output\\$\hat{y}$};

  \draw[netarrow] (input) -- node[netshapelabel]
    {$(B,n_{\text{chan}},n_{\text{chan}})$} (bimap);
  \draw[netarrow] (bimap) -- node[netshapelabel] {$(B,d,d)$} (re);
  \draw[netarrow] (re) -- node[netshapelabel] {$(B,d,d)$} (logeig);
  \draw[netarrow] (logeig) -- node[netshapelabel] {$(B,d,d)$} (vec);
  \draw[netarrow] (vec) -- node[netshapelabel] {$(B,d(d{+}1)/2)$} (fc);
  \draw[netarrow] (fc) -- node[netshapelabel] {$(B,n_{\text{cl}})$} (output);
\end{tikzpicture}%
}

\newcommand{\SPDNetFigureWithNotes}{%
\begin{figure}[h]
  \centering
  \adjustbox{max width=\linewidth}{\SPDNetDiagram}
\caption{SPDNet pipeline. Shapes above each arrow, with $B$ the batch size and $d$ the
dimension after BiMap.}
  \label{fig:spdnet}
\end{figure}
}

\begin{document}

\title{Projection-Based Federated Learning on Compact Submanifolds with Geometry-Independent Differential Privacy}

\author{%
  \name Thibault Pautrel\thanks{Corresponding author.} \email thibault.pautrel@centralesupelec.fr \\
  \addr Universit\'e Paris-Saclay, CentraleSup\'elec, L2S\\
        Gif-sur-Yvette, France
  \AND
  \name Florent Bouchard \email florent.bouchard@cnrs.fr \\
  \addr Universit\'e Paris-Saclay, CNRS, CentraleSup\'elec, L2S\\
        Gif-sur-Yvette, France
  \AND
  \name Guillaume Ginolhac \email guillaume.ginolhac@univ-smb.fr \\
  \addr Universit\'e Savoie Mont Blanc, LISTIC\\
        Annecy, France
  \AND
  \name Ammar Mian \email ammar.mian@univ-smb.fr \\
  \addr Universit\'e Savoie Mont Blanc, LISTIC\\
        Annecy, France
    \AND
  \name Aur\'elien Bellet \email aurelien.bellet@inria.fr \\
  \addr PreMeDICaL team, Inria, Idesp, Inserm, Universit\'e de Montpellier\\
        Montpellier, France
}

\maketitle

\begin{abstract}
Federated learning trains a shared model across clients without exchanging their data,
yet the updates they transmit still leak information about individual records.
Enforcing differential privacy is harder still when the model parameters lie on a
Riemannian manifold. Existing methods rebuild the noise added for privacy in the geometry of each new
iterate, aggregate through intrinsic maps that often lack a closed form, and prove
convergence only in restricted regimes. We introduce \textsc{DP-RFedProj}, a record-level
differentially private federated learning method for compact submanifolds that
aggregates through closed-form projections and privatises gradients in a fixed ambient
geometry. Each client may run any stochastic first-order optimiser, take several local
steps, and participate only intermittently. An explicit record-level $f$-DP guarantee holds for the entire training
transcript, independently of the manifold, the aggregation scheme and the local
optimiser. We establish convergence of DP-RFedProj when each client runs Riemannian SGD locally and the server aggregates by projected averaging, and show that privacy enters the rate through a term scaling with the manifold’s intrinsic dimension rather than the ambient one. Synthetic experiments on federated PCA over the Grassmann manifold
corroborate these findings. On EEG motor-imagery classification with
SPDNet, whose weights lie on the Stiefel manifold, the method outperforms a Euclidean
baseline at every privacy budget considered. In both experimental settings, our accounting requires five to six times less noise than
that of the closest prior method for the same budget, which translates into higher utility.
\end{abstract}
\section{Introduction}
\subsection{Setting and motivation}

Federated learning (FL) enables a set of clients, each holding its own local data, to
collaboratively train a shared model under the coordination of a central server,
without exchanging raw data~\citep{mcmahan2017communication,konevcny2016federated}. It
is well suited to settings where data are scarce or subject-specific and where
centralised pooling is precluded by confidentiality or regulatory constraints, as in
medical imaging~\citep{sheller2020federated} or satellite
imagery~\citep{ben2024federated}. The standard protocol proceeds over a sequence of communication rounds. In each
round, the server broadcasts the current global model to the clients, each client
trains it locally on its own data, and the resulting updates are transmitted back
and aggregated into a new global model that seeds the next round. When the updates
are Euclidean parameters, the canonical aggregation scheme is
FedAvg~\citep{mcmahan2017communication}, which forms the next global model by
arithmetically averaging the client updates.

In a range of applications, however, the model parameters are constrained to a Riemannian
manifold. Such structure for instance arises when signals are classified through their symmetric
positive definite (SPD) spatial covariance matrices, as in EEG brain--computer interfaces
(BCI)~\citep{barachant2012multiclass} or image recognition~\citep{li2017second}, and
it underpins architectures such as SPDNet~\citep{huang2017riemannian,brooks2019riemannian},
whose orthonormal weight layers lie on the Stiefel manifold. Riemannian federated learning
adapts the protocol to this geometry: clients optimise locally with Riemannian gradient
methods, and the server aggregates parameters constrained to the manifold. Aggregation is
the central difficulty. The arithmetic mean of the client parameters may fall off the manifold, and even when it stays on it, it can ignore the underlying geometry. The
natural replacement is the Riemannian (Fr\'echet) mean, which rarely admits a closed form
and is instead approximated through the manifold's geometric maps, as in the classical
Karcher flow~\citep{karcher2014riemannian}. 

Federated learning is often motivated by privacy, but keeping data on-device does not by itself protect it. We consider an honest-but-curious server: it follows the training
protocol but observes all messages exchanged by the clients. The updates a client transmits are
computed from its records and leak information about them: reconstruction attacks recover
training examples from shared gradients~\citep{zhu2019deep}, and membership-inference
attacks reveal whether a particular record was used~\citep{shokri2017membership}.
Randomisation severs that link, since adding calibrated noise makes the output nearly as likely with or without any single record.  Differential privacy
(DP)~\citep{dwork2006calibrating,dwork2014algorithmic} makes this precise, bounding how
much the output distribution can change when a single record is altered. It comes at two
granularities: record-level DP makes any single training example undetectable, while
client-level DP conceals a client's entire dataset and thus whether that client took part
at all~\citep{geyer2017differentially,mcmahan2018general}. \correction{We target the former.
A record enters each local update through the gradient contributions it induces, and the
standard approach to protecting it is to bound these contributions by clipping before
masking them with noise.}

To certify privacy, one must ensure that an adversary cannot tell whether any particular
record was used, even after observing all the messages exchanged throughout training.
In the Euclidean setting, privacy accounting has tightened across successive frameworks:
from the original $(\varepsilon,\delta)$-DP bound~\citep{dwork2014algorithmic}, loose under
composition, through its R\'enyi-DP refinement~\citep{mironov2017renyi,balle2018privacy},
which composes sharply but needs care under subsampling, to the $f$-DP / Gaussian-DP
calculus of \citet{dong2022gaussian}, which tightens both in closed form.

\correction{On a Riemannian manifold, the geometry complicates both steps. The norm used for
clipping generally varies from point to point, and adding a noise vector to a point
may not return a point of the manifold.} A natural way to preserve the required vector-space
structure is therefore to perturb gradients in the tangent space. \citet{han2024differentially}
formally introduce this tangent-space Gaussian mechanism and privatise Riemannian SGD
(DP-RSGD), while \citet{utpala2023improved} add a variance-reduced variant (DP-RSVRG),
later used by \citet{huang2026federated}. Existing Riemannian privacy guarantees
\citep{huang2026federated} rely on geometry-dependent sensitivity bounds and composition
of per-round guarantees. In contrast, \citet{zheng2021federated} bring the $f$-DP calculus
to the federated setting for record-level guarantees against a curious client or a
coalition of clients.

\subsection{Related work}
\correction{These challenges place our work at the intersection of federated learning, Riemannian geometry,
and differential privacy. We first review existing approaches to Riemannian federated
learning, then discuss projection-based methods on compact submanifolds, and finally
position our approach relative to the only prior work.}

On general manifolds, aggregation goes through the exponential and logarithmic maps and a
transport. \citet{li2022federated} gives the first Riemannian FL framework, RFedSVRG,
averaging by a Karcher-flow step with a variance-reduced local solver, later extended with second-order information~\citep{xiao2024riemannian,xiao2025riemannian}.
\citet{huang2024riemannian} instead aggregates the gradient streams accumulated along the
local trajectories (AGS), which needs only a retraction and a vector transport and admits
multiple local steps under partial participation, at the cost of an analysis specific to Riemannian SGD: aggregating gradients rather than iterates leaves no room for a different
local optimiser. For schemes that aggregate the clients' final iterates, no analysis
combines multiple local steps and partial participation, even without privacy. Neither addresses privacy, and both rely on maps that on Stiefel are
unavailable in closed form or recovered only by an iterative solve
(Table~\ref{tab:related}).

On compact submanifolds, the nearest-point projection offers an
alternative to general retractions, both for keeping local
iterates on the manifold and for aggregating them at the server.
\citet{zhang2024nonconvex} adopt it to support multiple local epochs, and
\citet{wang2025federated} carry the same construction to the zeroth-order setting,
substituting a gradient-free estimator for the Riemannian gradient. Neither, however,
addresses privacy, and both use a per-client control variate to mitigate client drift. This couples aggregation to the local
optimiser and confines their
analyses to full participation, since every variate must be refreshed each round. For SPDNet, \citet{pautrel2026fedspdnet} first proposed projection-based aggregation rules
for the Stiefel weights of its BiMap layers, without privacy, and \citet{pautrel2026personalized}
considered the same FedSPDNet setting in a personalised mode. We follow this projection-based line but add a
record-level privacy layer and drop the control variate, leaving aggregation a pure
function of the final iterates (Table~\ref{tab:related}).

Bringing privacy, federation, and manifold structure together,
PriRFed~\citep{huang2026federated} is, to our knowledge, the only prior method to combine
all three. It targets record-level privacy under an honest-but-curious server, but
on a general manifold: it aggregates through a retraction and its inverse, privatises with
the tangent-Gaussian mechanism of \citet{han2024differentially} in the underlying geometry, and analyses convergence
for the DP-RSGD and DP-RSVRG local solvers. The noise construction is rebuilt at every iterate, and the training run is accounted by
composing per-round $(\varepsilon,\delta)$ pairs. Convergence is established only when
multiple local steps and multiple clients are not combined. Aggregation, finally, needs a
retraction and its inverse in tandem, which on Stiefel is available in closed form in at
most one direction.
\begin{table*}[t]
\centering
\caption{Comparison of Riemannian federated methods. The convergence regime reports the
setting in which convergence is established, as a pair (local steps, participation), with (multi-step, partial) as the widest regime. Both PriRFed and our method are optimiser-agnostic by design. The Solver column
reports the optimiser under which each convergence result is proved.}
\label{tab:related}
\begin{threeparttable}
\footnotesize
\begin{tabular}{@{}lccp{4cm}cc@{}}
\toprule
Method & Manifold & Privacy & Convergence regime & Solver & Geom. ops. \\
\midrule
RFedSVRG~\citep{li2022federated}    & General & $\times$ & (multi-step, single-client) / (single-step, full)$^\dagger$ & RSVRG & Exp, Log, PT \\
AGS~\citep{huang2024riemannian}     & General & $\times$ & (multi-step, partial) & RSGD$^\star$ & R, VT \\
PriRFed~\citep{huang2026federated}  & General & $(\varepsilon,\delta)$-DP & (multi-step, single-client) / (single-step, partial)$^\dagger$ & RSGD, RSVRG$^\S$ & R, R$^{-1}$ \\
\citet{zhang2024nonconvex}          & Compact & $\times$ & (multi-step, full) & RSGD & $\Pi$ \\
\citet{wang2025federated}           & Compact & $\times$ & (multi-step, full) & ZO-RSGD & $\Pi$ \\
\textbf{Ours}                       & \textbf{Compact} & $\boldsymbol{f\text{-DP}}$ & \textbf{(multi-step, partial)} & RSGD$^\S$ & $\boldsymbol{\Pi}$ \\
\bottomrule
\end{tabular}
\begin{tablenotes}[flushleft]\footnotesize
\item[$\dagger$] The two regimes are proved separately and their combination is not covered. Multiple local steps are analysed only with a single participating client, and multi-client participation only with a single local step.
\item[$\star$] Structural: aggregating gradient streams precludes a general local optimiser.
\item[$\S$] Optimiser-agnostic framework outside the convergence analysis.
\item Exp/Log: exponential/logarithm; R/R$^{-1}$: retraction and inverse; PT/VT: parallel/vector transport; $\Pi$: nearest-point projection.
\end{tablenotes}
\end{threeparttable}
\end{table*}

\subsection{Contributions}
No existing method brings together projection-based aggregation, partial participation,
multiple local steps, and differential privacy. We close this gap with
\textsc{DP-RFedProj}, a record-level differentially private Riemannian federated learning
algorithm for compact submanifolds under the Euclidean-induced metric. Even without privacy, the method is the first iterate-aggregation scheme for Riemannian FL whose
convergence is established with multiple local steps and partial participation together.
Moreover, it is the second method after PriRFed to bring record-level differential privacy
to Riemannian FL, and the first with a guarantee that does not depend on the manifold.  Our contributions
are fourfold.
\begin{itemize}[leftmargin=*,itemsep=3pt]
\item \textbf{A projection-based, optimiser-agnostic private FL algorithm.} \correction{Closed-form
projections do the geometric work at both stages of a round: they keep each client's local
steps on the manifold, and they aggregate the returned iterates at the server. We use two
aggregation rules on that basis, both first proposed by~\citet{pautrel2026fedspdnet} for the
Stiefel weights of SPDNet, without privacy, and stated here on any compact submanifold: \textsc{ProjAvg}, which projects the ambient mean, and
\textsc{RLAvg}, which retracts the average of the tangent-lifted displacements at the
current iterate. Both fit a common template that also covers the prior Karcher-flow and retraction-based
rules, which differ from ours only in the submission and the server map
(Table~\ref{tab:agg-schemes}). Because private local training is decoupled from
aggregation, each client can pair any of them with any stochastic first-order optimiser
consuming minibatch gradients (SGD, momentum, Adam).}
\item \textbf{A geometry-free privacy guarantee in closed form.} We privatise each client's
    gradients through a fixed ambient geometry, drawing the noise ambiently and projecting
    it onto the tangent space at each iterate. The resulting guarantee is a closed-form
    expression in the $f$-DP/GDP calculus, with record-level subsampling amplification and
    no dependence on the number of participating clients. It holds across the template's
    aggregators and is independent of the manifold and of the local optimisers. Against the closest prior accounting, which composes per-round $(\varepsilon,\delta)$
pairs by advanced composition, it certifies the same budget with five to six times less noise
in the settings of our experiments
(Sections~\ref{sec:synth-exp} and~\ref{sec:num-exps}).
\item \textbf{Convergence under partial participation and multiple local steps.} Taking
    Riemannian SGD as the local solver, we give the first convergence analysis on compact
    submanifolds combining projection-based aggregation, partial participation, and
    multiple local steps under data heterogeneity, bypassing the logarithmic,
    exponential, and inverse-retraction maps entirely. The result is new even without
    privacy: setting $\sdp=0$ recovers the first such guarantee for an iterate-aggregation
        scheme. The analysis covers the clipped algorithm for any threshold, with the clipping
    bias as an explicit term. The leading privacy term enters the rate through the
    manifold's intrinsic dimension rather than the ambient one.

\item \textbf{Numerical experiments.} On federated PCA over the Grassmann manifold, where the analysis applies fully, the cost of
privacy scales with $\dM$ and $1/k$ as predicted, and our accounting reduces PriRFed's
optimality gap by a factor of $10$ to $27$. We then instantiate the framework on EEG motor-imagery
classification with SPDNet~\citep{huang2017riemannian}  on three MOABB datasets~\citep{chevallier2024largest}. We compare 
DP-RFedProj applied to SPDNet against the Euclidean network EEGNet~\citep{lawhern2018eegnet}, trained by standard private federated averaging. SPDNet outperforms EEGNet at every privacy budget and at full and half client participation. Among the
aggregation rules, our projection-based scheme matches the accuracy of the retraction-based one while aggregating an order of magnitude faster. Code reproducing all experiments is available at \codeurl.
\end{itemize}
Section~\ref{sec:prelim} fixes the background, Section~\ref{sec:setting}
presents the algorithm, Sections~\ref{sec:privacy} and~\ref{sec:convergence} give its privacy
guarantee and convergence analysis, Sections~\ref{sec:synth-exp} and ~\ref{sec:num-exps} report the synthetic and EEG experiments and Section~\ref{sec:conclusion} discusses limitations and perspectives. 

\section{Background}\label{sec:prelim}
\correction{Section~\ref{ssec:rie-geo} recalls the Riemannian geometry we use,
Section~\ref{ssec:fl-dp} the federated protocol, its threat model and differential
privacy. Sections~\ref{ssec:intrinsic} and \ref{ssec:limitations} then describe the existing PriRFed~\citep{huang2026federated} pipeline and its limitations that our
construction departs from. }
\subsection{Riemannian Geometry}\label{ssec:rie-geo}
This section recalls basic notions from Riemannian geometry and optimisation. We refer the reader to  \cite{absil2008optimization,boumal2023introduction} for further background.
\paragraph{Submanifolds and the induced metric.}
A smooth manifold $\M$ of dimension $\dM$, embedded in a Euclidean
space $(\mathcal{E},\langle\cdot,\cdot\rangle)$ of ambient dimension
$D$, is a subset locally diffeomorphic to
$\R^{\dM}$ via smoothly compatible charts. The tangent space
$T_x\M$ at $x\in\M$ is the $\dM$-dimensional subspace of $\mathcal{E}$
spanned by velocities of smooth curves through~$x$. Equipping each
tangent space with an inner product $g_x(\cdot,\cdot)$, varying smoothly
in $x$, makes $\M$ a Riemannian manifold. Among the possible
choices, the one induced by the ambient inner product,
\begin{equation}\label{eq:metric}
  g_x(U,V) \;:=\; \langle U,V\rangle,
  \qquad U,V\in T_x\M,\ x \in \M,
\end{equation}
makes $\M$ an embedded Riemannian submanifold. We write $\norm{\cdot}$ for the norm of
$(\mathcal{E},\langle\cdot,\cdot\rangle)$, $\norm{\cdot}_x=\sqrt{g_x(\cdot,\cdot)}$ for the
one induced on $T_x\M$, and $P_x:\mathcal{E}\to T_x\M$ for the orthogonal projector onto the
tangent space. We denote by $\conv(\M)\subseteq\mathcal{E}$ the convex hull of $\M$.
\paragraph{Euclidean and Riemannian gradients.}
For a smooth $f:\M\to\R$, the Riemannian gradient $\grad f(x)\in T_x\M$
is the unique tangent vector representing the differential through the chosen metric:
\[
  \mathrm{D}f(x)[\xi] = g_x(\grad f(x),\xi),
  \qquad \xi\in T_x\M.
\]
In the embedded setting, when $f$ extends to a smooth function on a neighbourhood
of $\M$ in $\mathcal{E}$ (also denoted $f$) with Euclidean gradient
$\nabla f(x)\in\mathcal{E}$, the Riemannian gradient is given by the orthogonal
projection of the ambient gradient onto $T_x\M$:
\(
  \grad f(x) = P_x\nabla f(x).
\)

\paragraph{Geodesics and exponential map.}
A geodesic is a curve that locally minimises arc length, i.e.\ the
Riemannian analogue of straight lines in curved spaces. The Riemannian
distance $d(x,y)$ is the length of a shortest geodesic connecting $x$
and $y$. The exponential map $\mathrm{Exp}_x:T_x\M\to\M$ sends
$\xi\in T_x\M$ to $\gamma(1)$, where $\gamma$ is the geodesic with
$\gamma(0)=x$ and $\dot\gamma(0)=\xi$. Its local inverse is the
logarithmic map $\mathrm{Log}_x: \M \to T_x\M$.

\paragraph{Retractions and projections.}
In Riemannian optimisation, retractions serve as computationally
efficient substitutes for the exponential
map. A smooth
mapping $\mathrm{R}_x:T_x\M\to\M$ is a retraction if
$\mathrm{R}_x(0_x)=x$ and $\mathrm{D}\mathrm{R}_x(0_x)[u]=u$ for all
$u\in T_x\M$, where $0_x$ denotes the zero vector of $T_x\M$. It thus
coincides with the exponential map to first order. On compact submanifolds, a natural choice is the
projection-based retraction~\citep{absil2012projection} defined
through a projection map $\Pi:\mathcal{E}\to\M$ via
$\mathrm{R}_x(u):=\Pi(x+u)$, $u\in T_x\M$. The canonical example of such
projection is the nearest-point map
\begin{equation}\label{eq:nearest-p-proj}
  \Pi(y) := \arg\min_{x\in\M}\tfrac12\|y-x\|^2.
\end{equation}
When the minimiser is not unique, $\Pi(y)$ denotes any of them.
\paragraph{Examples.}
\correction{The Stiefel manifold $\mathrm{St}(d,p)=\{X\in\R^{d\times p}:X^\top X=I_p\}$ and the Grassmann manifold of $p$-dimensional subspaces of $\R^d$ are compact submanifolds. The Stiefel manifold underlies the constrained layers of SPDNet~\citep{huang2017riemannian}, while its covariance inputs lie on the non-compact SPD manifold $\mathcal{S}^{++}_n$, typically equipped with the log-Euclidean~\citep{arsigny2007geometric} or affine-invariant~\citep{pennec2006riemannian} metric. See Appendix~\ref{app:manifolds} for more details.}

\subsection{Federated Learning and Differential Privacy}
\label{ssec:fl-dp}
Federated learning (FL) trains collaboratively a shared model across $n$ clients, coordinated by a central server~\citep{mcmahan2017communication}. Each client
$i\in[n]$ holds its own dataset
$\mathcal D_i:=\{z_{i,1},\dots,z_{i,m_i}\}\subset\mathcal Z$ of size $m_i$, drawn
from a client-specific distribution $\mu_i$ (possibly non-i.i.d.\ data).  In Riemannian FL, the goal is to minimise the global empirical risk
\begin{equation}\label{eq:fl-objective}
  \min_{x\in\M}\ \Bigl\{\,F(x):=\frac1n\sum_{i=1}^n f_i(x)\,\Bigr\},
  \qquad
  f_i(x):=\frac1{m_i}\sum_{\ell=1}^{m_i} f(x;z_{i,\ell}),
\end{equation}
where $\M$ is a Riemannian manifold  and $f(x;z)$ is a chosen loss function computed at model parameter $x\in\M$ on a data sample $z$. Here $f_i$ represents client $i$'s local
empirical risk associated with the loss $f$ on its dataset $\mathcal{D}_i$. For clarity we use uniform weights in~\eqref{eq:fl-objective}. A
sample-weighted objective (in which each $f_i$ is weighted by $m_i/\sum_jm_j$) can be  handled by the same arguments.

The key requirement is that raw data are never shared: to perform optimization \eqref{eq:fl-objective}, clients transmit only model updates computed locally, which the server
aggregates over $T$ communication rounds. Entering round $t$, the server holds the
global model parameter $x_t\in\M$ and samples a set $S_t\subset[n]$ of $k$ participating clients,
uniformly without replacement and independently of the data. The round has two stages:
\begin{enumerate}[label=(\roman*),nosep]
  \item \textbf{Local optimisation.} The server broadcasts the current global model $x_t$ to the $k$ selected clients.
Each selected client $i$ initialises $x_{t,0}^{(i)}=x_t$ and runs $\tau\ge1$
gradient-based steps: at step $j$, from the current local iterate $x_{t,j}^{(i)}$, it
draws a minibatch $\mathcal B_{t,j}^{(i)}\subset\mathcal D_i$ of size $B$ uniformly without replacement and forms
the corresponding stochastic Riemannian gradient of its local objective,
\begin{equation}\label{eq:sto-grad}
    g_{t,j}^{(i)}=\frac{1}{B}\sum_{\ell\in \mathcal{B}_{t,j}^{(i)}}\grad f(x_{t,j}^{(i)};z_{i,\ell}).
\end{equation}
The chosen gradient-based optimiser then maps this gradient to the next iterate
$x_{t,j+1}^{(i)}$, producing after $\tau$ steps the final local iterate
$x_{t,\tau}^{(i)}\in\M$. 
  \item \textbf{Aggregation.} \correction{An aggregation scheme specifies a client-side submission
$\Delta^{(i)}$ derived from $(x_t,x_{t,\tau}^{(i)})$ and a server-side rule $\textsc{Agg}$.
Each $i\in S_t$ sends $\Delta^{(i)}$, and the server sets
$x_{t+1}=\textsc{Agg}\bigl(x_t,\{\Delta^{(i)}\}_{i\in S_t}\bigr)$.
Both depend on the geometry; see Section~\ref{sssec:server-agg} for the schemes considered.}
\end{enumerate}
Although raw data never leave the client, the messages it sends do not keep them
safe: per-record gradients encode enough to reconstruct training
examples~\citep{zhu2019deep} or to infer whether a given record was
used~\citep{shokri2017membership}, and a submission accumulates those gradients
over the local steps. Minibatch sampling randomises the submission without privatising
it: conditionally on the records drawn, the update is still determined exactly.
\paragraph{Threat model.}  Every submission is thus leak-prone, and the server collects
them all, which makes it the natural adversary (see Figure~\ref{fig:threat-model}). Following the standard threat model for private federated
learning~\citep{kairouz2021distributed}, also used by PriRFed~\citep{huang2026federated}, we
assume the server to be an honest-but-curious (HBC) adversary: it follows the protocol
but seeks to infer clients' records from every message it receives, that is, from
the full transcript
\begin{equation}\label{eq:transcript}
  \mathcal V_T :=
  \bigl(\{(x_t,S_t,\{\Delta^{(i)}\}_{i\in S_t})\}_{t=0}^{T-1},\,x_T\bigr).
\end{equation}
We target record-level protection: no client's training example should be inferable from
$\mathcal V_T$.

\paragraph{Differential privacy.}
Meeting this goal requires randomisation, the mechanism at the heart of Differential
Privacy~\citep{dwork2014algorithmic}. Calibrated randomness makes each release nearly as
likely with or without any single record, so no given record can be identified from the
transcript. Calibrating it requires bounding how much one record can change what is
released. At the gradient level, a record contributes a single summand to the minibatch
average~\eqref{eq:sto-grad}, and capping that summand bounds its influence. We
therefore randomise each gradient before it is fed to an optimiser~\citep{abadi2016deep}.

Throughout, the local datasets are assumed pairwise disjoint, so that each record belongs
to exactly one client. This is the standing assumption under which record-level adjacency
is stated. 
\begin{definition}[Record-level adjacency]\label{def:adjacency}
  Two databases $D=(\mathcal D_1,\dots,\mathcal D_n)$ and
  $D'=(\mathcal D'_1,\dots,\mathcal D'_n)$ over the same cohort of clients are
  \emph{record-adjacent}, written $D\bowtie D'$, if $\mathcal D_i=\mathcal D'_i$
  for all but one client $i^\star$, whose dataset differs from
  $\mathcal D'_{i^\star}$ by a single substituted record (in particular
  $|\mathcal D_{i^\star}|=|\mathcal D'_{i^\star}|$).
\end{definition}
Substituting a record thus perturbs one local dataset and leaves the other $n-1$ untouched.

Differential privacy compares, across
such a substitution, the law of a \emph{randomised mechanism} $\mathcal A$: a map
sending a database $D$ to a random output in a space $\mathcal Y$. Here $\mathcal A$ is the whole training protocol and its output the
full transcript $\mathcal V_T$~\eqref{eq:transcript}.
\begin{definition}[$(\varepsilon,\delta)$-DP {\citep{dwork2014algorithmic}}]
\label{def:dp}
  A randomised mechanism $\mathcal A$ from databases to an output space
  $\mathcal Y$ is $(\varepsilon,\delta)$-differentially private if, for every
  adjacent pair $D\bowtie D'$ and every measurable $E\subseteq\mathcal Y$,
  \[
    \Pr\bigl(\mathcal A(D)\in E\bigr)
    \le e^{\varepsilon}\,\Pr\bigl(\mathcal A(D')\in E\bigr)+\delta ,
  \]
  the probability being over the randomness of $\mathcal A$. Here
  $\varepsilon\ge0$ bounds the worst-case privacy loss (smaller is stronger)
  and $\delta\ge 0$ allows for a residual failure probability.
\end{definition}
\noindent
Since each client releases a privatised gradient at every local step of every round,
certifying $\mathcal V_T$ means quantifying how the guarantee of a single such
release degrades once all of them are observed jointly. This is the accounting task of
Section~\ref{sec:privacy}.
\begin{figure}[t]
\centering
\begin{tikzpicture}[
    scale=0.7, transform shape, >=stealth, font=\scriptsize,
    sat/.style={draw, thick, circle, minimum size=1.2cm, align=center, fill=white},
    unsat/.style={draw, thick, dashed, circle, minimum size=1.2cm, align=center,
                  fill=white, text=black!55, draw=black!55},
    server/.style={draw, thick, rectangle, rounded corners=2pt, minimum width=3.2cm,
                   minimum height=1.25cm, fill=white, align=center},
    arrowS/.style={->, thick, black!55, shorten >=1pt, shorten <=1pt,
                   arrows={-Triangle[angle=45:2pt 3]}},
    arrowR/.style={->, thick, black, shorten >=1pt, shorten <=1pt,
                   arrows={-Triangle[angle=45:2pt 3]}},
    lbl/.style={inner sep=1.5pt, fill=white, fill opacity=0.85, text opacity=1}
  ]
  \node[server] (server) at (0,0)
    {\textbf{HBC Server}\\[2pt] $S_t$\\[1pt] $x_t \to x_{t+1}$};

  \node[unsat] (satj)  at (-6.0,-3.0) {\textbf{Client }$j$\\$\mathcal{D}_{j}$};
  \node[sat,fill=black!8]   (sat1)  at (-3.0,-3.0) {\textbf{Client }$i_1$\\$\mathcal{D}_{i_1}$};
  \node[sat,fill=black!8]   (sat2)  at ( 0.0,-3.0) {\textbf{Client }$i_2$\\$\mathcal{D}_{i_2}$};
  \node[sat,fill=black!8]   (sat3)  at ( 3.0,-3.0) {\textbf{Client }$i_k$\\$\mathcal{D}_{i_k}$};
  \node[unsat] (satj2) at ( 6.0,-3.0) {\textbf{Client }$j'$\\$\mathcal{D}_{j'}$};

  \draw[arrowS] ([yshift=2mm]server.west) to[out=195,in=125]
        node[lbl,pos=0.55,sloped,above] {$x_t$} (sat1.125);
  \draw[arrowR] (sat1.55) to[out=55,in=245]
        node[lbl,pos=0.45,sloped,above] {$\Delta^{(i_1)}$} ([xshift=-10mm]server.south);

  \draw[arrowS] ([xshift=-4mm]server.south) to[out=250,in=125]
        node[lbl,pos=0.5,left] {$x_t$} (sat2.125);
  \draw[arrowR] (sat2.55) to[out=55,in=290]
        node[lbl,pos=0.5,right] {$\Delta^{(i_2)}$} ([xshift=4mm]server.south);

  \draw[arrowS] ([yshift=2mm]server.east) to[out=-15,in=55]
        node[lbl,pos=0.55,sloped,above] {$x_t$} (sat3.55);
  \draw[arrowR] (sat3.125) to[out=125,in=295]
        node[lbl,pos=0.45,sloped,above] {$\Delta^{(i_k)}$} ([xshift=10mm]server.south);
\end{tikzpicture}
\caption{One round $t$ with Honest-But-Curious (HBC) server. The server draws a fixed-size active set
$S_t\subset[n]$, sends $x_t$ to the selected clients (shaded), collects their
submissions $\Delta^{(i)}$ from $\tau$ private local steps and aggregates them to form the next global iterate $x_{t+1}$. The HBC server sees 
$(x_t, S_t, \{\Delta^{(i)}\}_{i\in S_t})$.}
\label{fig:threat-model}
\end{figure}
\subsection{Existing approach: PriRFed}\label{ssec:intrinsic}
The design below is the one instantiated by PriRFed~\citep{huang2026federated}: both the
privatisation and the aggregation are carried out in the manifold's own geometry. It is the
baseline our construction departs from.
\subsubsection{Local private optimisation}
\label{sssec:local-private-opt}
In the Euclidean setting~\citep{abadi2016deep}, gradient-level randomisation
combines two operations at each local step: (i) clipping each per-sample gradient
in $\ell_2$-norm, to bound any single record's influence,
then (ii) adding Gaussian noise to mask it. Neither transfers directly to a
Riemannian manifold, since the gradient at $x$ is not a free Euclidean vector but
an element of $T_x\M$. Both operations must therefore be anchored to the geometry
at the current iterate.

\paragraph{Clipping in the Riemannian norm.}
Given a norm $\norm{\cdot}_\bullet$ on a vector space and a threshold $C>0$, the clipping
operator rescales any vector $v$ to have norm at most $C$:
\begin{equation}\label{eq:clip-generic}
  \clip_C^{\norm{\cdot}_\bullet}(v):=v\cdot\min\!\bigl(1,\,C/\norm{v}_\bullet\bigr),
  \qquad \clip_C^{\norm{\cdot}_\bullet}(0):=0 .
\end{equation}
A tangent vector's magnitude is measured by the point-dependent $\norm{\cdot}_x$ rather than
by an ambient norm, so clipping is performed in $\norm{\cdot}_x$: the threshold $C$ then
bounds a record's influence as the geometry at $x$ measures it.

\paragraph{Tangent-space Gaussian noise.}
\correction{The noise is added to the clipped gradient, so it must lie in the same tangent space
$T_x\M$ and therefore depends on the current point $x$. PriRFed uses the
tangent-space Gaussian of \citet{han2024differentially}, denoted
$\N_x(0,\sdp^2)$, which is isotropic with respect to the Riemannian metric $g_x$.
Sampling it requires constructing a $g_x$-orthonormal basis of $T_x\M$ at every
iterate. \citet{utpala2023improved} avoid this cost by sampling the Gaussian at a
fixed reference point $\bar x$, where an orthonormal basis is available in closed
form, and transporting it to $T_x\M$ through a linear isometry
$\mathcal T_{\bar x\to x}$. Thus, per-step basis construction is replaced by a
per-step transport.}

\paragraph{Private gradient update.} Combining the two operations, at local step $j$ of round $t$ a selected client $i$
perturbs its clipped minibatch gradient with noise drawn afresh in the geometry of
the current iterate,
\begin{equation}\label{eq:han-grad}
  \tilde g_{t,j}^{(i)}
  := \tfrac1B\sum_{\ell\in\mathcal{B}_{t,j}^{(i)}}
     \clip_C^{\norm{\cdot}_{x_{t,j}^{(i)}}}\!\bigl(\grad f(x_{t,j}^{(i)};z_{i,\ell})\bigr)
   + \xi_{t,j}^{(i)}  \in T_{x_{t,j}^{(i)}}\M,
  \qquad
  \xi_{t,j}^{(i)}\sim\N_{x_{t,j}^{(i)}}\!\bigl(0,\sdp^2\bigr).
\end{equation}
This $\tilde g_{t,j}^{(i)}$ replaces the stochastic Riemannian gradient
$g_{t,j}^{(i)}$ of~\eqref{eq:sto-grad} as the optimiser's input, producing
$x_{t,j+1}^{(i)}$. The choice of optimiser is left open.

\subsubsection{Server aggregation on Riemannian manifolds}
\label{sssec:server-agg}
\correction{Privatisation being confined to the local step, aggregation carries no privacy role: the
server's task is purely geometric, to turn the submissions $\{\Delta^{(i)}\}_{i\in S_t}$
into a global iterate $x_{t+1}$ that is a meaningful average of the clients' final
iterates and lies on $\M$. In Euclidean FL~\citep{mcmahan2017communication} both
ingredients are trivial: the submission is the local iterate itself and \textsc{Agg} the
arithmetic mean, as in FedAvg. On a manifold neither survives, since the arithmetic mean
of points of $\M$ generally leaves $\M$ or carries no intrinsic meaning. Depending on the
scheme, the submission becomes a point of $\M$ or a tangent vector at $x_t$ }\footnote{This template requires $\Delta^{(i)}$ to depend only on
$(x_t,x_{t,\tau}^{(i)})$. It therefore excludes schemes folding a per-client drift
correction into the local update, such as \citet{zhang2024nonconvex}, where aggregation and
local optimiser are coupled.}.

On Riemannian manifolds, the natural substitute is the Fr\'echet mean (\textsc{FM}), which generalises the
arithmetic mean by minimising squared Riemannian rather than Euclidean
distance to the inputs. Each client submits its final iterate and the server returns
\[
  x_{t+1}^{\textsc{FM}}\;\gets\;\arg\min_{x\in\M}\frac1k\sum_{i\in S_t}
  d\bigl(x,\Delta^{(i)}\bigr)^2, \qquad \Delta^{(i)}:=x_{t,\tau}^{(i)}\in\M. 
\]
This minimiser generally has no closed form~\citep{pennec2006intrinsic} and is
computed iteratively by the Karcher flow~\citep{karcher2014riemannian}. Federated
aggregation keeps a single iteration: each client lifts its iterate to $T_{x_t}\M$ through the
logarithmic map, and the server averages these lifts and maps the result back to
the manifold via the exponential map~\citep{li2022federated},
\begin{equation}\label{eq:karcherflow}
  x_{t+1}^{\textsc{KFAvg}}
  \;=\; \mathrm{Exp}_{x_t}\!\left(\frac1k\sum_{i\in S_t}\Delta^{(i)}\right),
  \qquad \Delta^{(i)}=\mathrm{Log}_{x_t}\bigl(x_{t,\tau}^{(i)}\bigr)\in T_{x_t}\M.
\end{equation}
The logarithm has no closed form on many manifolds of practical interest, such as
Stiefel~\citep{zimmermann2022computing}. PriRFed~\citep{huang2026federated}
replaces $\mathrm{Exp}_{x_t}$ and $\mathrm{Log}_{x_t}$ by a retraction
$\mathrm{R}_{x_t}$ and its inverse $\mathrm{R}^{-1}_{x_t}$, so the submission
becomes the inverse-retraction lift in place of the logarithmic map:
\begin{equation}\label{eq:retavg}
  x_{t+1}^{\textsc{RetAvg}}
  \;=\; \mathrm{R}_{x_t}\!\left(\frac1k\sum_{i\in S_t}\Delta^{(i)}\right),
  \qquad
  \Delta^{(i)}=\mathrm{R}_{x_t}^{-1}\bigl(x_{t,\tau}^{(i)}\bigr)\in T_{x_t}\M.
\end{equation}
This substitution eases the previous difficulties but does not remove them entirely
(see Limitation~\ref{lim:geom-cost}).

\subsection{Limitations of the intrinsic pipeline}\label{ssec:limitations}
The design of Sections~\ref{sssec:local-private-opt}--\ref{sssec:server-agg},
instantiated by PriRFed~\citep{huang2026federated}, carries four limitations, which
the rest of the paper lifts.
\begin{enumerate}[label=(\arabic*),ref=\arabic*,leftmargin=*,itemsep=3pt]
\item\label{lim:moving-geometry}\textbf{A mechanism bound to a moving geometry.}
Both operations of~\eqref{eq:han-grad} are anchored at the current iterate, so the
noise construction must be redone at each of them, either via a fresh $g_x$-orthonormal basis,
or the transport of~\citet{utpala2023improved}, itself available only on manifolds
carrying a suitable reference point. Privatisation is thus inseparable from the
geometry.

\item\label{lim:accounting}\textbf{Loose accounting.} \correction{PriRFed builds its privacy guarantee in
three steps, with the first and third ones inflating the privacy budget:
\begin{enumerate}[label=(\alph*),leftmargin=*,itemsep=2pt,topsep=3pt]
  \item each client's local training is summarised by a single $(\varepsilon,\delta)$
    pair, which only partly expresses the protection brought by noise addition~\citep{gomez2026position};
  \item the $(\varepsilon,\delta)$ pairs of the clients active in a round are combined by parallel composition~\citep[Thms.~6 and~7]{huang2026federated};
  \item the $T$ per-round $(\varepsilon,\delta)$ pairs are combined by advanced composition~\citep[Thm.~7]{huang2026federated}, a step loose by construction which holds for the worst mechanism those pairs
    could describe.
\end{enumerate}
A separate limitation arises from client sampling: PriRFed benefits from privacy amplification
through client sampling~\citep[Def.~7, Lem.~1]{huang2026federated}, which requires the
active set to remain hidden from the adversary. Here, however, the server selects the
clients and observes the active set (Remark~\ref{rem:hidden-indices}). Section~\ref{sec:privacy}
therefore re-analyses the same sampling mechanism using $f$-DP, with amplification only at
the record level.
}

\item\label{lim:geom-cost}\textbf{Cost of the geometric primitives.} \textsc{RetAvg}
uses a retraction and its inverse in tandem, and on Stiefel no retraction supplies both
directions in closed form: the polar and QR retractions invert only by solving a matrix
equation, whereas the orthographic retraction inverts in closed form but has an
implicitly defined, Riccati-based forward
map~\citep{bouchard2025beyond,kaneko2012empirical}. An iterative solve is therefore
paid once per submission, for every participating client and every round.

\item\label{lim:agg-reach}\textbf{Reach of the convergence guarantees.} Existing
analyses of iterate aggregation cover multiple local steps only when one client
participates, and several clients only when each takes one
step~\citep{li2022federated,huang2026federated} but never both, the regime federated
training actually runs in. This holds with and without privacy. AGS~\citep{huang2024riemannian}
covers both, but aggregates gradient streams and is thus tied to Riemannian SGD. The lifting map is again
the obstruction: it is anchored at a base point that moves at every local step, so a
client's successive increments are never directly comparable, and reconciling them
drags the curvature of the manifold into the constants. The multi-step results moreover
assume the data identically distributed across clients, setting heterogeneity aside.
\end{enumerate}

\section{Projection-Based Private Federated Learning on Compact Submanifolds}\label{sec:setting}

\subsection{Structural properties} Lifting the limitations of
Section~\ref{ssec:limitations} costs generality: where PriRFed assumes a general
Riemannian manifold, we assume two structural properties.
\begin{enumerate}[label=(P\arabic*),ref=P\arabic*,leftmargin=*,itemsep=2pt]
  \item\label{sp:metric} \emph{Euclidean-induced metric}~\eqref{eq:metric}: the base-point
  norm $\norm{\cdot}_x$ is the restriction of the ambient norm $\norm{\cdot}$ to $T_x\M$,
  so gradient formation, clipping, and noise sampling reduce to applying the tangent
  projector $P_x$ to ambient quantities.
  \item\label{sp:compact} \emph{Compactness of $\M$}: it guarantees a manifold projection
    $\Pi:\mathcal{E}\to\M$, which returns updates to $\M$ through
    the retraction $\mathrm{R}_x(u)=\Pi(x+u)$.
\end{enumerate}
Each property allows us to lift a distinct limitation.
Property~\ref{sp:metric} detaches the mechanism from the geometry
(Limitation~\ref{lim:moving-geometry}): clipping and noise act on iterate-independent
ambient quantities, so no per-iterate construction is needed.
Property~\ref{sp:compact} supplies the projection the algorithm is built on. It replaces
the retraction and its inverse by the single map $\Pi$, applied at both the local and the
aggregation stage, and the nonlinear lift by the linear tangent projection
(Limitation~\ref{lim:geom-cost}). Specialising to the nearest-point
projection~\eqref{eq:nearest-p-proj} gives a map with controlled regularity in a tube
around the manifold. The convergence analysis uses proximal smoothness and tube stability
to control the projection error that $\Pi$ introduces at each aggregation. This allows us
to analyse multiple local steps and several clients together, under heterogeneity
(Limitation~\ref{lim:agg-reach}).
Limitation~\ref{lim:accounting} needs neither property: we lift it by re-accounting the
mechanism in the tight $f$-DP/GDP calculus. 

Modularity guides the design: server aggregation depends on each client's private local
training only through its final iterate, so the two FL stages are fully decoupled. Sections~\ref{ssec:local}
and~\ref{ssec:aggregation} develop them in turn, and
Section~\ref{ssec:algorithm} assembles them into Algorithm~\ref{alg:dp-rfedproj}.

\subsection{Privatised local optimisation}\label{ssec:local}
Private local training follows the pipeline of Section~\ref{sssec:local-private-opt}, with
each of its three intrinsic ingredients replaced by an ambient one: clipping moves to the
ambient norm, the tangent Gaussian is drawn ambiently and projected by $P_x$, and the
retraction becomes projection-based.
\paragraph{Extrinsic gradient privatisation.}
Under the induced metric~\eqref{eq:metric}, both operations of~\eqref{eq:han-grad} can be
carried out ambiently, in a single fixed geometry, on quantities that do not depend on the
iterate. Extrinsic and intrinsic privatisation thereby produce the same mechanism, which
Section~\ref{sec:privacy} re-accounts in the $f$-DP/GDP calculus.

The Riemannian gradient is the orthogonal projection of the
ambient Euclidean gradient, and by Property~\ref{sp:metric} the two norms $\norm{\cdot}$ and $\norm{\cdot}_x$ coincide
on $T_x\M$. We may therefore clip in the ambient norm, writing
$\clip_C:=\clip_C^{\norm{\cdot}}$ for the corresponding instance
of~\eqref{eq:clip-generic}. The threshold caps each per-record contribution by $C$
uniformly in $x$. For a minibatch $\mathcal{B}$ of size $B$, the clipped minibatch
Riemannian gradient of $f_i$ at $x$ is
\begin{equation}\label{eq:clipped-mb-grad}
\widehat{\grad}^{C} f_i(x;\mathcal{B})=\frac{1}{B}\sum_{\ell\in \mathcal{B}}\clip_C(P_x\nabla f(x;z_{i,\ell})) \in T_x\M.
\end{equation}

For the noise, we draw a single isotropic sample $\xi\sim\N(0,\sdp^2\Idp)$ in $\mathcal{E}$
and project it onto $T_x\M$. This is the tangent-space Gaussian, obtained without an
orthonormal basis.

\begin{lemma}[Projected ambient noise is tangent Gaussian]\label{lem:proj-gaussian}
  Let $x\in\M$ and $\xi\sim\N(0,\sdp^2\Idp)$ on $\mathcal{E}$. Then
  $P_x\xi\sim\N_x(0,\sdp^2)$.
\end{lemma}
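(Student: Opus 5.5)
The plan is to reduce the claim to the linear-Gaussian pushforward rule, using the definition of $\N_x(0,\sdp^2)$ from \citet{han2024differentially}. That definition says: if $(e_1,\dots,e_{\dM})$ is any $g_x$-orthonormal basis of $T_x\M$, then $\N_x(0,\sdp^2)$ is the law of $\sum_{a=1}^{\dM}\zeta_a e_a$, where $\zeta\sim\N(0,\sdp^2 I_{\dM})$. Equivalently, it is the centred Gaussian measure on $T_x\M$ whose covariance operator is $\sdp^2\,\mathrm{Id}_{T_x\M}$ with respect to $g_x$. Before using it, I will check that this is well defined, i.e. independent of the basis. Two orthonormal bases differ by an orthogonal matrix $Q$, and $Q\zeta\overset{d}{=}\zeta$ by rotational invariance of the isotropic Gaussian.

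First, by Property~\ref{sp:metric}, $g_x$ is the restriction of $\langle\cdot,\cdot\rangle$ to $T_x\M$. So a $g_x$-orthonormal basis $(e_1,\dots,e_{\dM})$ of $T_x\M$ is orthonormal in $\mathcal{E}$ as well, and I complete it to an orthonormal basis $(e_1,\dots,e_{\dimE})$ of $\mathcal{E}$ (Gram--Schmidt). Writing $\xi=\sum_{a=1}^{\dimE}\zeta_a e_a$ with $\zeta_a=\langle\xi,e_a\rangle$, rotational invariance of $\N(0,\sdp^2\Idp)$ gives that the $\zeta_a$ are i.i.d.\ $\N(0,\sdp^2)$.

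Second, $P_x$ is the orthogonal projector onto $\mathrm{span}(e_1,\dots,e_{\dM})$. Hence $P_x\xi=\sum_{a=1}^{\dM}\zeta_a e_a$, with $(\zeta_1,\dots,\zeta_{\dM})\sim\N(0,\sdp^2I_{\dM})$. This is exactly the defining representation of $\N_x(0,\sdp^2)$, which concludes the proof.

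As a cross-check, $P_x\xi$ is a linear image of a Gaussian, so it is Gaussian. Its covariance is $\sdp^2P_xP_x^{\top}=\sdp^2P_x$, because $P_x$ is self-adjoint and idempotent, and $\sdp^2P_x$ restricted to $T_x\M$ is $\sdp^2\mathrm{Id}_{T_x\M}$.

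The only step needing care is pinning down the definition of $\N_x(0,\sdp^2)$. Some references define it through a density proportional to $\exp(-\|v\|_x^2/(2\sdp^2))$ with respect to the Lebesgue measure induced by $g_x$ on $T_x\M$. If that is the definition in force, I will add a short argument. In orthonormal coordinates on $T_x\M$, that density is the $\N(0,\sdp^2 I_{\dM})$ density. The coordinate map is an isometry, so the Lebesgue measure induced by $g_x$ corresponds to the standard Lebesgue measure on $\R^{\dM}$. The two definitions therefore coincide, and the computation above applies.
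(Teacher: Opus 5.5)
Your proof is correct and takes essentially the same route as the paper. Both use Property~\ref{sp:metric} to make a $g_x$-orthonormal basis of $T_x\M$ orthonormal in $\mathcal{E}$, then identify the coordinates of $P_x\xi$ in that basis as i.i.d.\ $\N(0,\sdp^2)$; the only difference is that the paper skips completing the basis and instead uses self-adjointness, $\langle P_x\xi,e_a\rangle=\langle\xi,e_a\rangle$, to read off the covariance $\sdp^2\delta_{ab}$ directly.
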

\begin{proof}
    Let $(e_1,\dots,e_{\dM})$ be a $\norm{\cdot}_x$-orthonormal basis of $T_x\M$; by
  Property~\ref{sp:metric} it is orthonormal for the ambient inner product. Since $P_x$ is
  a self-adjoint projector fixing $T_x\M$, $\langle P_x\xi,e_a\rangle=\langle\xi,P_xe_a\rangle
  =\langle\xi,e_a\rangle$. These coordinates are jointly Gaussian, centred, with covariance
  $\sdp^2\langle e_a,e_b\rangle=\sdp^2\delta_{ab}$, hence i.i.d.\ $\N(0,\sdp^2)$,
  which is exactly the tangent Gaussian distribution of \citet{han2024differentially}.
\end{proof}

\noindent
Two consequences follow. First, noise sampling no longer needs a $g_x$-orthonormal basis rebuilt at each
iterate, nor the reference point and vector transport of~\citet{utpala2023improved}. Second,
the privatised gradient is distributionally identical to PriRFed's: under
Property~\ref{sp:metric} the two clipping operators coincide, and by
Lemma~\ref{lem:proj-gaussian} so do the two noise laws. 

Gathering both operations, at local step $j\in\{0,\dots,\tau-1\}$ of round $t$,
client $i\in S_t$ forms its privatised Riemannian gradient
\begin{equation}\label{eq:rec-noisy-grad}
  \tilde{g}_{t,j}^{(i)}
  \;:=\;
   \frac{1}{B}\sum_{\ell\in\mathcal{B}_{t,j}^{(i)}}
    \clip_C\!\bigl(P_{x_{t,j}^{(i)}}\nabla f(x_{t,j}^{(i)};z_{i,\ell})\bigr)
  \;+\; P_{x_{t,j}^{(i)}}\,\xi_{t,j}^{(i)},
  \qquad
  \xi_{t,j}^{(i)}\sim\N(0,\sdp^2\Idp),
\end{equation}
where the minibatch draws, the noise draws and client selection sets are mutually independent
and independent of the data. Each minibatch is thus drawn afresh at every local step,
independently of the previous ones, which is the sampling model assumed throughout
Sections~\ref{sec:privacy} and~\ref{sec:convergence}.
\paragraph{Optimiser-agnostic local updates.}
The privatised gradient~\eqref{eq:rec-noisy-grad} substitutes directly for the
stochastic Riemannian gradient~\eqref{eq:sto-grad}, which leaves the choice of local
optimiser free. The only requirement is that the optimiser accesses $\mathcal{D}_i$ through the
privatised minibatch gradients $\tilde g_{t,j}^{(i)}$ only. It may combine the current one with any quantity built from earlier privatised gradients, such as momentum buffer or adaptive moment estimates. Such computations add no privacy cost, by the post-processing property of differential privacy (Proposition~\ref{prop:calculus} (ii) in Section~\ref{sec:privacy}).

More precisely, client $i$
runs $\textsc{Opt}^{(i)}$ with hyperparameters $\theta^{(i)}$ and internal state
$s_{t,j}^{(i)}$. The state is initialised independently of the data and updated only from
the  privatised gradients, so it depends on $\mathcal{D}_i$ only through the client's earlier releases. At step $j$ the optimiser produces
\begin{equation}\label{eq:opt-step}
  \bigl(u_{t,j}^{(i)},\,s_{t,j+1}^{(i)}\bigr) \;=\;
  \textsc{Opt}^{(i)}\bigl(\tilde g_{t,j}^{(i)},\,s_{t,j}^{(i)};\theta^{(i)}\bigr),
\end{equation}
with $\textsc{Opt}^{(i)}$ and $\theta^{(i)}$ allowed to differ across clients.
This covers Riemannian stochastic gradient descent (RSGD) with stepsize $\eta$, which
takes $u_{t,j}^{(i)}=\eta\,\tilde g_{t,j}^{(i)}$ and needs no state. It also covers
heavy-ball momentum with parameter $\beta\in[0,1)$
\[
  s_{t,j+1}^{(i)}=\beta\,P_{x_{t,j}^{(i)}}s_{t,j}^{(i)}+\tilde g_{t,j}^{(i)},
  \qquad u_{t,j}^{(i)}=\eta\,s_{t,j+1}^{(i)}.
\]
Riemannian momentum and adaptive methods~\citep{becigneul2018riemannian,kasai2019riemannian} fit the same template. Optimisers that read $\mathcal{D}_i$ in any other way fall
outside the per-step accounting of Section~\ref{sec:privacy}. Riemannian stochastic
variance-reduced gradient (RSVRG)~\citep{zhang2016riemannian} is an example: its periodic
full gradient over $\mathcal{D}_i$ is a second data access. Privatising it would require
charging it separately, and it is computationally costly for large $m_i$.

\paragraph{Projection-based local steps.}
It remains to map the update direction back onto $\M$. Compactness supplies a
projection $\Pi:\mathcal{E}\to\M$, so the local update uses the
projection-based retraction $\mathrm{R}_x(u)=\Pi(x+u)$
of~\citet{absil2012projection} in place of a general one: starting from
$x_{t,0}^{(i)}:=x_t$, for $j=0,\dots,\tau-1$,
\begin{equation}\label{eq:local-step}
  x_{t,j+1}^{(i)} \;\gets\; \Pi\bigl(x_{t,j}^{(i)} - u_{t,j}^{(i)}\bigr),
\end{equation}
with $u_{t,j}^{(i)}$ given by~\eqref{eq:opt-step}. After $\tau$ such steps the client holds its final local iterate $x_{t,\tau}^{(i)}\in\M$,
from which it forms its submission $\Delta^{(i)}$ (Section~\ref{ssec:aggregation}).

\subsection{Server-side aggregation}\label{ssec:aggregation}
As in Section~\ref{sssec:server-agg}, aggregation operates on model iterates:
from the submissions $\{\Delta^{(i)}\}_{i\in S_t}$ formed by the clients from
their final local iterates, the server computes the next global iterate
$x_{t+1}\in\M$.

Building on the retracted-lifted (RL) barycenter analysis
of~\citet{bouchard2025beyond}, we use two schemes, \textsc{ProjAvg} and
\textsc{RLAvg}, involving only the tangent projector $P_x$ and the manifold projection $\Pi$ supplied by Property~\ref{sp:compact}. \citet{pautrel2026fedspdnet} introduced both for SPDNet's Stiefel weights. We state them on any compact submanifold and analyse them under privacy.
\paragraph{ProjAvg.} It projects the ambient mean of the iterates back onto $\M$:
\begin{equation}\label{eq:projavg}
  x_{t+1}^{\textsc{ProjAvg}}
  \;=\; \Pi\!\left(\frac{1}{k}\sum_{i\in S_t}\Delta^{(i)}\right),
  \qquad
  \Delta^{(i)} \;=\; x_{t,\tau}^{(i)}\in\M .
\end{equation}
This is a geometrically meaningful average: it is the closed-form RL-barycenter of
the iterates for the projection-based retraction and the tangent-projector
lifting map~\citep[Prop.~1]{bouchard2025beyond}. It also matches the project-the-mean update of~\citet{deng2025decentralized} in
the decentralised setting.

\paragraph{RLAvg.} It implements the federated analogue of the RL-barycenter in its
alternative fixed-point construction, anchored at the current iterate $x_t$:
each client submits the tangent projection at $x_t$ of its displacement, and
the server retracts their average at the same point,
\begin{equation}\label{eq:rlavg}
  x_{t+1}^{\textsc{RLAvg}}
  \;=\; \Pi\!\left(x_t + \frac{1}{k}\sum_{i\in S_t}\Delta^{(i)}\right),
  \qquad
  \Delta^{(i)} \;=\; P_{x_t}\bigl(x_{t,\tau}^{(i)} - x_t\bigr)\in T_{x_t}\M .
\end{equation}
This mirrors the lift--average--retract structure of \textsc{KFAvg} and
\textsc{RetAvg}, with the logarithm and inverse retraction replaced by the
linear projector $P_{x_t}$, retaining an anchored, tangent-space aggregation.

\subsection{Algorithm}\label{ssec:algorithm}
Algorithm~\ref{alg:dp-rfedproj} assembles the two stages: each client runs
$\tau$ private local steps~\eqref{eq:local-step} with its own optimiser and
sends the single submission $\Delta^{(i)}$, from which the server forms the
next global iterate via the chosen aggregation scheme.

\begin{algorithm}[h]
\caption{\textsc{DP-RFedProj}: record-level DP Riemannian federated learning}
\label{alg:dp-rfedproj}
\begin{algorithmic}[1]
\Require
  $n$ clients;\; $k$ clients/round;\; $\tau$ local steps;\;
  batch size $B$;\; $T$ rounds;\;
  client optimisers $\{\textsc{Opt}^{(i)}\}_{i\in[n]}$ with hyperparameters
  $\{\theta^{(i)}\}_{i\in[n]}$;\;
  aggregator $\textsc{Agg}\in\{\textsc{ProjAvg},\textsc{RLAvg}\}$
\Statex \hspace{-1.2em}\textbf{DP parameters:}\;
  clip $C$;\; per-step noise scale $\sdp$
\State Initialise $x_0\in\M$
\For{$t=0$ to $T-1$}\Comment{communication round}
  \State Draw $S_t\subset[n]$ uniformly without replacement, $|S_t|=k$
  \State Broadcast $x_t$ to every client $i\in S_t$
  \ForAll{$i\in S_t$ \textbf{in parallel}}
    \State $x_{t,0}^{(i)}\gets x_t$ and $s_{t,0}^{(i)}\gets$ initial optimiser state
    \For{$j=0$ to $\tau-1$}\Comment{privatised local step}
    \State Draw $\mathcal{B}_{t,j}^{(i)}\subset\mathcal{D}_i$ of size $B$, uniformly without replacement
      \State Draw $\xi_{t,j}^{(i)}\sim\N\!\bigl(0,\sdp^2\Idp\bigr)$
      \State $\displaystyle \tilde g_{t,j}^{(i)} \gets
        \frac{1}{B}\sum_{\ell\in\mathcal{B}_{t,j}^{(i)}}
        \clip_C\!\bigl(P_{x_{t,j}^{(i)}}\nabla f(x_{t,j}^{(i)};z_{i,\ell})\bigr)
        \;+\; P_{x_{t,j}^{(i)}}\,\xi_{t,j}^{(i)}$
        \Comment{Eq.~\eqref{eq:rec-noisy-grad}}
      \State $\bigl(u_{t,j}^{(i)},s_{t,j+1}^{(i)}\bigr)\gets\textsc{Opt}^{(i)}\bigl(\tilde g_{t,j}^{(i)},\,s_{t,j}^{(i)};\,\theta^{(i)}\bigr)$
      \State $x_{t,j+1}^{(i)}\gets\Pi\bigl(x_{t,j}^{(i)}-u_{t,j}^{(i)}\bigr)$
        \Comment{Eq.~\eqref{eq:local-step}}
    \EndFor
    \State $\Delta^{(i)}\gets \textsc{Agg.Sub}(x_t,x_{t,\tau}^{(i)})$ \Comment{Table~\ref{tab:agg-schemes}}
    \State Send submission $\Delta^{(i)}$ to the server
     \EndFor
     \State The server computes $\Delta_t\gets\frac{1}{k}\sum_{i\in S_t}\Delta^{(i)}$
  \State $x_{t+1}\gets \textsc{Agg.Out}(x_t,\Delta_t)$
\EndFor
\State \Return $x_T$
\end{algorithmic}
\end{algorithm}

\paragraph{Unified template.} Algorithm~\ref{alg:dp-rfedproj} is stated for our two schemes, but the aggregator enters it
at Lines~13 and 16 only. Writing those two lines as a client-side submission
\textsc{Agg.Sub} and a server-side output \textsc{Agg.Out} yields a unifying template:
substituting the first two rows of Table~\ref{tab:agg-schemes} recovers
\textsc{KFAvg}~\citep{li2022federated} and \textsc{RetAvg}~\citep{huang2026federated} aggregation schemes. The unification holds at the aggregation level only. At the local optimisation
level our approach privatises extrinsically~\eqref{eq:rec-noisy-grad}, whereas PriRFed privatises intrinsically, in the
iterate-dependent geometry~\eqref{eq:han-grad}.  Any privatised local optimiser of Section~\ref{ssec:local}
may be paired with any of the four schemes.

\begin{table}[h]
\centering
\caption{Aggregation primitives for the four schemes. \textsc{Agg.Sub}$(x_t,x_{t,\tau}^{(i)})$ is the client-side submission $\Delta^{(i)}$ and \textsc{Agg.Out}$(x_t,\Delta_t)$ is the server-side global update $x_{t+1}$, where $\Delta_t=\tfrac1k\sum_{i\in S_t}\Delta^{(i)}$.}
\label{tab:agg-schemes}
\begin{tabular}{@{}lll@{}}
\toprule
Scheme & $\textsc{Agg.Sub}(x_t,x_{t,\tau}^{(i)})$ & $\textsc{Agg.Out}(x_{t},\Delta_t)$  \\
\midrule
\textsc{KFAvg}  & $\mathrm{Log}_{x_t}\bigl(x_{t,\tau}^{(i)}\bigr)$        & $\mathrm{Exp}_{x_t}\bigl(\Delta_t\bigr)$  \\
\textsc{RetAvg} & $\mathrm{R}_{x_t}^{-1}\bigl(x_{t,\tau}^{(i)}\bigr)$     & $\mathrm{R}_{x_t}\bigl(\Delta_t\bigr)$    \\
\textsc{ProjAvg} (ours) & $x_{t,\tau}^{(i)}$                              & $\Pi\bigl(\Delta_t\bigr)$                 \\
\textsc{RLAvg} (ours) & $P_{x_t}\bigl(x_{t,\tau}^{(i)}-x_t\bigr)$         & $\Pi\bigl(x_t+\Delta_t\bigr)$            \\
\bottomrule
\end{tabular}
\end{table}
\subsection{Scope of the privacy and convergence guarantees}\label{ssec:scope}
The privacy guarantee (Section~\ref{sec:privacy}) and the convergence guarantee
(Section~\ref{sec:convergence}) are scoped along two independent axes: which aggregation
scheme is used (Table~\ref{tab:agg-schemes}), and which manifold the parameters live on
(Table~\ref{tab:scope}).

\paragraph{Choice of aggregator.} The convergence rate of
Section~\ref{sec:convergence} is established for \textsc{ProjAvg} and \textsc{RLAvg} only:
both their primitives are linear in the submissions, which is what lets the $\tau$ local
displacements telescope in the single ambient space $\mathcal{E}$ and the $k$ client
submissions be averaged rather
than composed (Lemmas~\ref{lem:linearisation} and~\ref{lem:server-decomp}). For
\textsc{KFAvg} and \textsc{RetAvg} the submission is a nonlinear lift and the output a
nonlinear map of the average, and neither step goes through.

The privacy guarantee of Section~\ref{sec:privacy}, by contrast, covers all four. The data
enter a client's computation only through the privatised gradient~\eqref{eq:rec-noisy-grad},
where Section~\ref{sec:privacy} settles the privacy cost in the ambient space, before any
geometry is applied. Everything downstream is computed from these privatised gradients
alone: the projections $P_x$ and $\Pi$, the local optimiser and the aggregation scheme.
By the post-processing property of differential privacy
(Proposition~\ref{prop:calculus}(ii)), applying to a private output any map that ignores
the data keeps the guarantee unchanged. The guarantee therefore carries through on any
embedded submanifold where the algorithm runs at all, and for any aggregation scheme of
Table~\ref{tab:agg-schemes} (Theorem~\ref{thm:gdp-subsampled}).
\paragraph{Choice of manifold.} With \textsc{ProjAvg} or \textsc{RLAvg}, the geometry enters
Algorithm~\ref{alg:dp-rfedproj} only through $P_x$ and $\Pi$, which fixes where it applies
(Table~\ref{tab:scope}). Properties~\ref{sp:metric} and \ref{sp:compact} play distinct roles here.
Property~\ref{sp:compact} is a sufficient condition which makes the algorithm runnable, by supplying $\Pi$: without
that map there is neither a local step~\eqref{eq:local-step} nor an aggregation.
Property~\ref{sp:metric} aligns the mechanism with the manifold's geometry. Under the induced
metric, the projected noise is isotropic for $g_x$ and the clip bounds each record's
contribution in $\norm{\cdot}_x$. Under another metric, the privacy guarantee still holds,
since it is settled in the ambient space, but noise and clipping follow the ambient geometry
instead of $g_x$. Both hold on the sphere and on the
Stiefel and Grassmann manifolds, where the two guarantees apply together.

Two flat cases fail compactness but retain the projection it was meant to supply. When
$\M=\mathcal{E}$ the two projection maps are the identity, the local
steps~\eqref{eq:rec-noisy-grad}--\eqref{eq:local-step} are DP-SGD~\citep{abadi2016deep} and
both aggregators reduce to the arithmetic mean. When $\M=\mathcal{S}^{++}_n$, under the
log-Euclidean metric~\citep{arsigny2007geometric}, the matrix logarithm is a global isometry
from $\mathcal{S}^{++}_n$ onto the flat space $\mathcal{S}_n$ of symmetric matrices, so
reparametrising by the log chart puts the parameters on
$\mathcal{S}_n\subset\R^{n\times n}$ with the induced metric, where $\Pi$ and $P_x$ both
reduce to symmetrisation~\citep{utpala2023improved}. In both, $\Pi$ is globally defined and
single-valued, so Algorithm~\ref{alg:dp-rfedproj} runs and the privacy guarantee holds. 
What does not carry over is the convergence analysis, which uses compactness for the uniform bounds it relies on.
The same $\mathcal{S}^{++}_n$ under the affine-invariant metric~\citep{pennec2006riemannian}
falls outside on both counts. There is no ambient projection back to the cone, so unlike the flat cases nothing
substitutes for $\Pi$ and Property~\ref{sp:compact} fails. Moreover, the metric is
iterate-dependent, so Property~\ref{sp:metric} fails. Algorithm~\ref{alg:dp-rfedproj}
cannot be run there at all and 
PriRFed~\citep{huang2026federated} remains the baseline. 

\begin{table}[h]
\centering
\caption{Where Algorithm~\ref{alg:dp-rfedproj} applies. Privacy
(Theorem~\ref{thm:gdp-subsampled}) holds for all four schemes of
Table~\ref{tab:agg-schemes}; the convergence rate (Corollary~\ref{cor:main}) for
\textsc{ProjAvg} and \textsc{RLAvg} with RSGD.}
\label{tab:scope}
\begin{tabular}{@{}llll@{}}
\toprule
Parameter space & Metric & Privacy & Conv. rate \\
\midrule
Sphere, Stiefel, Grassmann            & induced       & $\checkmark$ & $\checkmark$ \\
$\mathcal{E}$ (incl. $\log\bigl(\mathcal{S}^{++}_n\bigr)$)                                     & Euclidean         & $\checkmark$ & $\times$ \\
$\mathcal{S}^{++}_n$                               & affine-inv.    & \multicolumn{2}{c}{algorithm not runnable} \\
\bottomrule
\end{tabular}
\end{table}

\section{Privacy Analysis}\label{sec:privacy}
The honest-but-curious server observes the whole transcript
$\mathcal V_T$~\eqref{eq:transcript}, from which no client's record should be
inferable: we require $\mathcal V_T$ to be private under the record-level adjacency
of Definition~\ref{def:adjacency}. 
We adapt the standard DP-SGD privacy argument~\citep{abadi2016deep,dong2022gaussian} to projection-based federated learning, and obtain a record-level guarantee for the whole training run.

Section~\ref{ssec:background} recalls the tools of the $f$-DP calculus, Section~\ref{ssec:guarantee} states the guarantee, and Section~\ref{ssec:reading} derives, under additional asymptotic assumptions, what drives its privacy level.

\subsection{Privacy accounting in the \texorpdfstring{$f$}{f}-DP calculus}
\label{ssec:background}
We measure privacy with the trade-off-function calculus of
\citet{dong2022gaussian}, which quantifies by hypothesis testing how hard two
adjacent outputs are to distinguish, and composes tightly.

\begin{definition}[$f$-DP and GDP {\citep[Def.~2--4]{dong2022gaussian}}]\label{def:fdp}
  For distributions $P,Q$ on a common space, the trade-off function
  $\mathcal{T}(P,Q):\alpha\mapsto\inf\{\E_Q[1-\phi]:\E_P[\phi]\le\alpha\}$ is the
  smallest type-II error achievable at type-I level $\alpha$; it is convex,
  continuous and non-increasing, and larger means harder to distinguish. A
  mechanism $\mathcal{A}$ is \emph{$f$-DP} if
  $\mathcal{T}(\mathcal{A}(D),\mathcal{A}(D'))\ge f$ for every adjacent pair
  $D\bowtie D'$. Two trade-off functions matter below: $\mathrm{Id}(\alpha)=1-\alpha$,
  the largest one, attained when the two releases are indistinguishable; and
  $G_\mu(\alpha):=\Phi(\Phi^{-1}(1-\alpha)-\mu)$, for testing $\N(0,1)$ against
  $\N(\mu,1)$, with $\Phi$ the standard normal cumulative distribution function. A
  mechanism is \emph{$\mu$-GDP} if it is $G_\mu$-DP: larger $\mu$ is weaker, and
  $G_0=\mathrm{Id}$ is perfect privacy.
\end{definition}

An $f$-DP guarantee is equivalent to a whole family of $(\varepsilon,\delta)$-DP ones.

\begin{proposition}[Primal--dual correspondence
  {\citep[Prop.~3, 6 and Cor.~1]{dong2022gaussian}}]\label{prop:fdp-to-dp}
  A mechanism is $(\varepsilon,\delta)$-DP if and only if it is
  $f_{\varepsilon,\delta}$-DP, where
  \(
    f_{\varepsilon,\delta}(\alpha):=\max\{0,\,1-\delta-e^{\varepsilon}\alpha,\,
      e^{-\varepsilon}(1-\delta-\alpha)\}.
  \)
  Conversely, a mechanism is $f$-DP for a symmetric $f$ if and only if it is
  $(\varepsilon,\delta_f(\varepsilon))$-DP for every $\varepsilon\ge0$, with
  $\delta_f(\varepsilon)=1+f^{\ast}(-e^{\varepsilon})$ and
  $f^{\ast}(y):=\sup_{\alpha\in[0,1]}(y\alpha-f(\alpha))$. For $f=G_\mu$,
  \begin{equation}\label{eq:gdp-profile}
    \delta_{G_\mu}(\varepsilon)
    =\Phi\Bigl(-\tfrac{\varepsilon}{\mu}+\tfrac{\mu}{2}\Bigr)
     -e^{\varepsilon}\,\Phi\Bigl(-\tfrac{\varepsilon}{\mu}-\tfrac{\mu}{2}\Bigr).
  \end{equation}
\end{proposition}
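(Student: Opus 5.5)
This statement is a citation of \citet{dong2022gaussian}. I would reconstruct it in three pieces: the primal characterisation of $(\varepsilon,\delta)$-DP, the conversion from a symmetric $f$ to an $(\varepsilon,\delta)$ profile, and the explicit Gaussian computation.

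\textbf{Step 1: $(\varepsilon,\delta)$-DP is equivalent to $f_{\varepsilon,\delta}$-DP.} Fix $D\bowtie D'$ with output laws $P,Q$. Take any test $\phi$ and apply Definition~\ref{def:dp} to the rejection region; for randomised tests, integrate over the level sets of $\phi$. This gives $\E_Q[\phi]\le e^{\varepsilon}\E_P[\phi]+\delta$, and with $D,D'$ swapped, $\E_P[\phi]\le e^{\varepsilon}\E_Q[\phi]+\delta$. Set $\alpha=\E_P[\phi]$ and $\beta=\E_Q[1-\phi]$.
\begin{itemize}
\item Applying the swapped inequality to $1-\phi$ gives $1-\alpha\le e^{\varepsilon}\beta+\delta$.
\item Applying the first inequality to $1-\phi$ gives $\beta\ge 1-\delta-e^{\varepsilon}\alpha$.
\end{itemize}
Together these say $\beta\ge f_{\varepsilon,\delta}(\alpha)$. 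For the converse, take $\phi=\mathbbm 1_E$ and read the two linear constraints off $\mathcal T(P,Q)\ge f_{\varepsilon,\delta}$.

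\textbf{Step 2: symmetric $f$ is equivalent to the family $(\varepsilon,\delta_f(\varepsilon))$.} Since $f$ is symmetric, it is enough to certify the one-sided inequality for both orderings of $(D,D')$.
\begin{itemize}
\item \emph{$f$-DP implies the profile.} For a measurable $E$, the test $\phi=1-\mathbbm 1_E$ has type-I error $\alpha=1-P(E)$ and type-II error $Q(E)\ge f(\alpha)$. Then
\[
P(E)-e^{\varepsilon}Q(E)\le 1-\alpha-e^{\varepsilon}f(\alpha)\le 1+\sup_\alpha\bigl(-e^{\varepsilon}\alpha'-f(\alpha')\bigr)
\]
after reparametrising through symmetry, i.e.\ using that $f=f^{-1}$ swaps the roles of the two errors. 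The right-hand side is exactly $1+f^\ast(-e^{\varepsilon})$.
\item \emph{The profile implies $f$-DP.} Use Step~1: the mechanism is $f_{\varepsilon,\delta_f(\varepsilon)}$-DP for every $\varepsilon$, hence $\sup_\varepsilon f_{\varepsilon,\delta_f(\varepsilon)}$-DP. By convex duality this supremum of supporting lines recovers $f$, because $f$ is convex, continuous and equals its biconjugate.
\end{itemize}
The main obstacle is this duality step, in particular the endpoints where $f$ may have slopes outside $[-e^{\varepsilon},-e^{-\varepsilon}]$ for every finite $\varepsilon$. I would handle these by a limiting argument in $\varepsilon\to\infty$ together with the symmetric lines $e^{-\varepsilon}(1-\delta-\alpha)$.

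\textbf{Step 3: the Gaussian profile.} Compute $f^\ast(-e^{\varepsilon})$ for $f=G_\mu$. The map $\alpha\mapsto -e^{\varepsilon}\alpha-G_\mu(\alpha)$ is concave, so its supremum is at the stationary point $G_\mu'(\alpha)=-e^{\varepsilon}$. Write $\alpha=1-\Phi(t)$. Then the slope is $-\varphi(t-\mu)/\varphi(t)=-e^{\mu t-\mu^2/2}$, so the stationary point is $t=\varepsilon/\mu+\mu/2$. Substituting and using $1-\Phi(s)=\Phi(-s)$ gives
\[
\delta=\Phi\Bigl(-\tfrac{\varepsilon}{\mu}+\tfrac{\mu}{2}\Bigr)-e^{\varepsilon}\Phi\Bigl(-\tfrac{\varepsilon}{\mu}-\tfrac{\mu}{2}\Bigr),
\]
which is \eqref{eq:gdp-profile}. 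Equivalently, this is the privacy-loss computation for $\N(0,1)$ versus $\N(\mu,1)$.
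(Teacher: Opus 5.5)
Your reconstruction is correct and follows the route of the cited source, which the paper invokes without reproving: the testing characterisation for Prop.~3, the identity $f=\sup_{\varepsilon\ge0}f_{\varepsilon,\delta_f(\varepsilon)}$ via supporting lines and convex conjugacy for Prop.~6, and the stationary-point evaluation of $G_\mu^{\ast}(-e^{\varepsilon})$ for Cor.~1. Two small corrections. In Step~1, the bound $\beta\ge1-\delta-e^{\varepsilon}\alpha$ comes from applying the first inequality to $\phi$ itself ($1-\beta=\E_Q[\phi]\le e^{\varepsilon}\alpha+\delta$), not to $1-\phi$, which only gives an upper bound on $\beta$. In Step~2, symmetry is not really needed at the endpoints, since the biconjugate identity $f^{\ast\ast}=f$, with the $0$ branch absorbing nonnegative slopes, already handles them; it is needed to show that the lines $e^{-\varepsilon}(1-\delta_f(\varepsilon)-\alpha)$ are supporting lines of $f$, i.e.\ $f^{\ast}(-e^{-\varepsilon})=e^{-\varepsilon}f^{\ast}(-e^{\varepsilon})$, which follows by reflecting the epigraph of $f$ across the diagonal.
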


The guarantee rests on four operations of this calculus, stated by \citet{dong2022gaussian}
for datasets differing in one record, that is, under
Definition~\ref{def:adjacency}.

\begin{proposition}[The $f$-DP calculus
  {\citep[Thm.~1, Prop.~4, Def.~5--6, Thm.~4 and~9, Cor.~2]{dong2022gaussian}}]
\label{prop:calculus}
  Under record-substitution adjacency (Definition~\ref{def:adjacency}):
  \begin{enumerate}[label=(\roman*), leftmargin=*, itemsep=2pt]
    \item \emph{Gaussian mechanism.} If $v$ maps databases to $\mathcal{E}$ with
      sensitivity $\sens(v):=\sup_{D\bowtie D'}\norm{v(D)-v(D')}$, then $v+\xi$ with
      $\xi\sim\N(0,\sigma^2\Idp)$ is $(\sens(v)/\sigma)$-GDP.
    \item \emph{Post-processing.} If $\mathcal{A}$ is $f$-DP and $h$ is a (possibly
      randomised) map independent of the database, then $h\circ\mathcal{A}$ is $f$-DP.
    \item \emph{Subsampling.} Write $f_q:=q f+(1-q)\mathrm{Id}$ and
      $C_q(f):=\min\{f_q,f_q^{-1}\}^{\ast\ast}$, with
      $f^{-1}(\alpha):=\inf\{t:f(t)\le\alpha\}$ and ${}^{\ast\ast}$ the greatest convex
      minorant. If $\mathcal{A}$ is $f$-DP on $B$-record inputs and $\mathtt{Sample}_B$
      draws a uniform $B$-subset without replacement from an $m$-record dataset, then
      $\mathcal{A}\circ\mathtt{Sample}_B$ is $C_q(f)$-DP with $q=B/m$. For symmetric $f$,
      $C_q(f)$ is symmetric and non-increasing in $q$, with $C_1(f)=f$ and
      $C_0(f)=\mathrm{Id}$.
    \item \emph{Adaptive composition.} For $f=\mathcal T(P,Q)$ and
      $f'=\mathcal T(P',Q')$, let $f\otimes f':=\mathcal T(P\times P',Q\times Q')$ be the
      optimal trade-off for the two releases seen jointly. It is commutative,
      associative, monotone for the pointwise order and symmetry-preserving, with
      $f\otimes\mathrm{Id}=f$. If $\mathcal A_1$ is $f$-DP and $\mathcal A_2(\cdot,y_1)$
      is $f'$-DP for every output $y_1$ of $\mathcal A_1$, their adaptive composition is
      $(f\otimes f')$-DP. Iterating, an $N$-fold adaptive composition of $f$-DP
      mechanisms is $f^{\otimes N}$-DP. In particular,
      $G_{\mu}^{\otimes N}=G_{\sqrt N\mu}$.
  \end{enumerate}
\end{proposition}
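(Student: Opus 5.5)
The four items of Proposition~\ref{prop:calculus} are recalled from \citet{dong2022gaussian}. I would prove each by reducing it to a statement about hypothesis tests between two explicit distributions, then invoking the Neyman--Pearson lemma or the data-processing property of trade-off functions, in the order (ii), (i), (iv), (iii).

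\textbf{Post-processing.} Item~(ii) is immediate. Any rejection rule $\phi$ applied to $h(Y)$ is, after averaging over the internal randomness of $h$, a randomised rejection rule $\tilde\phi=\E[\phi\circ h\mid Y]$ on $Y$ with the same type-I and type-II errors. Hence $\mathcal T(h(P),h(Q))\ge\mathcal T(P,Q)\ge f$.

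\textbf{Gaussian mechanism.} For item~(i), fix $D\bowtie D'$ and set $\Delta=v(D)-v(D')$. An orthogonal change of coordinates in $\mathcal E$ aligns $\Delta$ with the first axis. The remaining $D-1$ coordinates then have the same law under both hypotheses, so they carry no information, and by (ii) applied in both directions the trade-off reduces to $\N(0,\sigma^2)$ versus $\N(\norm{\Delta},\sigma^2)$. Rescaling by $\sigma$ and applying Neyman--Pearson with likelihood-ratio threshold tests gives exactly $G_{\norm{\Delta}/\sigma}$. Since $G_\mu$ is pointwise non-increasing in $\mu$ and $\norm{\Delta}\le\sens(v)$, this is at least $G_{\sens(v)/\sigma}$.

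\textbf{Composition.} For item~(iv), commutativity, associativity, $f\otimes\mathrm{Id}=f$ and symmetry follow from the product structure. Well-definedness and monotonicity need the representation lemma of \citet{dong2022gaussian}: every trade-off function equals $\mathcal T(U[0,1],P_f)$ for a canonical pair, and $f\le g$ means the $g$-pair is a post-processing of the $f$-pair. For the adaptive case, I condition on the first output $y_1$. The conditional second release satisfies $\mathcal T(\mathcal A_2(D,y_1),\mathcal A_2(D',y_1))\ge f'$ for every $y_1$, so by the representation lemma each conditional pair is a post-processing of the fixed pair $(P',Q')$. The joint pair is then a post-processing of $(P\times P',Q\times Q')$, and (ii) concludes; $N$-fold composition follows by induction. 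For the Gaussian case, $\N(0,I_N)$ versus $\N(\mu\mathbf 1,I_N)$ reduces, by the same rotation argument as in (i), to a scalar test with shift $\norm{\mu\mathbf 1}=\sqrt N\mu$.

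\textbf{Subsampling.} Item~(iii) is the main obstacle. Let $D,D'$ differ in one record. With probability $1-q$ the sampled subset excludes that record, and the two subsampled outputs then coincide. With probability $q$ it includes it, and coupling the two subsets to differ only in that slot gives component pairs at trade-off at least $f$. The outputs are therefore the mixtures $(1-q)R+qP$ and $(1-q)R+qQ$ for a common component $R$. A joint-convexity argument over $\alpha$ shows their trade-off is at least $f_q$, after showing that the common component cannot help the tester. Because adjacency is symmetric, the bound must also hold with the roles of $D$ and $D'$ exchanged, which yields $f_q^{-1}$. Taking the minimum and then the convex minorant, which is legitimate because trade-off functions are convex, gives $C_q(f)$. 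The delicate part is the mixture inequality with the shared component; there I would follow \citet[Thm.~9]{dong2022gaussian} directly rather than re-derive it. The listed properties then follow by direct computation. Symmetry holds because $\min\{f_q,f_q^{-1}\}$ is invariant under inversion and convexification preserves this. Monotonicity in $q$ holds because $f_q$ decreases pointwise in $q$ when $f\le\mathrm{Id}$. The boundary cases are $f_1=f$ with $f^{-1}=f$ for symmetric convex $f$, and $f_0=\mathrm{Id}$.
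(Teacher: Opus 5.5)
\textbf{Items (i), (ii) and (iv).} The paper gives no proof of this proposition; it imports all four items from \citet{dong2022gaussian}. Your re-derivations of (i), (ii) and (iv) are standard and sound. In (iv) the Blackwell kernels must be chosen measurably in $y_1$, which is routine.

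\textbf{The gap is in (iii).} Your key step claims that the pair $\bigl((1-q)R+qP,\,(1-q)R+qQ\bigr)$ has trade-off at least $f_q$. This is false, and the step after it shows that the framing is off. If $\ge f_q$ held for every ordered adjacent pair, symmetry of adjacency would add $\ge f_q^{-1}$ and give $\max\{f_q,f_q^{-1}\}$, leaving no reason to take a minimum.

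Here is a counterexample. Let $\mathcal A(S)=\sum_{z\in S}g(z)+\N(0,\sigma^2)$ with $g\in[0,c]$, which is $G_{c/\sigma}$-DP. Let $D$ contain one record with $g=c$ and all others with $g=0$, and let $D'$ replace that record by one with $g=0$. Then $R=Q=\N(0,\sigma^2)$ and $P=\N(c,\sigma^2)$. By Neyman--Pearson, $\mathcal T\bigl(\N(0,\sigma^2),(1-q)\N(0,\sigma^2)+q\N(c,\sigma^2)\bigr)=f_q$ exactly, so the subsampled pair for $(D,D')$ has trade-off exactly $f_q^{-1}$. Near $\alpha=1$ one has $f_q(\alpha)\sim(1-q)(1-\alpha)$, whereas $f_q^{-1}(\alpha)=o(1-\alpha)$, so the bound $\ge f_q$ fails. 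Which of $f_q$ and $f_q^{-1}$ governs depends on where the common component sits; the minimum is not a weakening of two valid bounds.

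Nor can you show that \emph{the common component cannot help the tester} from $\mathcal T(P,Q)\ge f$ alone. If $R$ were distinguishable from $P$ and $Q$, the tester would learn whether the record was drawn. It could then reach type-II error $qf(\alpha/q)+(1-q)$ at level $\alpha\le q$, which lies strictly below $f_q=C_q(f)$ for small $\alpha$.

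\textbf{The missing ingredient} is that $R$ is itself $f$-close to both $P$ and $Q$. Let $\ell^\star$ be the index of the substituted record. Draw the subsample as $J\cup\{\ell^\star\}$ with probability $q$ and as $J\cup\{K\}$ otherwise. Here $J$ is a uniform $(B-1)$-subset of the other records and $K$ is a uniform remaining one. Given $(J,K)$, the three inputs ($J$ completed by $z$, by $z'$, or by $z_K$) are pairwise adjacent. Hence every ordered pair among $P,Q,R$ has trade-off at least $f$.

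Now fix a test $\phi$ and set $a=\E_R\phi$, $b=\E_P\phi$.
\begin{itemize}
\item If $a\le b$, then $\mathcal T(R,Q)\ge f$ gives $\beta\ge f_q(a)\ge f_q(\alpha)$.
\item If $a>b$, then $\mathcal T(P,Q)\ge f$ gives $\beta\ge(1-q)(1-a)+qf(b)$, and $\mathcal T(P,R)\ge f$ gives $a\le 1-f(b)$. So $(\alpha,\beta)$ lies above the slope-$(-1)$ segment joining $(b,f_q(b))$ to $\bigl((1-q)(1-f(b))+qb,\,f(b)\bigr)$. The second endpoint lies above the graph of $f_q^{-1}$, because $f(f(b))\le b$ for symmetric $f$.
\end{itemize}
Both endpoints lie in the convex epigraph of $\min\{f_q,f_q^{-1}\}^{\ast\ast}$, hence so does $(\alpha,\beta)$. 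Averaging over $(J,K)$ preserves the bound by convexity of $C_q(f)$.

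This is where the minimum and the convex minorant come from. Deferring only the \emph{mixture inequality} to \citet[Thm.~9]{dong2022gaussian}, while keeping your intermediate claim, would not close the argument.
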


\begin{remark}[What is amplified]\label{rem:hidden-indices}
  Privacy amplification by subsampling (Proposition~\ref{prop:calculus}(iii)) requires
  that the subsample is not released, so that the adversary cannot tell whether the
  substituted record was drawn. It therefore applies to the minibatch, which never leaves
  the client, but not to the participation set $S_t$, which the server draws itself and
  keeps in $\mathcal V_T$: all amplification below is record-level, at rate $q_i=B/m_i$.
\end{remark}

\subsection{Record-level guarantee} \label{ssec:guarantee}
The transcript is assembled from privatised
gradients of the form~\eqref{eq:rec-noisy-grad}, so the analysis starts with a
single one. Lemma~\ref{lem:subsamp-step} is adapted from the per-step bound of DP-SGD, with the only adjustment being that the tangent projection of the noise is post-processing of the ambient release.

\begin{lemma}[One subsampled local step]\label{lem:subsamp-step}
  Fix $C,\sdp>0$, a client $i$, a point $x\in\M$ and a batch size $B\le m_i$. Let
  $\mathcal B$ be a uniform minibatch of size $B$ drawn from $\mathcal D_i$ without
  replacement, independently of $\xi\sim\N(0,\sdp^2\Idp)$, and set
  $\mu_0:=2C/(B\sdp)$. Then the mechanism
  \[
    \mathcal D_i\;\longmapsto\;
    \tilde g \;:=\;\hgradC f_i(x;\mathcal B)+P_x\xi
  \]
  is $C_{q_i}(G_{\mu_0})$-DP under substitution of one record of $\mathcal D_i$,
  at rate $q_i=B/m_i$.
\end{lemma}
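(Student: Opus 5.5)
The plan is to factor the mechanism as subsampling, then an ambient Gaussian mechanism on a fixed-size batch, then a data-independent post-processing map. The key observation is that, since $P_x$ is linear and $x$ is fixed,
\[
\tilde g \;=\; P_x\Bigl(\tfrac1B\textstyle\sum_{\ell\in\mathcal B}\clip_C\bigl(P_x\nabla f(x;z_{i,\ell})\bigr)+\xi\Bigr).
\]
This holds because each clipped term already lies in $T_x\M$, where $P_x$ acts as the identity. So $\tilde g$ is the image under the fixed linear map $P_x$ of the ambient release
\[
v(\mathcal B)+\xi,\qquad v(\mathcal B):=\tfrac1B\textstyle\sum_{\ell\in\mathcal B}\clip_C\bigl(P_x\nabla f(x;z_{i,\ell})\bigr)\in\mathcal E,
\]
with $\xi\sim\N(0,\sdp^2\Idp)$.

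First I would treat the inner mechanism on $B$-record inputs. Two batches differing in one substituted record change exactly one summand of $v$. Each summand has ambient norm at most $C$ by the definition of $\clip_C$, so
\[
\norm{v(\mathcal B)-v(\mathcal B')}\le 2C/B .
\]
Proposition~\ref{prop:calculus}(i) then makes $\mathcal B\mapsto v(\mathcal B)+\xi$ a $\mu_0$-GDP mechanism, with $\mu_0=2C/(B\sdp)$. Applying the fixed map $P_x$, which depends on $x$ and not on the data, keeps it $G_{\mu_0}$-DP by Proposition~\ref{prop:calculus}(ii). Because $x$ is fixed in the statement, there is no dependence of the map on the data to worry about here.

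Next I would compose with $\mathtt{Sample}_B$ on $\mathcal D_i$. The minibatch is drawn uniformly without replacement, independently of $\xi$, and it is not released. Proposition~\ref{prop:calculus}(iii) with $q_i=B/m_i$ then gives $C_{q_i}(G_{\mu_0})$-DP under substitution of one record of $\mathcal D_i$.

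The step needing the most care is matching adjacency notions. Proposition~\ref{prop:calculus}(i) is applied on $B$-record inputs with substitution adjacency, which is where the factor $2C$ comes from rather than $C$. The subsampling theorem of \citet{dong2022gaussian} must then be invoked in its substitution form for $m_i$-record datasets, which is the form stated in Proposition~\ref{prop:calculus}(iii). I would check that a substitution in $\mathcal D_i$ induces, under a coupling of the samples, batches that are either identical or substitution-adjacent. That is the hypothesis under which (iii) is stated, so the conclusion follows directly.
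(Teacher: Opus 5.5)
Your proposal is correct and follows the paper's proof essentially step for step. You bound the sensitivity of the clipped minibatch average by $2C/B$ on $B$-record inputs, apply the Gaussian mechanism to the ambient release, use $P_x\bigl(v(\mathcal B)+\xi\bigr)=v(\mathcal B)+P_x\xi$ to treat the tangent projection as post-processing, and amplify by subsampling via Proposition~\ref{prop:calculus}(iii). One small point: the coupling check in your last paragraph is already part of the proof of the subsampling theorem of \citet{dong2022gaussian}, so it is not an extra hypothesis you need to verify. Item (iii) only asks for $f$-DP on $B$-record inputs under substitution.
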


\begin{proof}
  Fix the batch and consider $v_x(\mathcal B):=\hgradC f_i(x;\mathcal B)$ as a
  mechanism on $B$-record inputs (see~\eqref{eq:clipped-mb-grad}). Each clipped
  per-sample gradient has ambient norm at most $C$, which the induced
  metric~\eqref{eq:metric} makes its tangent norm too, so substituting one record of
  $\mathcal B$ moves the average by at most $\sens(v_x)=2C/B$. By
  Proposition~\ref{prop:calculus}(i) the ambient release $v_x(\mathcal B)+\xi$
  is $(\sens(v_x)/\sdp)$-GDP, that is $\mu_0$-GDP. The tangent projector $P_x$ is a
  fixed linear map that does not depend on the data, and $v_x$ is already
  $T_x\M$-valued, so
  \[
    P_x\bigl(v_x(\mathcal B)+\xi\bigr)
    \;=\;v_x(\mathcal B)+P_x\xi\;=\;\tilde g ,
  \]
  and $\tilde g$ inherits the budget $\mu_0$ by post-processing
  (Proposition~\ref{prop:calculus}(ii)), uniformly in $x$. Finally the batch is a
  uniform $B$-subset drawn without replacement and never released, so
  Proposition~\ref{prop:calculus}(iii) amplifies $G_{\mu_0}$ to
  $C_{q_i}(G_{\mu_0})$ at rate $q_i=B/m_i$.
\end{proof}

Two features of this proof carry the rest of the section. The budget is
settled on the ambient release, so the manifold enters only as post-processing, as will every
geometric map below. The guarantee is uniform in $x$, so the lemma still
applies when the iterate is produced by earlier releases.

It remains to lift Lemma~\ref{lem:subsamp-step} to the transcript $\mathcal V_T$. As for DP-SGD, this takes adaptive composition and post-processing. The only point specific to our setting is that the iterates, the projections and the aggregation must be data-independent functions of the earlier releases.
We split the protocol randomness into a part drawn independently of the data,
\[
  \mathcal R := \bigl(x_0,\ \{S_t\}_t,\ \{s^{(i)}_0\}_{i\in[n]},\
  \{\omega^{(i)}_{t,j}\}_{t,j,i}\bigr),
\]
consisting of model initialisation, client schedule, initial optimiser states and
per-step optimiser randomness, and a part that depends on it: the stream of
privatised gradients
\(
  \Gamma := \{\tilde g^{(i)}_{t,j}\}_{t,\,i\in S_t,\,j},
\)
each entry an instance of Lemma~\ref{lem:subsamp-step}. The minibatch draws stay
internal to the per-step mechanism, which is what keeps their amplification
available (Remark~\ref{rem:hidden-indices}).

\begin{proposition}[Geometry-free reduction]\label{prop:geom-free}
  There is a deterministic measurable map $\Psi$ with
  $\mathcal V_T=\Psi(\mathcal R,\Gamma)$, in which the manifold $\M$, the maps
  $(P_\cdot,\Pi)$, the aggregator $\textsc{Agg}$ and the optimisers
  $\{\textsc{Opt}^{(i)}\}$ act only as fixed, data-independent operations.
  Consequently, if $\Gamma$ is $f$-DP conditionally on $\mathcal R=r$, uniformly
  in $r$, then $\mathcal V_T$ is $f$-DP.
\end{proposition}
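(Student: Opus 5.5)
The plan is to build $\Psi$ explicitly by forward induction over the protocol's steps, then conclude by post-processing. Enumerate the protocol's atomic operations in execution order: rounds $t=0,\dots,T-1$; within each round the clients of $S_t$ in a fixed order (say increasing index, since parallel execution is immaterial to the joint law); within each client the local steps $j=0,\dots,\tau-1$. The induction hypothesis is that every quantity computed before step $(t,j,i)$ is a deterministic measurable function of $\mathcal R$ and of the entries of $\Gamma$ released before that step. This covers $x_t$, $x^{(i)}_{t,j}$, $s^{(i)}_{t,j}$, the submissions and the aggregates. The base case is immediate: $x_0$ and $s^{(i)}_0$ are components of $\mathcal R$, and $S_0$ is as well.

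For the inductive step, fix $(t,j,i)$ and take $\tilde g^{(i)}_{t,j}$ as given by $\Gamma$. By \eqref{eq:opt-step}, $(u^{(i)}_{t,j},s^{(i)}_{t,j+1})=\textsc{Opt}^{(i)}(\tilde g^{(i)}_{t,j},s^{(i)}_{t,j};\theta^{(i)})$. Any internal randomness is routed through $\omega^{(i)}_{t,j}\in\mathcal R$, so this is a fixed measurable map of already-available quantities. Then $x^{(i)}_{t,j+1}=\Pi(x^{(i)}_{t,j}-u^{(i)}_{t,j})$ by \eqref{eq:local-step}. Here one needs $\Pi$ to be a fixed measurable selection of the nearest-point map. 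Compactness of $\M$ gives existence, and a measurable selection theorem gives measurability on the non-uniqueness set. The tie-breaking rule must be part of the algorithm's specification, not data-dependent.

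After $j=\tau-1$, the client forms $\Delta^{(i)}=\textsc{Agg.Sub}(x_t,x^{(i)}_{t,\tau})$, which is measurable in each row of Table~\ref{tab:agg-schemes}. For $\mathrm{Log}$ and $\mathrm{R}^{-1}$, one again fixes measurable branches. The server then forms $\Delta_t$ and $x_{t+1}=\textsc{Agg.Out}(x_t,\Delta_t)$, which closes the induction. Collecting $(x_t,S_t,\{\Delta^{(i)}\}_{i\in S_t})$ for all $t$, together with $x_T$, defines $\Psi$ as a finite composition of measurable maps. None of these maps reads $D$: the data enter only through $\Gamma$. The crucial point is that the iterate $x^{(i)}_{t,j}$ fed to the next mechanism is itself a function of $(\mathcal R,\Gamma_{<(t,j,i)})$.

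For the consequence, fix adjacent $D\bowtie D'$. Because $\mathcal R$ is drawn independently of the data, it has the same law $\rho$ under $D$ and $D'$. Conditionally on $\mathcal R=r$, the transcript is $\Psi(r,\Gamma)$. For any test $\phi$ on transcripts, $\phi\circ\Psi(r,\cdot)$ is a test on $\Gamma$. The conditional hypothesis then gives $\E_{D'}[1-\phi\circ\Psi(r,\Gamma)\mid r]\ge f(\E_D[\phi\circ\Psi(r,\Gamma)\mid r])$, which is Proposition~\ref{prop:calculus}(ii) applied for each $r$. Integrating over $\rho$ and using the convexity of $f$ via Jensen yields $\E_{D'}[1-\phi(\mathcal V_T)]\ge f(\E_D[\phi(\mathcal V_T)])$. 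Taking the infimum over $\phi$ shows $\mathcal T(\mathcal V_T(D),\mathcal V_T(D'))\ge f$. Equivalently, one may append $r$ to the output, since revealing it costs nothing when its law is data-free, and then post-process.

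The main obstacle is not the composition but rigour about measurability and randomness bookkeeping. Three points need care: (a) ensuring that non-unique $\Pi$, $\mathrm{Log}$ and $\mathrm{R}^{-1}$ are fixed measurable selections; (b) ensuring that all optimiser randomness is captured in $\mathcal R$ rather than hidden inside $\textsc{Opt}$; and (c) keeping the minibatch indices inside $\Gamma$'s mechanism and out of $\mathcal R$, so that $\mathcal R$ is genuinely data-independent and subsampling amplification remains available. I would handle (a) by stating a standing convention and citing a measurable selection result for compact-valued argmin correspondences.
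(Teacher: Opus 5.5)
Your proof is correct. The construction of $\Psi$ is the paper's: induction over rounds, reading $S_t$, the initial states and the optimiser randomness from $\mathcal R$ and each gradient from $\Gamma$, so that no step consults the data. Your extra bookkeeping is not decorative. Fixing a measurable selection for $\Pi$ off the tube, and routing any tie-breaking randomness into $\mathcal R$, is what makes $\Psi$ a well-defined deterministic map, since the paper lets $\Pi(y)$ be ``any'' minimiser. For $\mathrm{R}^{-1}$ the problem is non-existence as well as non-uniqueness: the inverse polar retraction exists only near $x_t$. Your convention should therefore also fix a default value there, in line with the paper's ``wherever the algorithm runs''.

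The real difference is in the transfer step. The paper treats the release of $\mathcal R$ as an $\mathrm{Id}$-DP mechanism. It then uses adaptive composition (Proposition~\ref{prop:calculus}(iv)) to make $(\mathcal R,\Gamma)$ $(\mathrm{Id}\otimes f)$-DP, that is $f$-DP, and post-processes by $\Psi$. You instead fix a test $\phi$ and pull it back through $\Psi(r,\cdot)$ for each $r$. You then average over the common law $\rho$ of $\mathcal R$ using convexity of $f$ (Jensen), and use monotonicity of $f$ to pass to the infimum. This is the composition theorem specialised to a data-independent first release, proved by hand. It shows exactly which facts are needed: $\mathcal R$ has the same law under $D$ and $D'$, and $f$ is a convex, non-increasing trade-off function. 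The paper's version is instead a one-line appeal to the calculus. Your closing alternative, appending $r$ to the output and then post-processing, is precisely the paper's route.
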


\begin{proof}
  Reconstruct the protocol from $(\mathcal R,\Gamma)$ by induction on the rounds:
  at round $t$ the active set $S_t$ is read off $\mathcal R$, each selected
  client's local iterates follow from $x_{t,0}^{(i)}=x_t$
  by~\eqref{eq:local-step}, its update direction reading the gradient from
  $\Gamma$ and the optimiser randomness from $\mathcal R$, and the submissions and
  next global iterate follow from Table~\ref{tab:agg-schemes}. No step consults
  the data; this defines $\Psi$. For the transfer, $\mathcal R$ is drawn
  independently of the data, so its release is $\mathrm{Id}$-DP and adaptive
  composition (Proposition~\ref{prop:calculus} (iv) ) makes $(\mathcal R,\Gamma)$
  $(\mathrm{Id}\otimes f)$-DP, that is $f$-DP. Finally $\mathcal V_T$ is a
  post-processing of $(\mathcal R,\Gamma)$, hence $f$-DP by
  Proposition~\ref{prop:calculus} (ii).
\end{proof}
In particular every iterate $x^{(i)}_{t,j}$, hence every projector
$P_{x^{(i)}_{t,j}}$, is determined by $\mathcal R$ and the earlier releases, so
Lemma~\ref{lem:subsamp-step} applies at each step with $x$ fixed and adaptive composition
chains them. The three pieces now assemble.

\begin{theorem}[Privacy of \textsc{DP-RFedProj} under HBC]
\label{thm:gdp-subsampled}
  Let $m_\star:=\min_{i\in[n]}m_i$, $q:=B/m_\star$ and $\mu_0:=2C/(B\sdp)$. For
  every $T\ge1$, the transcript $\mathcal V_T$ of
  Algorithm~\ref{alg:dp-rfedproj} is $C_q(G_{\mu_0})^{\otimes T\tau}$-DP under
  record-level adjacency. The budget depends only on $(C,B,m_\star,\sdp)$ and the
  product $T\tau$, not on the number $k$ of participating clients, not on
  $(\M,\Pi,\{\textsc{Opt}^{(i)}\})$, and not on the aggregator
  $\textsc{Agg}\in\{\textsc{ProjAvg},\textsc{RLAvg},\textsc{KFAvg},\textsc{RetAvg}\}$.
\end{theorem}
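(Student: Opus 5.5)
The plan is to reduce the theorem to Proposition~\ref{prop:geom-free}: it suffices to show that, conditionally on any realisation $\mathcal R=r$ of the data-independent randomness, the gradient stream $\Gamma$ is $C_q(G_{\mu_0})^{\otimes T\tau}$-DP under record-level adjacency. Fix $r$ and a record-adjacent pair $D\bowtie D'$ differing only in client $i^\star$'s dataset. Since $r$ fixes the schedule $\{S_t\}$, the entries of $\Gamma$ split into the stream of client $i^\star$ and the streams of the other clients. The latter are computed from identical datasets and, given the earlier releases, have identical conditional laws under $D$ and $D'$. Each of their steps is therefore $\mathrm{Id}$-DP and is absorbed by $f\otimes\mathrm{Id}=f$.

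Client $i^\star$ appears only in rounds $t$ with $i^\star\in S_t$, at most $T$ of them, each contributing $\tau$ releases. I will order all entries of $\Gamma$ in execution order, rounds first and then local steps; the ordering across clients within a round is irrelevant. Each release of client $i^\star$ is computed at an iterate $x^{(i^\star)}_{t,j}$ that, by the reconstruction in Proposition~\ref{prop:geom-free}, is a deterministic function of $r$ and the earlier entries of $\Gamma$. Conditionally on those, the release is exactly the mechanism of Lemma~\ref{lem:subsamp-step} at a fixed point $x$, with a fresh minibatch and fresh noise. It is thus $C_{q_{i^\star}}(G_{\mu_0})$-DP, uniformly in the conditioning. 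Adaptive composition (Proposition~\ref{prop:calculus}(iv)) over the at most $T\tau$ such releases, together with the $\mathrm{Id}$ factors, yields $C_{q_{i^\star}}(G_{\mu_0})^{\otimes N}$ with $N\le T\tau$.

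Two monotonicity steps bring this to the stated uniform bound. First, $q_{i^\star}=B/m_{i^\star}\le B/m_\star=q$, and $C_q(f)$ is non-increasing in $q$ for symmetric $f$ (Proposition~\ref{prop:calculus}(iii); $G_{\mu_0}$ is symmetric). So $C_{q_{i^\star}}(G_{\mu_0})\ge C_q(G_{\mu_0})$, and monotonicity of $\otimes$ preserves this across factors. Second, to pass from $N$ to $T\tau$ factors, I will pad with factors $\mathrm{Id}\ge C_q(G_{\mu_0})$: trade-off functions are bounded above by $\mathrm{Id}$, so $f^{\otimes N}=f^{\otimes N}\otimes\mathrm{Id}^{\otimes(T\tau-N)}\ge f^{\otimes T\tau}$. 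Since the bound holds for every adjacent pair and uniformly in $r$, Proposition~\ref{prop:geom-free} transfers it to $\mathcal V_T$.

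The independence claims then follow by inspection. The final expression involves only $C,B,\sdp,m_\star$ and $T\tau$. The manifold, $\Pi$, the optimisers and the aggregator enter only through $\Psi$, which is post-processing, and $k$ disappears because non-affected clients contribute $\mathrm{Id}$ factors. The main obstacle is making adaptive composition rigorous when the mechanism at each step depends on earlier outputs through the iterate and through the participation of $i^\star$. Conditioning on $r$ settles the participation, and the uniformity in $x$ of Lemma~\ref{lem:subsamp-step} settles the iterate. I will also check that the hidden minibatch indices do not need to be part of the composed outputs, in line with Remark~\ref{rem:hidden-indices}, so that each composed mechanism is the subsampled one.
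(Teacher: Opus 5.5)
Your proposal is correct and follows the paper's proof essentially step for step. You condition on $\mathcal R$, treat the other clients' releases as $\mathrm{Id}$ factors, apply Lemma~\ref{lem:subsamp-step} at the prefix-determined iterate and compose adaptively, pad from $N$ to $T\tau$ factors via $f\le\mathrm{Id}$, use monotonicity of $C_q$ in $q$, and transfer to $\mathcal V_T$ through Proposition~\ref{prop:geom-free}; the only cosmetic difference is that you apply the $q$-monotonicity before the padding, whereas the paper does it last.
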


\begin{proof}
  Fix $D\bowtie D'$ differing at client $i^\star$ and condition on $\mathcal R$.
    The datasets being disjoint, every other client's release has, conditionally on the
  prefix of the transcript, the same law under $D$ and $D'$: their factors are
  $\mathrm{Id}$-DP and drop out of the tensor product, which is why $k$ does not appear.
  Each release of $i^\star$ is the mechanism of Lemma~\ref{lem:subsamp-step} at an
  iterate fixed by the prefix, hence $C_{q_{i^\star}}(G_{\mu_0})$-DP, and adaptive
  composition (Proposition~\ref{prop:calculus} (iv)) tensorises them. If $i^\star$ is
  not selected in every round it contributes fewer than $T\tau$ factors, and
  $f^{\otimes(N+1)}\le f^{\otimes N}$ since $f\le\mathrm{Id}$, so the $T\tau$-fold
  bound holds whatever the schedule. It cannot be reduced: the server draws $S_t$
  and records it in $\mathcal V_T$, so the number of selections is public and the
  guarantee must cover the schedule selecting $i^\star$ every round.
  This bound is uniform in $\mathcal R$, so Proposition~\ref{prop:geom-free}
  carries it to $\mathcal V_T$. Finally $q_{i^\star}\le q$ and $C_q$ is
  non-increasing in $q$ (Proposition~\ref{prop:calculus} (iii)), giving the cohort bound.
\end{proof}
The same argument covers clients that run different numbers $\tau_i \leq \tau$ of local steps, as in our experiments (Appendix~\ref{app:dp-settings}). 
Every scheme of Table~\ref{tab:agg-schemes} reads
the same privatised gradients, so the guarantee
also covers Algorithm~\ref{alg:dp-rfedproj} run with \textsc{KFAvg} or
\textsc{RetAvg}. It does not cover PriRFed~\citep{huang2026federated}: only the
aggregation rule carries over, not the way the gradients are privatised, which
there happens in the geometry of the current iterate~\eqref{eq:han-grad} rather
than in the fixed ambient one~\eqref{eq:rec-noisy-grad}.

\subsection{Interpreting the guarantee}\label{ssec:reading}
Proposition~\ref{prop:fdp-to-dp} turns Theorem~\ref{thm:gdp-subsampled} into a full
$(\varepsilon,\delta)$ curve, but a tensor power of subsampled trade-off functions is hard
to read. We therefore report a single summary number, obtained from the $f$-DP central
limit theorem. It is used for interpretation only, since the noise scales of
Sections~\ref{sec:synth-exp} and \ref{sec:num-exps} are calibrated by a numerical accountant that certifies an upper bound for
Theorem~\ref{thm:gdp-subsampled} itself (Appendix~\ref{app:dp-settings}).

The bound is a composition of subsampled Gaussian mechanisms, which is the setting the
$f$-DP central limit theorem covers: with $N:=T\tau$, $\mu_0$ fixed and
$q\sqrt N\to c>0$,
\begin{equation*}
  C_q(G_{\mu_0})^{\otimes N}\;\longrightarrow\;G_{\mu_\infty},
  \qquad \mu_\infty=c\,\sqrt{2\chi^2_+(G_{\mu_0})},
  \qquad \chi^{2}_+(G_{\mu_0})=e^{\mu_0^{2}}\Phi\bigl(\tfrac{3\mu_0}{2}\bigr)
    +3\Phi\bigl(-\tfrac{\mu_0}{2}\bigr)-2,
\end{equation*}
uniformly on $[0,1]$, where $\chi^2_+(f):=\int_0^1(|f'|-1)_+^2$
\citep[Thm.~11 and Lem.~3]{dong2022gaussian}. We therefore summarise a finite run by
\begin{equation}\label{eq:mu-tot}
  \mu_{\mathrm{tot}}:=q\sqrt{2N\,\chi^2_+(G_{\mu_0})},
\end{equation}
reporting the transcript as $\mu_{\mathrm{tot}}$-GDP and reading
$(\varepsilon,\delta)$ off~\eqref{eq:gdp-profile}. Expanding at small $\mu_0$ and
using $q\mu_0=2C/(m_\star\sdp)$ shows what drives it:
\begin{equation}\label{eq:mu-tot-expansion}
  \mu_{\mathrm{tot}}
  = \frac{2C\sqrt{T\tau}}{m_\star\,\sdp}
    \Bigl(1+\tfrac{\mu_0}{\sqrt{2\pi}}+O(\mu_0^{2})\Bigr).
\end{equation}
The budget grows as $\sqrt{T\tau}$ and is set by the clipping threshold relative to the
noise, damped by the smallest client's dataset size. At fixed $T\tau$, the batch size cancels at leading order.

\section{Convergence Analysis}\label{sec:convergence}
We now show that Algorithm~\ref{alg:dp-rfedproj} converges with several local
steps, partial participation and heterogeneous clients at once, which is precisely the regime
left open by Limitation~\ref{lim:agg-reach}. Three specialisations fix the
algorithm under analysis.
\begin{enumerate}[label=(\roman*),nosep]
 \item \emph{Nearest-point projection.} We set $\Pi$ to be the nearest-point
    map~\eqref{eq:nearest-p-proj}, which compactness makes well defined near $\M$
    (Assumption~\ref{ass:prox}) and which the linear tangent projector $P_x$
    approximates to first order at every base point~\eqref{eq:proj-quad-rem}.
    This linear model, available at each iterate, is what replaces the nonlinear
    inverse retraction of PriRFed~\citep{huang2026federated}.
  \item \emph{Local optimiser.} Each client runs plain Riemannian stochastic
    gradient descent (RSGD) with a rate $\eta>0$ held constant across steps,
    rounds and clients, so that the local update~\eqref{eq:local-step} reads
    \begin{equation}\label{eq:sgd}
      x_{t,j+1}^{(i)} = \Pi\bigl(x_{t,j}^{(i)} - \eta\,\tilde{g}_{t,j}^{(i)}\bigr),
    \end{equation}
  \item \emph{Server aggregation.} We restrict to \textsc{ProjAvg}~\eqref{eq:projavg}
    and \textsc{RLAvg}~\eqref{eq:rlavg}, whose primitives are the two projections.
    The nonlinear lift and retraction maps of \textsc{KFAvg} and \textsc{RetAvg} are
    precisely what the linearisation step (Lemma~\ref{lem:linearisation}) cannot handle.
\end{enumerate}
Throughout, $\alpha:=\eta\tau$ denotes the aggregate per-round stepsize, in which the rates
are stated. Prescribing $\alpha$ means running Algorithm~\ref{alg:dp-rfedproj} with
$\eta=\alpha/\tau$. 

\subsection{Assumptions}\label{ssec:conv-setup}
Both the local and the aggregation stages of Algorithm~\ref{alg:dp-rfedproj} evaluate $\Pi$ off the manifold. For the algorithm to be
well-defined and the iterates controllable, $\Pi$ must be single-valued throughout a
neighbourhood of $\M$, a property guaranteed by proximal
smoothness~\citep{clarke1995proximal,davis2025stochastic}: for $\gamma>0$, define
the open tube
\[
  U_\M(\gamma) := \{\,y\in\mathcal{E} : \mathrm{dist}(y,\M) < \gamma\,\},
  \qquad \mathrm{dist}(y,\M) := \inf_{x\in\M}\|y-x\|,
\]
and call $\M$ \emph{$\gamma$-proximally smooth} if $\Pi$ is single-valued on
$U_\M(\gamma)$.

A compact smooth embedded submanifold is proximally smooth. We fix the radius below.
\begin{assumption}[Proximal smoothness]\label{ass:prox}
  $\M$ is $2\gamma$-proximally smooth for some fixed $\gamma>0$.
\end{assumption}

Since $F$ is continuous on the compact $\M$, it attains finite extrema.
Writing $F_\star := \inf_{x\in\M} F(x)$, any two points of $\M$ differ in objective
value by at most
\begin{equation}\label{eq:Delta-F}
  \delta_F := \sup_{x\in\M} F(x) - F_\star .
\end{equation}
Each continuous ambient gradient $\nabla f(\cdot;z)$ is bounded on $\M$ at a given data point $z$. A uniform gradient bound over the training records only would make the stepsize condition, which involves
$G$, depend on the data, and the privacy guarantee would no longer apply. We therefore assume the following uniform gradient bound over all possible data. 
\begin{assumption}[Uniform gradient bound]\label{ass:gd-bd}
The per-sample gradients are bounded on $\M$ over the whole data space $\mathcal Z$. 
\[
  G := \sup_{z\in\mathcal Z}\;\sup_{x\in\M}\;\|\nabla f(x;z)\| \;<\;\infty.
\]
\end{assumption}

Clipping acts on each per-record gradient before averaging, so the analysis must control
every per-sample loss $f(\cdot\,;z)$, uniformly in $z$. This is stronger than the
smoothness of the averaged objective assumed by~\citet{deng2025decentralized}.
\begin{assumption}[Smoothness]\label{ass:smooth}
  Each per-sample loss $f(\cdot\,;z)$ has $\ell_f$-Lipschitz ambient gradient on
  $\mathrm{conv}(\M)$, with $\ell_f$ independent of $z$:
  \[
    \|\nabla f(x;z) - \nabla f(y;z)\|
    \le \ell_f\|x - y\|,
    \qquad x,y \in \mathrm{conv}(\M),\;z\in\mathcal{Z}.
  \]
\end{assumption}
Because $\ell_f$ is uniform in $z$, every per-sample loss obeys the same
quadratic bound, which averaging passes to $F$ as defined in \eqref{eq:fl-objective}. Under
Assumptions~\ref{ass:prox}--\ref{ass:smooth}, the following descent
inequality~\citep[Lem.~4.2]{deng2025decentralized} holds on $\M$, with $\LM := \ell_f + \frac{G}{2\gamma}$:
\begin{equation}\label{eq:descent-ineq}
  F(y) \le F(x) + \langle \grad F(x),\, y - x\rangle
  + \frac{\LM}{2}\|y - x\|^2.
\end{equation}

Local steps (Algorithm~\ref{alg:dp-rfedproj}, Line 10) use minibatch gradients, so the analysis must control their
dispersion.

\begin{assumption}[Stochastic gradients]\label{ass:sgd}~
    There exists $\sigma_{\mathrm{st}}^2 < \infty$ such that, for
      all $i \in [n]$ and $x \in \M$,
      \begin{equation}\label{eq:within-var}
        \frac{1}{m_i}\sum_{\ell=1}^{m_i}
        \|\grad f(x;z_{i,\ell}) - \grad f_i(x)\|^2
        \le \sigma_{\mathrm{st}}^2.
      \end{equation}
\end{assumption}
For a threshold $C>0$, a client $i$, and $x\in\M$, let us write the clipped per-client full
gradient as
\begin{equation}\label{eq:clip-grad-full}
  \gradC f_i(x)
    := \frac{1}{m_i}\sum_{\ell=1}^{m_i}\clip_C\!\bigl(\grad f(x;z_{i,\ell})\bigr), 
\end{equation}
whose minibatch estimator is \eqref{eq:clipped-mb-grad}. This estimator is unbiased, with variance at most $\rho_i\sigma_{\mathrm{st}}^2$, where $\rho_i:=(m_i-B)/(B(m_i-1))$ is the factor of sampling without replacement (Lemma~\ref{lem:clip-var} in Appendix~\ref{app:aux-lemmas}).

Our next assumption bounds the heterogeneity driving the drift of the client gradients from the  gradient of the objective $\grad F=\frac{1}{n}\sum_{i=1}^{n}\grad f_i$.
\begin{assumption}[Client-gradient heterogeneity]\label{ass:hetero}
  There exists $\sighet^2 \ge 0$ such that
  \[
    \sup_{x \in \M}\;\frac{1}{n}\sum_{i=1}^n
      \|\grad f_i(x) - \grad F(x)\|^2 \le \sighet^2 .
  \]
\end{assumption}
\noindent
Clipping perturbs each $\grad f_i(x)$ by a bias
$b_i^C(x):=\gradC f_i(x)-\grad f_i(x)$. These biases differ across clients, which
adds to the heterogeneity already present. We measure their spread by
\begin{equation}\label{eq:sigma-clip-def}
  \sigma_{\mathrm{clip}}^2 := \sup_{x\in\M}\frac1n\sum_{i=1}^n
    \Bigl\|b_i^C(x)-\frac1n\sum_{j=1}^n b_j^C(x)\Bigr\|^2 .
\end{equation}
This constant is explicit: $\|\clip_C(v)-v\|=(\|v\|-C)_+$ and
$\|\grad f(x;z)\|\le G$ give $\sigma_{\mathrm{clip}}\le(G-C)_+$, which vanishes
whenever $C\ge G$. The clipped client gradients then deviate from their mean by at most $(\sigma_{\mathrm{het}}+\sigma_{\mathrm{clip}})^2$ on average (Lemma~\ref{lem:clip-hetero} in Appendix~\ref{app:aux-lemmas}). 
The per-client clipping biases also have a nonzero mean, which shifts the aggregated signal
away from $\grad F$. It is quantified by the clipping-bias floor
\begin{equation}\label{eq:beta-C-def}
  \beta_C := \sup_{x\in\M}
    \Bigl\|\tfrac{1}{n}\sum_{i=1}^{n}b_i^{C}(x)\Bigr\|
  \;<\;\infty .
\end{equation}

\subsection{Per-round noise control}
The injected DP noise (Algorithm~\ref{alg:dp-rfedproj}, Lines 9--10) is
Gaussian and hence unbounded: with small but positive probability a single
draw ejects the pre-projection point from the tube $U_\M(2\gamma)$, where $\Pi$
ceases to be single-valued. We handle this by conditioning each round on a
high-probability event on which every Gaussian noise vector $\xi_{t,j}^{(i)}$ is
bounded, with the threshold calibrated so that the event fails only with small
probability. On the rare failure, the projection still returns a point of $\M$,
so the objective increases by at most $\delta_F$ over the
round.
\begin{definition}[Per-round noise control event]\label{def:good-event}
  Fix a noise-control margin $c_0 > 0$ and define the noise threshold
  \begin{equation}\label{eq:R-def}
    R := \sdp\bigl(\sqrt{D} + \sqrt{c_0}\bigr),
  \end{equation}
  together with the per-round noise control event
  \begin{equation}\label{eq:good-event}
    \mathcal{G}_t := \bigl\{\, \|\xi_{t,j}^{(i)}\| \le R
      \ \text{for all } (i,j)\in S_t\times\{0,\dots,\tau-1\} \,\bigr\}, \quad t \in \{0,\dots, T-1\}.
  \end{equation}
\end{definition}
\noindent
On $\mathcal{G}_t$, each privatised gradient~\eqref{eq:rec-noisy-grad} is bounded,
\begin{equation}\label{eq:grad-bound-geff}
  \|\tilde g_{t,j}^{(i)}\|\le C+\|\xi_{t,j}^{(i)}\|\le \Geff,
  \qquad \Geff:=C+R ,
\end{equation}
which is the bound the convergence analysis relies on. 
Whichever clients are selected, $\mathcal{G}_t$ constrains the same number
$k\tau$ of i.i.d.\ noise draws, so the probability of the complement
$\mathcal{G}_t^c$ does not depend on the round index $t$. Gaussian norm concentration at the
threshold~\eqref{eq:R-def}, followed by a union bound over the $k\tau$ draws,
gives the following, proved in Appendix~\ref{ssec:proof-pfail}.
\begin{lemma}[Per-round failure probability]\label{lem:pfail}
  The failure probability $\pi:=\Pr(\mathcal{G}_t^c)$ is the same for every round
  $t\in\{0,\dots,T-1\}$ and satisfies $\pi \le k\tau\,e^{-c_0/2}$.
\end{lemma}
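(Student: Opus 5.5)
The proof has two parts: showing that $\pi$ does not depend on $t$, and bounding it. I will treat them in that order.

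\emph{Invariance in $t$.} By the sampling model stated after~\eqref{eq:rec-noisy-grad}, the noise draws $\xi_{t,j}^{(i)}$ are i.i.d.\ $\N(0,\sdp^2\Idp)$ and independent of the data, of the minibatches and of the selection sets $S_t$. I will condition on $S_t=S$ for a fixed $k$-subset $S$. The event $\mathcal G_t$ then involves exactly $k\tau$ i.i.d.\ Gaussian vectors, so $\Pr(\mathcal G_t^c\mid S_t=S)=1-p^{k\tau}$, where $p:=\Pr(\|\xi\|\le R)$ for a single draw $\xi\sim\N(0,\sdp^2\Idp)$. This quantity depends neither on $S$ nor on $t$. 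Averaging over the law of $S_t$ shows that $\pi$ is the same for every round.

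\emph{Single-draw tail bound.} Write $\xi=\sdp Z$ with $Z\sim\N(0,\Idp)$. The map $z\mapsto\|z\|$ is $1$-Lipschitz, so Gaussian concentration for Lipschitz functions (Tsirelson--Ibragimov--Sudakov) gives
\[
\Pr\bigl(\|Z\|\ge \E\|Z\|+s\bigr)\le e^{-s^2/2}, \qquad s\ge 0.
\]
By Jensen, $\E\|Z\|\le\sqrt{\E\|Z\|^2}=\sqrt{D}$. Taking $s=\sqrt{c_0}$ then yields
\[
\Pr\bigl(\|\xi\|>R\bigr)=\Pr\bigl(\|Z\|>\sqrt D+\sqrt{c_0}\bigr)\le e^{-c_0/2},
\]
with $R$ as in~\eqref{eq:R-def}.

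\emph{Union bound.} A union bound over the $k\tau$ pairs $(i,j)\in S\times\{0,\dots,\tau-1\}$ gives $1-p^{k\tau}\le k\tau(1-p)\le k\tau e^{-c_0/2}$, which is the claimed bound $\pi\le k\tau\,e^{-c_0/2}$.

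The only step that is not routine is the concentration inequality. It should be cited precisely, for instance as the Gaussian Lipschitz concentration inequality in Boucheron--Lugosi--Massart. An alternative is the Laurent--Massart $\chi^2$ bound $\Pr(\|Z\|^2\ge D+2\sqrt{Dx}+2x)\le e^{-x}$, which needs more care to match the constants in $R$. Either route is standard, so I expect no real obstacle.
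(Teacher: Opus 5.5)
Your proof is correct and follows the paper's argument essentially step by step. You condition on the active set to get $\Pr(\mathcal G_t\mid S_t)=\Pr(\|\xi\|\le R)^{k\tau}$, bound a single draw by Gaussian Lipschitz concentration with $\E\|Z\|\le\sqrt D$ (the paper cites the same Boucheron--Lugosi--Massart theorem), and finish with a union bound. The only difference is that the paper conditions on $(\mathcal F_t,S_t)$ rather than on $S_t$ alone, which also yields $\Pr(\mathcal G_t^c\mid\mathcal F_t)=\pi$ almost surely and the conditional independence of $S_t$ and $\mathcal G_t$ that the convergence proof uses later.
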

The margin $c_0$ trades failure probability against the gradient bound: larger
$c_0$ shrinks $\Pr(\mathcal{G}_t^c)$ but inflates $R$, and with it the effective gradient
bound $\Geff=C+R$. It is a
free parameter, held fixed in the finite-$T$ bound (Theorem~\ref{thm:finite-T})
and tuned as $c_0\asymp\log T$ for the asymptotic rate (Corollary~\ref{cor:main}).

\subsection{Main results}

Our main guarantee is a finite-horizon stationarity bound for
Algorithm~\ref{alg:dp-rfedproj}. Because \textsc{ProjAvg} and \textsc{RLAvg} induce the same
server update to first order (see Lemma~\ref{lem:server-decomp}), a single analysis
covers both, and they attain identical rates. We state the result once for the two. Here and
below, $a\lesssim b$ means $a\le c\,b$ for a universal numerical constant $c$, made explicit
in Appendix~\ref{app:proof-main}.
\begin{theorem}
\label{thm:finite-T}
 Under Assumptions~\ref{ass:prox}--\ref{ass:hetero}, for every 
$T\ge 1$ and every constant stepsize $\alpha\le\min\{\gamma/(4\Geff),\,1/(4\LM)\}$, 
\begin{align}\label{eq:finite-T-rate}
  \frac{1}{T}\sum_{t=0}^{T-1}\E\bigl[\|\grad F(x_t)\|^2\bigr]
  &\;\lesssim\;
  \frac{F(x_0)-F_\star}{(1-\pi)\,\alpha\,T}
  + \frac{\LM\,\alpha}{k}\!\left[
    \frac{n-k}{n-1}(\sighet+\sigma_{\mathrm{clip}})^2
    + \frac{1}{\tau}\!\left(\bar\rho\,\sigma_{\mathrm{st}}^2 + \dM\sdp^2\right)
  \right] \notag\\
  &\quad
  + \beta_C^2
  + A\,\alpha^2
  + \frac{\pi\,\delta_F}{(1-\pi)\,\alpha}.
\end{align}
where $\bar\rho := \frac{1}{n}\sum_{i=1}^{n}\rho_i$ and the constant $A>0$ depends on the problem constants and on $c_0$.
\end{theorem}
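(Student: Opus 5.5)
The plan is to prove a one-round descent inequality for $\E[F(x_{t+1})\mid\mathcal F_t]$ and then telescope it over $t$. Here $\mathcal F_t$ is the history up to $x_t$. The round is split on the noise-control event $\mathcal G_t$ of Definition~\ref{def:good-event}.

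\textbf{Failure event.} On $\mathcal G_t^c$, $x_{t+1}\in\M$ still holds, so I would only use $F(x_{t+1})\le F(x_t)+\delta_F$.

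\textbf{Good event: staying in the tube.} On $\mathcal G_t$, every privatised gradient obeys $\|\tilde g_{t,j}^{(i)}\|\le\Geff$ by~\eqref{eq:grad-bound-geff}. The condition $\alpha=\eta\tau\le\gamma/(4\Geff)$ then gives $\|x_{t,j}^{(i)}-x_t\|\le j\eta\Geff\le\gamma/4$. Every pre-projection point therefore lies in $U_\M(2\gamma)$, and $\Pi$ is single-valued there by Assumption~\ref{ass:prox}. I would also invoke the first-order projection estimate~\eqref{eq:proj-quad-rem}, namely $\Pi(x+v)=x+P_xv+O(\|v\|^2/\gamma)$ for $x\in\M$.

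\textbf{Linearisation of the local trajectory.} Summing the $\tau$ local steps~\eqref{eq:sgd} in the single ambient space gives
\[
x_{t,\tau}^{(i)}=x_t-\eta\sum_{j<\tau}\tilde g_{t,j}^{(i)}+e_t^{(i)},
\qquad
\|e_t^{(i)}\|\lesssim \alpha^2\Geff^2/\gamma .
\]
This is Lemma~\ref{lem:linearisation}. It uses that each $\tilde g_{t,j}^{(i)}$ is already tangent at $x_{t,j}^{(i)}$, so the linear term is exact up to the quadratic remainder.

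\textbf{Server step.} Both aggregators are linear in the submissions. For \textsc{ProjAvg} and \textsc{RLAvg} alike, Lemma~\ref{lem:server-decomp} then yields
\[
x_{t+1}-x_t=-\alpha P_{x_t}\bar g_t+\epsilon_t,
\qquad
\bar g_t:=\frac1{k\tau}\sum_{i\in S_t}\sum_{j}\tilde g_{t,j}^{(i)},
\qquad
\|\epsilon_t\|\lesssim\alpha^2\Geff^2/\gamma .
\]

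\textbf{Descent inequality on $\mathcal G_t$.} I would plug this into~\eqref{eq:descent-ineq} with $x=x_t$ and $y=x_{t+1}$. The inner product splits into three pieces.
\begin{enumerate}[label=(\alph*),nosep]
\item The main term is $-\alpha\langle\grad F(x_t),\E\bar g_t\rangle$. Here $\E\bar g_t$ differs from $\frac1n\sum_i\gradC f_i(x_t)=\grad F(x_t)+\frac1n\sum_i b_i^C(x_t)$ only through local drift. Assumption~\ref{ass:smooth}, together with $\|x_{t,j}^{(i)}-x_t\|\le\alpha\Geff$, bounds the drift by $\ell_f\alpha\Geff$; the projector change is handled similarly via $\gamma$. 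Young's inequality then gives $-\tfrac{\alpha}{2}\|\grad F\|^2+\alpha\beta_C^2+O(\alpha^3)$.
\item The remainder term $\langle\grad F,\epsilon_t\rangle$ is $O(\alpha^2)$. After dividing by $\alpha$ it is absorbed into $A\alpha^2$ once combined with the drift; if the first-order bound only gives $O(\alpha)$ here, I would refine it using the curvature cancellation in~\eqref{eq:proj-quad-rem}.
\item The quadratic term $\tfrac{\LM}{2}\alpha^2\E\|\bar g_t\|^2$ is decomposed into mean plus variance, and $\alpha\le1/(4\LM)$ absorbs the mean part into $\tfrac{\alpha}{4}\|\grad F\|^2$. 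The variance has three independent sources:
\begin{itemize}[nosep]
\item sampling $k$ of $n$ clients without replacement gives $\frac{n-k}{k(n-1)}(\sighet+\sigma_{\mathrm{clip}})^2$ by Lemma~\ref{lem:clip-hetero};
\item minibatch sampling contributes $\bar\rho\,\sigma_{\mathrm{st}}^2/(k\tau)$ by Lemma~\ref{lem:clip-var};
\item the noise contributes $\E\|P_x\xi\|^2/(k\tau)\le\dM\sdp^2/(k\tau)$ by Lemma~\ref{lem:proj-gaussian}.
\end{itemize}
\end{enumerate}

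\textbf{Main obstacle: conditioning on $\mathcal G_t$.} Conditioning must not destroy the zero mean and independence of the noise draws that the variance computation above relies on. I would argue that $\mathcal G_t$ is a product of the symmetric events $\{\|\xi\|\le R\}$. Conditionally on $\mathcal G_t$ the draws therefore remain independent and centred, and a radial-monotonicity argument gives $\E[\|P_x\xi\|^2\mid\|\xi\|\le R]\le\dM\sdp^2$. This uses the sign-flip symmetry $\xi\mapsto-\xi$, and noise-dependent iterates are handled step by step via the tower property. Because $\mathcal G_t$ is independent of $\mathcal F_t$, we also get $\E[\|\grad F(x_t)\|^2\mathbf 1_{\mathcal G_t}]=(1-\pi)\E\|\grad F(x_t)\|^2$.

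\textbf{Combining and telescoping.} Putting the two events together gives
\[
\E F(x_{t+1})\le\E F(x_t)-(1-\pi)\tfrac{\alpha}{4}\E\|\grad F(x_t)\|^2+(1-\pi)\alpha\,[\text{noise}+\beta_C^2+A\alpha^2]+\pi\delta_F .
\]
Telescoping over $t$, using $F(x_T)\ge F_\star$, and dividing by $(1-\pi)\alpha T/4$ yields~\eqref{eq:finite-T-rate}. The term $\pi\delta_F/((1-\pi)\alpha)$ comes directly from the failure event.
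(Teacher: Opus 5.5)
Your proposal is correct and follows the paper's proof step for step (tube stability, ambient linearisation of the local trajectory, a common first-order server update for \textsc{ProjAvg} and \textsc{RLAvg}, moments under the product event $\mathcal G_t$ with the truncated-Gaussian bound, the $\delta_F$ bound on bad rounds, telescoping). The one bookkeeping difference is that the paper anchors the signal, the minibatch error and the DP noise at $x_t$ already in Lemma~\ref{lem:linearisation}, so every cross term in the quadratic term vanishes exactly. Your un-anchored $\bar g_t$ instead costs an extra $O(\alpha^3)$ in (c), because $\frac{1}{k\tau}\sum_{i,j}\gradC f_i(x_{t,j}^{(i)})$ is correlated with the earlier noise and minibatch errors; this is still absorbed into $A\alpha^2$.

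Two small fixes are needed. First, the step bound~\eqref{eq:proj-step} carries a factor $2$, so $\|x_{t,j}^{(i)}-x_t\|\le 2j\eta\Geff\le\gamma/2$, which is still enough. Second, in (b) no curvature cancellation is needed: apply Young's inequality $\|\grad F(x_t)\|\,\|\epsilon_t\|\le\frac{\alpha}{16}\|\grad F(x_t)\|^2+\frac{4}{\alpha}\|\epsilon_t\|^2$, as you did for the drift, which makes this term $O(\alpha^3)$ before dividing by $\alpha$. Bounding it by $O(\alpha^2)$ and then dividing by $\alpha$ would instead leave a spurious $O(\alpha)$ term.
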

The bound holds for any $\sdp$, in particular for the one calibrated to a target
$(\varepsilon,\delta)$ at horizon $T$.
\paragraph{Active clipping.} Theorem~\ref{thm:finite-T} holds for the clipped algorithm. Convergence analyses of private optimisation often omit clipping, or assume it never activates ($G\le C$)~\citep{huang2026federated}, and its bias is
studied separately~\citep{chen2020understanding,koloskova2023revisiting}.
Here clipping enters through two explicit constants, the dispersion
$\sigma_{\mathrm{clip}}$ of the per-client biases~\eqref{eq:sigma-clip-def} and
the bias floor $\beta_C$~\eqref{eq:beta-C-def}. Both vanish when $C\ge G$, which recovers
the unclipped bound.

\paragraph{Local steps and partial participation.} Unlike prior
analyses~\citep{li2022federated,huang2026federated}, the bound holds for any number $\tau$
of local steps and any number $k\le n$ of participating clients at once, the regime
Limitation~\ref{lim:agg-reach} left open. Both are reached together through a linearisation
of the local trajectory by the projection (Section~\ref{ssec:proof-outline}, Step~2), which
avoids the curvature-dependent constants of intrinsic aggregation.

\paragraph{Intrinsic and ambient dimensions.} Two dimensions enter~\eqref{eq:finite-T-rate}
at different orders: the intrinsic manifold dimension $\dM$ sits in the leading DP-noise
term, the ambient dimension only in the remainder $A\alpha^2$ and the stepsize ceiling,
through $\Geff$~\eqref{eq:R-def}. This is a direct benefit of the design. The noise is drawn in the ambient space but each tangent projection discards its normal component, so the leading privacy cost is that of a $\dM$-dimensional problem.

Theorem~\ref{thm:finite-T} holds for any fixed margin $c_0$ and letting $c_0$ grow with the
horizon yields the asymptotic rate of Corollary~\ref{cor:main}. 
\begin{corollary}
\label{cor:main}
Under Assumptions~\ref{ass:prox}--\ref{ass:hetero}, for $T$ large enough,
\begin{equation}\label{eq:main-rate}
\frac{1}{T}\sum_{t=0}^{T-1}\E\bigl[\|\grad F(x_t)\|^2\bigr]=\;O\left(
  \frac{1}{\sqrt{T}}\!\left[
    1 + \frac{1}{k}\!\left(
      \frac{n-k}{n-1}\,(\sighet+\sigma_{\mathrm{clip}})^2
      + \frac{\bar\rho\,\sigma_{\mathrm{st}}^2 + \dM\,\sdp^2}{\tau}
    \right)
  \right]
  + \beta_C^{\,2}\right).
\end{equation}
\end{corollary}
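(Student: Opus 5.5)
The plan is to derive Corollary~\ref{cor:main} directly from Theorem~\ref{thm:finite-T}. We choose the free parameters $\alpha$ and $c_0$ as functions of $T$ and check that every term of~\eqref{eq:finite-T-rate} is dominated by the bracket in~\eqref{eq:main-rate} divided by $\sqrt T$, plus $\beta_C^2$. The natural choices are
\[
\alpha=\alpha_T:=\frac{a_0}{\sqrt T}
\qquad\text{and}\qquad
c_0=c_0(T):=2\log\bigl(k\tau T\bigr),
\]
with $a_0>0$ a fixed constant (for instance $a_0=1$). By Lemma~\ref{lem:pfail}, the second choice gives $\pi\le k\tau e^{-c_0/2}=1/T$. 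For $T\ge2$ this implies $1-\pi\ge 1/2$.

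\textbf{Step 1: the stepsize condition.} We need $\alpha_T\le\min\{\gamma/(4\Geff),1/(4\LM)\}$. Since $\Geff=C+\sdp(\sqrt D+\sqrt{c_0(T)})$ grows only like $\sqrt{\log T}$, the ceiling $\gamma/(4\Geff)$ decays like $1/\sqrt{\log T}$. This is much slower than $\alpha_T\propto T^{-1/2}$. Hence there is a $T_0$ beyond which the condition holds, and this is the meaning of ``for $T$ large enough''.

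\textbf{Step 2: bounding the terms of~\eqref{eq:finite-T-rate}.}
\begin{enumerate}[label=(\alph*),nosep]
\item The optimisation term satisfies
\[
\frac{F(x_0)-F_\star}{(1-\pi)\alpha_T T}\le \frac{2\delta_F}{a_0\sqrt T},
\]
which is absorbed into the constant ``$1$'' of the bracket.
\item The variance term $\tfrac{\LM\alpha_T}{k}[\cdots]$ is exactly $\tfrac{\LM a_0}{\sqrt T}\cdot\tfrac1k[\cdots]$. This gives the heterogeneity, clipping, stochastic and $\dM\sdp^2$ contributions with their stated factors $\tfrac{n-k}{n-1}$ and $1/\tau$.
\item The bias term $\beta_C^2$ passes through unchanged.
\item The failure term satisfies
\[
\frac{\pi\delta_F}{(1-\pi)\alpha_T}\le \frac{2\delta_F}{T}\cdot\frac{\sqrt T}{a_0}=O(T^{-1/2}).
\]
\end{enumerate}

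\textbf{Step 3: the remainder $A\alpha^2$.} This is the step I expect to be the main obstacle. The statement only says that $A$ depends on the problem constants and on $c_0$, so we need its growth in $c_0$. Tracing the remainder through the appendix should show that $A$ is polynomial in $\Geff$ (e.g.\ $A\lesssim \LM^2\Geff^2+\LM\Geff^4/\gamma^2$ or similar). It is therefore at most polylogarithmic in $T$ when $c_0\asymp\log T$. Then $A\alpha_T^2=O(\mathrm{polylog}(T)/T)$, which is $o(T^{-1/2})$ and is absorbed into the $1/\sqrt T$ term for $T$ large.

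If instead $A$ were only known to be finite for each $c_0$, I would fall back on a different choice. I would keep $c_0$ as large as needed for $\pi\le1/T$ but note that any sub-polynomial dependence of $A$ on $c_0$ suffices. The key check in the appendix is thus that $A$ has no exponential dependence on $c_0$.

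\textbf{Conclusion.} Combining Steps 1--3 gives~\eqref{eq:main-rate}, with hidden constants depending on $\delta_F$, $\LM$, $a_0$, $C$, $D$, $\sdp$ and $\gamma$, but not on $k$, $\tau$ or $\dM$ inside the bracket.
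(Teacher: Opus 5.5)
Your proposal is correct and follows the paper's own proof. The common steps are: $\alpha\asymp 1/\sqrt T$, a margin $c_0\asymp\log T$, and the stepsize ceiling met for large $T$ because $\Geff=O(\sqrt{\log T})$. Your Step~3 is settled as you anticipate by the explicit constants of Appendix~\ref{app:proof-main}, where $A_1,A_2=O(\Geff^2)$ and $A=\tfrac{16}{9}(5A_2^2+3\LM A_2\Geff)=O(\Geff^4)=O((\log T)^2)$, so $A\alpha^2=o(T^{-1/2})$.

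The only difference is your milder margin $c_0=2\log(k\tau T)$. It gives $\pi\le 1/T$ and a bad-round term of order $T^{-1/2}$, which you absorb into the leading term. The paper instead takes $c_0=4\log T+2\log(k\tau)$, so that $\pi\le T^{-2}$ and that term is $O(T^{-3/2})$. Both choices yield~\eqref{eq:main-rate}.
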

\begin{proof}
    Taking
$\alpha\asymp 1/\sqrt{T}$ collapses the optimisation and noise terms into 
$O(1/\sqrt{T})$. The choice
$c_0=4\log T+2\log(k\tau)$ then forces $\pi\le T^{-2}$ (Lemma \ref{lem:pfail}), so the bad-round term is
$O(T^{-3/2})$. This margin leaves the heterogeneity, stochastic-gradient,
DP-noise, and clipping-bias terms untouched, inflating only (through $\Geff$) the subdominant remainder $A\alpha^2$ to $o(1/\sqrt{T})$. Absorbing these into the $O(\cdot)$ gives the  rate, valid for $T$ large
enough for the stepsize constraint to be satisfied.
\end{proof}
Corollary~\ref{cor:main} keeps $\sdp$ fixed as $T$ grows. A fixed budget
$(\varepsilon,\delta)$ needs more noise for longer runs, with $\sdp\propto\sqrt{T\tau}$ in the
regime of~\eqref{eq:mu-tot-expansion}.
\paragraph{Interpretation.}
The bound splits into a vanishing part and a
$T$-independent floor $\beta_C^2$, so the iterates converge to a
$\beta_C$-neighbourhood of stationarity rather than to a stationary point unless
$C\ge G$. Term by term:
\begin{enumerate}[label=(\roman*),leftmargin=*,itemsep=3pt]
  \item \emph{Optimisation error}, $O(1/\sqrt T)$: the standard nonconvex stationarity
    rate, independent of $k$, $\tau$, and the privacy parameters.
  \item \emph{Client heterogeneity}, $(\sighet+\sigma_{\mathrm{clip}})^2$, scaled by
    $\tfrac1k\cdot\tfrac{n-k}{n-1}$: the price of partial participation. It
    combines the intrinsic gradient dissimilarity $\sighet$ (Assumption~\ref{ass:hetero})
    with the clipping-induced dispersion $\sigma_{\mathrm{clip}}$ of the per-client
    biases. The factor $\tfrac{n-k}{n-1}$ is the
    finite-population sampling correction: it vanishes at full participation
    ($k=n$) and is largest when a single client is sampled ($k=1$).
  \item \emph{Stochastic-gradient noise}, $\bar\rho\,\sigma_{\mathrm{st}}^2/(k\tau)$:
    the minibatch variance, averaged over the cohort ($1/k$) and the local
    trajectory ($1/\tau$). The factor
    $\bar\rho=\tfrac1n\sum_i\tfrac{m_i-B}{B(m_i-1)}$ is the without-replacement
    sampling factor (Lemma~\ref{lem:clip-var}); it decreases as $B$ grows and
    vanishes at full local batches ($B=m_i$), removing this term entirely.
  \item \emph{DP noise}, $\dM\,\sdp^2/(k\tau)$: the dominant privacy cost, damped by
    $1/(k\tau)$ at fixed $\sdp$. It shrinks only with $\sdp$, hence with weaker privacy. 
  \item \emph{Clipping-bias floor}, $\beta_C^2$: the systematic error from
    gradient clipping (Eq.~\eqref{eq:beta-C-def}), the only term that does not
    decay with $T$, $k$, or $\tau$. It is zero when the clip is inactive
    ($C\ge G$) and otherwise a fixed positive floor. Enlarging $C$ shrinks it but raises the noise scale needed for
    a given privacy level, exposing a clip--noise trade-off.
\end{enumerate}

\subsection{Outline of the proof of Theorem~\ref{thm:finite-T}}\label{ssec:proof-outline}
The proof has three steps, each stated as a lemma and
proved in Appendix~\ref{app:proof-main}. The new ingredient is the second step.

\paragraph{Step 1: tube stability.} On $\mathcal{G}_t$, each projection step displaces its
argument by at most $\eta\Geff$~\eqref{eq:grad-bound-geff}. Under
$\alpha\le\gamma/(4\Geff)$, every local and server pre-projection point thus stays in the
inner tube $U_\M(\gamma)$, where $\Pi$ is single-valued, and every local iterate stays
within $2\Geff\alpha$ of $x_t$ (Lemma~\ref{lem:tube-stability}).

\paragraph{Step 2: linearisation of the local trajectory.}
The exponential-map scheme of~\citet{li2022federated} and the retraction scheme
of~\citet{huang2026federated} represent each local step as a tangent vector at the
\emph{current} local iterate,
\[
  \mathrm{Log}_{x_{t,j}^{(i)}}\!\bigl(x_{t,j+1}^{(i)}\bigr)
  \;\Bigl(\text{resp. } \mathrm{R}^{-1}_{x_{t,j}^{(i)}}\!\bigl(x_{t,j+1}^{(i)}\bigr)\Bigr)
  = -\eta\,v_{t,j}^{(i)}\in T_{x_{t,j}^{(i)}}\M ,
\]
with $v_{t,j}^{(i)}$ the local update direction. Because these maps are nonlinear and
anchored at a base point that moves at every step, the $\tau$ increments live in $\tau$
different tangent spaces. Reconciling them requires transport, a curvature-dependent
argument~\citep{zhang2016riemannian} that prior analyses carry out only when a single client
participates ($k=1$).

The nearest-point projection removes this obstruction through one property: its first-order
behaviour at every base point is the tangent projector. There is a constant $\Cqr>0$,
depending only on $\gamma$ and the second-order geometry of $\M$,
such that~\citep[Lem.~4.3]{deng2025decentralized}
\begin{equation}\label{eq:proj-quad-rem}
  \Pi(x+u)=x+P_{x}\,u+w(x,u),
  \qquad \|w(x,u)\|\le\Cqr\|u\|^2,
  \qquad x\in\M,\ \|u\|\le\gamma .
\end{equation}
A local step~\eqref{eq:sgd} then moves the iterate by an ambient displacement
$x_{t,j+1}^{(i)}-x_{t,j}^{(i)}\in\mathcal{E}$, not by a tangent vector pinned to
$x_{t,j}^{(i)}$. All displacements lie in the single space $\mathcal{E}$, so the $\tau$
local steps telescope to $x_{t,\tau}^{(i)}-x_t$. Replacing each local projector
$P_{x_{t,j}^{(i)}}$ by the common $P_{x_t}$ costs $O(\eta\alpha)$ per step by Step~1, hence
$O(\alpha^2)$ over the round, with no $\tau$-dependent penalty. Each client's displacement
is thus $-\alpha\,\gradC f_i(x_t)$ plus stochastic-gradient and DP noise projected at $x_t$,
up to $O(\alpha^2)$ (Lemma~\ref{lem:linearisation}).

\paragraph{Step 3: server update and descent.}
Each submission is now an ambient displacement from the common point $x_t$, so aggregation
reduces to an arithmetic mean of displacements rather than a composition of nonlinear maps
over the sampled clients. Applying~\eqref{eq:proj-quad-rem} at the server, \textsc{ProjAvg}
and \textsc{RLAvg} give the same update to first order,
\begin{equation*}
  x_{t+1}-x_t
  = -\alpha\,\tfrac1k\textstyle\sum_{i\in S_t}\gradC f_i(x_t)
    + \zeta_t^{\mathrm{st}} + \zeta_t^{\mathrm{dp}} + O(\alpha^2),
\end{equation*}
with $\zeta_t^{\mathrm{st}}$ and $\zeta_t^{\mathrm{dp}}$ the stochastic-gradient and DP noise
averaged over the cohort and projected by $P_{x_t}$ (Lemma~\ref{lem:server-decomp}). The
rank-$\dM$ projector is what makes the DP variance scale with $\dM$ rather than
$D$. Plugging this decomposition into the ambient descent
inequality~\eqref{eq:descent-ineq}, which bypasses the exponential, logarithmic and
inverse-retraction maps, gives a one-round descent on $\mathcal{G}_t$
(Lemmas~\ref{lem:moments} and~\ref{lem:descent-good}). On the complement $\mathcal{G}_t^c$,
of probability $\pi$, the iterate stays in $\M$ and the objective rises by at most
$\delta_F$. Combining the two cases and telescoping over the $T$ rounds yields
Theorem~\ref{thm:finite-T}.

\section{Synthetic experiments}\label{sec:synth-exp}

We first illustrate our results on federated principal component analysis (PCA) on the
Grassmann manifold~\citep{nguyen2024federated}, which provides a controlled testbed with a
closed-form solution. The goal is
to find collaboratively the $p$-dimensional subspace that keeps the most variance of all
clients' records when the records are projected onto it. A subspace is a point $P$ of the
Grassmann manifold $\mathrm{Gr}(d,p)$, of dimension $\dM=p(d-p)$, represented by its
orthogonal projector in the ambient space $\mathcal E=\Sym(d)$, of dimension
$\dimE=d(d+1)/2$ (Appendix~\ref{app:grassmann}).

\subsection{Problem setting}

We consider $n$ clients, each holding $m$ records $z\in\R^d$. The records follow a spiked
covariance model~\citep{johnstone2001distribution}, in which client $i$ draws
\begin{equation}\label{eq:spiked}
  z\sim\N(0,\Sigma_i),\qquad \Sigma_i=U_i\,\mathrm{diag}(\lambda)\,U_i^\top+\frac{\nu^2}{d}I_d,
  \qquad U_i\in\mathrm{St}(d,p),
\end{equation}
then projects $z$ onto the unit ball, so that $\|z\|\le1$. The matrix $U_i$ spans the
signal subspace of client $i$, a perturbation of a $p$-dimensional subspace common to all
clients. The signal variances $\lambda\in\R^p$ and the noise energy $\nu^2$ are fixed
(Appendix~\ref{app:synth-data}).

The clients minimise
\[
  \min_{P\in\mathrm{Gr}(d,p)}\Bigl\{F(P):=\frac1n\sum_{i=1}^n\frac1m\sum_{\ell=1}^m
  f(P;z_{i,\ell})\Bigr\},\qquad f(P;z):=-\frac12z^\top Pz .
\]
Up to a constant, the loss $f(P;z)$ is the reconstruction error $\frac12\|z-Pz\|^2$. This
problem has a classical closed-form solution $F^\star=F(P^\star)$~\citep{fan1949theorem}, where $P^\star$ is the projector onto the top-$p$ eigenvectors of
$\bar\Sigma=\frac1{nm}\sum_{i,\ell}z_{i,\ell}z_{i,\ell}^\top$ (Appendix~\ref{app:synth-solution}). Comparing the output of the algorithms
with this optimum lets us check the properties of our analysis.

We run \textsc{DP-RFedProj} (Algorithm~\ref{alg:dp-rfedproj}) with Riemannian stochastic gradient descent
(RSGD), multiple local steps, partial participation and our two aggregation schemes
\textsc{ProjAvg} and \textsc{RLAvg}. The clip constant $C$ is the smallest value for which
clipping is inactive for every record in the unit ball (Appendix~\ref{app:synth-tuning}). The noise of \textsc{DP-RFedProj} is calibrated with
our accounting (Theorem~\ref{thm:gdp-subsampled}). As a baseline, we consider PriRFed~\citep{huang2026federated},
with the same RSGD local steps and \textsc{RetAvg} aggregation. PriRFed calibrates the
noise with its round composition~\citep[Thm.~7]{huang2026federated} fed with our per-round privacy profile. Each method uses its own constant local stepsize $\eta$,
tuned at each privacy budget (Appendix~\ref{app:synth-tuning}). Table~\ref{tab:synth-settings} summarises the
common settings.

\begin{table}[h]
\centering
\caption{Common settings. $k$ and $\varepsilon$ take these values when they are not varied.}
\label{tab:synth-settings}
\small
\begin{tabular}{llll}
\toprule
Clients & $n=50$ & Clients per round & $k=10$ \\
Records per client & $m=5000$ & Local steps, batch size & $\tau=10$, $B=500$ \\
Rounds & $T=200$ & Privacy & $\varepsilon=1$, $\delta=10^{-5}$ \\
\bottomrule
\end{tabular}
\end{table}

\subsection{Reported metrics}

For a privacy budget $\varepsilon$, we report the optimality gap $F(P_T)-F^\star$ of the last
iterate $P_T$ of Algorithm~\ref{alg:dp-rfedproj}, with noise calibrated to $\varepsilon$. Since
$F^\star$ is known in closed form, this gap is exact and easier to read than the squared
gradient norm, and near the optimum the two quantities behave alike. Let $P_T^0$ be the last
iterate of the same run without noise. The cost of privacy is the difference of the two gaps, $F(P_T)-F(P_T^0)$.

\subsection{Analysis}

Our convergence bound (Theorem~\ref{thm:finite-T}) charges privacy through a term proportional to
$\eta\sdp^2\dM/k$. We check this dependence empirically, by examining separately the role
of (i)~the manifold dimension against the ambient dimension, (ii)~the number $k$ of
clients per round, and (iii)~the accounting. For (iii), we also run \textsc{DP-RFedProj}
with \textsc{RetAvg}, PriRFed's aggregation, so that the two methods differ only in the
accounting and in their tuned stepsize.

\begin{figure}[!t]
\centering
\providecolor{C0}{HTML}{1B6CA8}
\providecolor{C1}{HTML}{E1701A}
\providecolor{C2}{HTML}{2C8B57}
\providecolor{C3}{HTML}{A02C2C}
\providecolor{C7}{HTML}{444444}
\begin{tikzpicture}
\begin{groupplot}[
  group style={group size=2 by 2, horizontal sep=1.6cm, vertical sep=1.9cm},
  width=0.47\linewidth, height=4.4cm,
  tick label style={font=\scriptsize}, label style={font=\small},
  title style={font=\small},
  grid=major, grid style={black!10},
  legend style={font=\small, draw=none, legend columns=-1, column sep=1.2ex},
  legend cell align=left,
]
\nextgroupplot[title={(a) $D=820$, $d_{\mathcal M}$ varies}, xlabel={manifold dimension $d_{\mathcal M}$}, ylabel={cost of privacy}, ymin=0.0001, ymax=0.01, xtick={39,76,144,256}, xticklabels={39,76,144,256}, minor xtick={}, xmode=log, ymode=log, legend to name=synthlegend]
\addplot+[C0, thick, mark=*, mark size=1.6pt, forget plot, error bars/.cd, y dir=both, y explicit] coordinates {
  (37.44,0.000124039) += (0,4.92653e-06) -= (0,1.53099e-05)
  (72.96,0.00024342) += (0,2.36486e-05) -= (0,1.309e-05)
  (138.24,0.000417959) += (0,3.83074e-05) -= (0,3.62941e-05)
  (245.76,0.000755688) += (0,4.77513e-05) -= (0,2.50718e-05)
};
\addplot+[C2, thick, mark=square, mark size=1.8pt, forget plot, error bars/.cd, y dir=both, y explicit] coordinates {
  (40.56,0.000123412) += (0,4.91248e-06) -= (0,1.52002e-05)
  (79.04,0.000242144) += (0,2.34674e-05) -= (0,1.31834e-05)
  (149.76,0.00041563) += (0,3.7797e-05) -= (0,3.61347e-05)
  (266.24,0.000751608) += (0,4.70867e-05) -= (0,2.51254e-05)
};
\addplot[C7, thick, dashed, forget plot] coordinates {
  (39,0.000457475)
  (76,0.00089149)
  (144,0.00168914)
  (256,0.00300291)
};
\addlegendimage{C0, thick, mark=*, mark size=1.6pt}
\addlegendentry{ProjAvg}
\addlegendimage{C2, thick, mark=square, mark size=1.8pt}
\addlegendentry{RLAvg}
\addlegendimage{C1, thick, mark=diamond*, mark size=2.2pt}
\addlegendentry{RetAvg}
\addlegendimage{C3, thick, mark=triangle*, mark size=2pt}
\addlegendentry{PriRFed}
\addlegendimage{C7, thick, dashed}
\addlegendentry{$\eta\,\sigma_{\mathrm{dp}}^2 d_{\mathcal M}/k$}
\nextgroupplot[title={(b) $d_{\mathcal M}=144$, $D$ varies}, xlabel={ambient dimension $D$}, ylabel={cost of privacy}, ymin=0.0001, ymax=0.01, xtick={351,820,2775,10585}, xticklabels={351,820,2775,10585}, minor xtick={}, xmode=log, ymode=log]
\addplot+[C0, thick, mark=*, mark size=1.6pt, forget plot, error bars/.cd, y dir=both, y explicit] coordinates {
  (336.96,0.000402334) += (0,5.35258e-05) -= (0,1.40025e-05)
  (787.2,0.000417959) += (0,3.83074e-05) -= (0,3.62941e-05)
  (2664,0.00041983) += (0,2.06084e-05) -= (0,2.21346e-05)
  (10161.6,0.000393981) += (0,4.12858e-05) -= (0,2.74711e-05)
};
\addplot+[C2, thick, mark=square, mark size=1.8pt, forget plot, error bars/.cd, y dir=both, y explicit] coordinates {
  (365.04,0.000400748) += (0,5.33876e-05) -= (0,1.38319e-05)
  (852.8,0.00041563) += (0,3.7797e-05) -= (0,3.61347e-05)
  (2886,0.000416128) += (0,1.98075e-05) -= (0,2.23756e-05)
  (11008.4,0.000386516) += (0,3.96991e-05) -= (0,2.77251e-05)
};
\addplot[C7, thick, dashed, forget plot] coordinates {
  (351,0.00168914)
  (820,0.00168914)
  (2775,0.00168914)
  (10585,0.00168914)
};
\nextgroupplot[title={(c) clients per round}, xlabel={$k$ (out of $n=50$)}, ylabel={cost of privacy}, ymin=1e-05, ymax=0.1, xtick={1,2,5,10,25,50}, xticklabels={1,2,5,10,25,50}, minor xtick={}, xmode=log, ymode=log]
\addplot+[C0, thick, mark=*, mark size=1.6pt, forget plot, error bars/.cd, y dir=both, y explicit] coordinates {
  (0.96,0.00453728) += (0,0.000344605) -= (0,0.0003548)
  (1.92,0.00211968) += (0,0.000289343) -= (0,0.00016692)
  (4.8,0.000839038) += (0,0.000120357) -= (0,3.70259e-05)
  (9.6,0.000417959) += (0,3.83074e-05) -= (0,3.62941e-05)
  (24,0.000170884) += (0,1.54873e-05) -= (0,2.11385e-05)
  (48,8.47599e-05) += (0,7.41346e-06) -= (0,4.03776e-06)
};
\addplot+[C2, thick, mark=square, mark size=1.8pt, forget plot, error bars/.cd, y dir=both, y explicit] coordinates {
  (1.04,0.00449061) += (0,0.000342667) -= (0,0.000354635)
  (2.08,0.00210322) += (0,0.00028613) -= (0,0.000166802)
  (5.2,0.00083384) += (0,0.000119762) -= (0,3.6751e-05)
  (10.4,0.00041563) += (0,3.7797e-05) -= (0,3.61347e-05)
  (26,0.000169938) += (0,1.54484e-05) -= (0,2.08716e-05)
  (52,8.43702e-05) += (0,7.34383e-06) -= (0,4.05537e-06)
};
\addplot[C7, thick, dashed, forget plot] coordinates {
  (1,0.0168914)
  (2,0.00844569)
  (5,0.00337828)
  (10,0.00168914)
  (25,0.000675655)
  (50,0.000337828)
};
\nextgroupplot[title={(d) DP-RFedProj vs PriRFed}, xlabel={$\varepsilon$}, ylabel={optimality gap}, ymin=1e-05, ymax=0.1, xtick={0.5,1,2,5,10}, xticklabels={0.5,1,2,5,10}, minor xtick={}, xmode=log, ymode=log]
\addplot+[C0, thick, mark=*, mark size=1.6pt, forget plot, error bars/.cd, y dir=both, y explicit] coordinates {
  (0.47,0.00150521) += (0,7.60481e-05) -= (0,0.00011772)
  (0.94,0.000442605) += (0,4.49597e-05) -= (0,3.88946e-05)
  (1.88,0.000146742) += (0,2.84337e-05) -= (0,1.20247e-05)
  (4.7,4.87641e-05) += (0,1.81683e-05) -= (0,2.89947e-06)
  (9.4,2.9942e-05) += (0,8.25788e-06) -= (0,1.69174e-06)
};
\addplot+[C2, thick, mark=square, mark size=1.8pt, forget plot, error bars/.cd, y dir=both, y explicit] coordinates {
  (0.5,0.00147775) += (0,7.39112e-05) -= (0,0.000113871)
  (1,0.000440274) += (0,4.4441e-05) -= (0,3.87311e-05)
  (2,0.000146697) += (0,2.8112e-05) -= (0,1.22102e-05)
  (5,4.87354e-05) += (0,1.8166e-05) -= (0,2.8957e-06)
  (10,2.99309e-05) += (0,8.35827e-06) -= (0,1.68964e-06)
};
\addplot+[C1, thick, mark=diamond*, mark size=2.2pt, forget plot, error bars/.cd, y dir=both, y explicit] coordinates {
  (0.53,0.00153313) += (0,7.54996e-05) -= (0,0.000124573)
  (1.06,0.000444862) += (0,4.52218e-05) -= (0,3.95667e-05)
  (2.12,0.000146808) += (0,2.84578e-05) -= (0,1.18836e-05)
  (5.3,4.87871e-05) += (0,1.81753e-05) -= (0,2.89805e-06)
  (10.6,2.99526e-05) += (0,8.21985e-06) -= (0,1.69281e-06)
};
\addplot+[C3, thick, mark=triangle*, mark size=2pt, forget plot, error bars/.cd, y dir=both, y explicit] coordinates {
  (0.5,0.0313886) += (0,0.00123298) -= (0,0.000968755)
  (1,0.0076654) += (0,0.00315445) -= (0,0.000826118)
  (2,0.00392106) += (0,3.95631e-05) -= (0,0.000364305)
  (5,0.000804543) += (0,6.07689e-05) -= (0,6.5501e-05)
  (10,0.000289711) += (0,3.74729e-05) -= (0,2.59744e-05)
};
\end{groupplot}
\path (group c1r2.outer south) -- (group c2r2.outer south)
  node[midway, anchor=north, yshift=-3pt] {\pgfplotslegendfromname{synthlegend}};
\end{tikzpicture}
\caption{Effects of manifold dimension against ambient dimension, client participation and accounting. Median and interquartile range over 10 seeds. Markers are shifted slightly where values coincide.}
\label{fig:synth}
\end{figure}

\paragraph{Manifold dimension.} In Figure~\ref{fig:synth}(a), the ambient dimension is fixed and the
cost of privacy grows with $\dM$, parallel to $\eta\sdp^2\dM/k$. In
Figure~\ref{fig:synth}(b), the manifold dimension is fixed and the cost stays flat while $\dimE$ grows. In this experiment, the cost of privacy depends on the manifold dimension, and the
ambient dimension has no visible effect. \textsc{ProjAvg} and \textsc{RLAvg} give the same
cost, within $2\,\%$.

\paragraph{Clients per round.} Figure~\ref{fig:synth}(c) shows that the cost of privacy
follows the $1/k$ dependence of the bound. It falls by a factor of about 53 from $k=1$ to full
participation, $k=50$, for both aggregation schemes.

\paragraph{Accounting.} At equal $(\varepsilon,\delta)$, PriRFed's accounting needs more
than 5 times as much noise as ours. At $\varepsilon=1$, it sets $\sdp=0.131$, against
$0.024$ for ours. Under the same aggregation, \textsc{RetAvg}, PriRFed's optimality gap is
about 10 to 27 times larger than that of \textsc{DP-RFedProj} (Figure~\ref{fig:synth}(d)). With
our accounting, the three aggregation schemes reach the same optimality gap, within
$4\,\%$. In this experiment, the tightness of the accounting drives the accuracy, and the
aggregation scheme plays a minor role. This favours the computationally cheapest scheme, \textsc{ProjAvg}.

\section{EEG experiments}\label{sec:num-exps}

We evaluate the framework on real EEG data, through motor-imagery classification, a setting in which data
confidentiality motivates private federated learning and in which geometry is central:
spatial covariance matrices are highly effective EEG
descriptors~\citep{barachant2012multiclass}, motivating methods that respect their SPD
structure. We compare SPDNet~\citep{huang2017riemannian}, a lightweight Riemannian architecture operating on covariance matrices, with EEGNet~\citep{lawhern2018eegnet}, a Euclidean convolutional network operating on raw signals. EEGNet's parameters are Euclidean, so it is trained by Algorithm~\ref{alg:dp-rfedproj} with $\M=\mathcal{E}$, which reduces to standard record-level private federated averaging in Euclidean space (Section~\ref{ssec:scope}).

Each EEG record is a labelled trial $z = (X, y)$ with $y \in \{1, \dots, n_{\mathrm{cl}}\}$. Client $i$ holds the $m_i$ trials of a fixed set of subjects and never shares them. For each model, the goal is to solve collaboratively
\begin{equation}\label{eq:fl-objective-exp}
  \min_{x \in \M} \Bigl\{\, F(x) := \frac{1}{n}\sum_{i=1}^{n} f_i(x) \,\Bigr\},
  \qquad
  f_i(x) := \frac{1}{m_i}\sum_{\ell=1}^{m_i} f(x; z_{i,\ell}),
  \qquad
  f(x; z) = -\log\bigl[h_x(z)\bigr]_y ,
\end{equation}
where $h_x(z)$ is the vector of class probabilities the model with parameters $x$ assigns
to $z$, making $f$ the cross-entropy loss. The two model architectures differ only in the space of learnable parameters $\M$. For SPDNet, $\M$ is the product of a Stiefel manifold and a Euclidean space, on which
$\Pi$ acts factorwise.

Sections~\ref{ssec:centralized} and~\ref{ssec:fl} describe the centralized baselines and
the federated protocol; Sections~\ref{ssec:dp} to~\ref{ssec:prirfed} report the
privacy--utility trade-off, the aggregation rules and the comparison with PriRFed.

\subsection{Datasets and preprocessing}\label{ssec:datasets}

We use three datasets from the MOABB benchmark~\citep{chevallier2024largest}, summarized in
Table~\ref{tab:datasets}. In order to span different regimes, we consider low to medium electrode count, subject size and classes.

\begin{table}[h]
\centering
\caption{The three EEG motor-imagery datasets, with the BiMap dimension $d$ and the ReEig
floor $\varepsilon_{\mathrm{R}}$ of SPDNet (Section~\ref{ssec:architectures}).}
\label{tab:datasets}
\small
\begin{tabular}{lcccccccc}
\toprule
& & & & & & & \multicolumn{2}{c}{SPDNet} \\
\cmidrule(lr){8-9}
Dataset & $n_{\text{subj}}$ & $n_{\text{chan}}$ & $n_{\text{cl}}$
        & Native S.R.\ (Hz) & $\mathcal{T}$ & Trials/class & $d$ & $\varepsilon_{\mathrm{R}}$ \\
\midrule
BNCI2014\_001 & 9  & 22 & 4 & 250 & 513 & 144 & 16 & $8\cdot 10^{-5}$ \\
Weibo2014     & 10 & 60 & 7 & 200 & 513 & 80  & 20 & $2\cdot 10^{-5}$ \\
Cho2017       & 52 & 64 & 2 & 512 & 384 & 100 & 16 & $6\cdot 10^{-5}$ \\
\bottomrule
\end{tabular}
\end{table}

All signals are band-pass filtered to $[8, 32]$\,Hz and resampled from their native rate to
$f_s = 128$\,Hz, leaving $\mathcal{T}$ time samples per trial. Trial counts are per subject
and within a dataset subjects contribute comparable numbers. In particular, clients hold
comparable amounts of data. Each trial is then normalized on its own, either with per-channel
standardization for EEGNet or unit-trace covariance for SPDNet, which removes the recording
gain and keeps every trial independent of the rest.
\subsection{Model architectures}\label{ssec:architectures}

\paragraph{EEGNet.} EEGNet~\citep{lawhern2018eegnet} is a compact convolutional network
operating directly on raw trials (see Fig.~\ref{fig:eegnet} in Appendix~\ref{app:experiments}). Its first block applies a
temporal convolution, then a depthwise spatial convolution that learns electrode
combinations for each temporal filter; its second block refines the result with a
separable convolution. Each block ends with a batch normalization, an ELU nonlinearity,
average pooling and dropout, and a linear layer maps the flattened features to class
scores. Batch normalization in training mode is incompatible with record-level privacy
(Appendix~\ref{app:bn-pb}), so we remove it in every run. EEGNet below denotes this variant.

\paragraph{SPDNet.} SPDNet~\citep{huang2017riemannian} operates on the per-trial sample
covariance $\Sigma_j = (\mathcal{T}-1)^{-1} \bar{X}_j \bar{X}_j^{\top} \in
\mathcal{S}^{++}_{n_{\mathrm{chan}}}$, with $\bar{X}_j$ the centered trial (see Fig.~\ref{fig:spdnet} in Appendix~\ref{app:experiments}). A BiMap layer $\Sigma \mapsto W^{\top} \Sigma W$, with
$W \in \mathrm{St}(n_{\mathrm{chan}}, d)$, reduces dimension while preserving SPD
structure; a ReEig layer rectifies eigenvalues below a floor $\varepsilon_{\mathrm{R}}$; a
LogEig layer $U \Lambda U^{\top} \mapsto U \log(\Lambda) U^{\top}$ maps to the tangent
space at the identity; and a softmax classifier acts on the $d(d+1)/2$ half-vectorized
features. 

Two hyperparameters are specific to SPDNet: the BiMap dimension $d$ and the ReEig floor
$\varepsilon_{\mathrm{R}}$. The dimension $d$ is chosen so that SPDNet has fewer parameters
than EEGNet on every dataset (Table~\ref{tab:fed-setting}). This rules out capacity as the source of any advantage of
SPDNet below. The floor $\varepsilon_{\mathrm{R}}$ is the $15$th percentile of the
trace-normalized covariance spectra of the first three subjects of each dataset.  Both values are listed in
Table~\ref{tab:datasets}.

\paragraph{Training and evaluation protocol.}
The trials of each dataset are split once $75/10/15$ into training, validation and test
sets, stratified by class. Each client keeps the trials of its own subjects, so the
centralized and federated runs of a seed use the same split. 
Both models are trained with Adam, with a cosine schedule from $\eta_0$ to $\eta_0/100$
over $T$ epochs (centralized) or rounds (federated). For SPDNet, each Adam step is followed
by the polar projection $\Pi$ of the BiMap weight. The hyperparameters $(\eta_0, B, T)$ are selected on validation accuracy, with
$T$ fixed to its non-private value under privacy (Appendix~\ref{app:grids}). We report the
final model on the test set, pooling the test trials of all subjects, as a mean over $10$ seeds that each redraw the split and the
initialization.

\subsection{Centralized non-private baselines}\label{ssec:centralized}
We first establish a centralized baseline without privacy, representing unrestricted data
sharing: all subjects' recordings are pooled and trained on under the settings of
Table~\ref{tab:config-central} (Appendix~\ref{app:grids}). These runs upper-bound what the
federated and private runs can reach. SPDNet leads on two of the
three datasets, and EEGNet leads on Cho2017 (Table~\ref{tab:centralized-results}).

\begin{table}[t]
\centering
\caption{Centralized non-private test accuracy (\%), mean $\pm$ standard deviation over 10 seeds, last epoch. Best per dataset in bold.}
\label{tab:centralized-results}
\small
\begin{tabular}{lcrr}
\toprule
Dataset & Chance & SPDNet & EEGNet \\
\midrule
BNCI2014\_001 & $25.0$ & $\mathbf{56.5 \pm 1.8}$ & $53.7 \pm 1.6$ \\
Cho2017 & $50.0$ & $63.9 \pm 1.0$ & $\mathbf{67.4 \pm 1.2}$ \\
Weibo2014 & $14.3$ & $\mathbf{49.7 \pm 1.0}$ & $44.2 \pm 1.9$ \\
\bottomrule
\end{tabular}
\end{table}

\subsection{Federated protocol}\label{ssec:fl}

\paragraph{Setup.}
Each client holds the recordings of a fixed group of subjects: three subjects per client on
BNCI2014\_001, hence $n=3$ clients, and two per client on Weibo2014 and Cho2017, hence
$n=5$ and $n=26$ clients (Table~\ref{tab:fed-setting} in Appendix~\ref{app:experiments}).
Both models keep their centralized architecture (Table~\ref{tab:datasets}). Each client runs
one local epoch per round. The privacy accounting charges every client
$\tau=\lceil m_{\max}/B\rceil$ local steps, those of the largest client, with $q=B/m_\star$ from
the smallest one (Appendix~\ref{app:dp-settings}). The clipping threshold is fixed a priori
at $C=1$~\citep{de2022unlocking}.

\paragraph{Aggregation.} EEGNet's parameters are all Euclidean and averaged arithmetically
(FedAvg). For SPDNet, the Stiefel weight of the BiMap layer is
aggregated by one of \textsc{ProjAvg}, \textsc{RLAvg} or \textsc{RetAvg}, its Euclidean classifier weights
arithmetically in every case. \textsc{RLAvg} uses the settings selected for
\textsc{ProjAvg}, and \textsc{RetAvg} the same batch size with its own learning rate.

\paragraph{Participation.} We study full participation and half participation, in which
$\lceil n/2\rceil$ clients are sampled per round (2 of 3, 3 of 5 and 13 of 26). At
full participation, we run EEGNet and SPDNet with each of the three Riemannian
aggregation rules. At half participation, we run EEGNet and SPDNet with
\textsc{ProjAvg} alone, with the settings of full participation, including $T$.

\paragraph{Privacy budgets.} We report a non-private run and three budgets,
$\varepsilon \in \{20, 10, 5\}$ at $\delta = 10^{-5}$, for both architectures.
These budgets are large because each client holds few records (Table~\ref{tab:fed-setting}). They serve
as a proof of concept to compare architectures and aggregation rules, not as tight guarantees. For each budget,
$\sdp$ is the smallest noise scale that meets the target $\varepsilon$ under a numerical privacy-loss-distribution accountant that returns a certified upper bound for Theorem~\ref{thm:gdp-subsampled}.
The reported $\varepsilon$ does not include the selection of
the hyperparameters on the data. This cost is often ignored in
practice, although non-private tuning can leak information~\citep{papernot2022hyperparameter}.
\subsection{Privacy--utility trade-off}\label{ssec:dp}
Table~\ref{tab:final}  reports the results and Figure~\ref{fig:curves} the learning curves. Without privacy, SPDNet leads on BNCI2014\_001 and Weibo2014 and EEGNet on Cho2017 as in the centralized runs. Under privacy, SPDNet leads in
every cell. With \textsc{ProjAvg}, it keeps $10$ to $61\%$ of its non-private accuracy above chance, against at
most $18\%$ for EEGNet.
\begin{table}[t]
\centering
\caption{Federated test accuracy (\%) at full client participation, mean $\pm$ standard deviation over 10 seeds, last round, $\delta = 10^{-5}$. Best per column within a dataset in bold.}
\label{tab:final}
\small
\begin{tabular}{llrrrr}
\toprule
& & & \multicolumn{3}{c}{Private} \\
\cmidrule(lr){4-6}
Dataset & Arm & Non-private & $\varepsilon = 20$ & $\varepsilon = 10$ & $\varepsilon = 5$ \\
\midrule
\multirow{4}{*}{\shortstack[l]{BNCI2014\_001\\\footnotesize chance $25.0$}}
  & SPDNet \textsc{ProjAvg} & $55.1 \pm 1.8$ & $43.3 \pm 1.9$ & $39.0 \pm 2.2$ & $35.7 \pm 2.0$ \\
  & SPDNet \textsc{RLAvg} & $\mathbf{55.2 \pm 1.6}$ & $\mathbf{43.9 \pm 1.6}$ & $39.4 \pm 1.0$ & $\mathbf{35.9 \pm 1.4}$ \\
  & SPDNet \textsc{RetAvg} & $54.7 \pm 2.3$ & $43.3 \pm 0.8$ & $\mathbf{41.3 \pm 2.2}$ & $34.9 \pm 0.9$ \\
  & EEGNet & $53.2 \pm 1.9$ & $30.0 \pm 2.2$ & $28.6 \pm 1.2$ & $27.4 \pm 2.1$ \\
\midrule
\multirow{4}{*}{\shortstack[l]{Cho2017\\\footnotesize chance $50.0$}}
  & SPDNet \textsc{ProjAvg} & $62.9 \pm 0.8$ & $55.6 \pm 1.5$ & $\mathbf{53.9 \pm 1.3}$ & $52.0 \pm 1.2$ \\
  & SPDNet \textsc{RLAvg} & $62.9 \pm 0.8$ & $\mathbf{55.7 \pm 0.9}$ & $53.4 \pm 1.7$ & $\mathbf{52.1 \pm 1.1}$ \\
  & SPDNet \textsc{RetAvg} & $62.9 \pm 0.7$ & $54.1 \pm 1.9$ & $52.8 \pm 1.6$ & $51.8 \pm 1.0$ \\
  & EEGNet & $\mathbf{66.6 \pm 0.7}$ & $50.0 \pm 0.0$ & $50.1 \pm 0.7$ & $50.1 \pm 0.9$ \\
\midrule
\multirow{4}{*}{\shortstack[l]{Weibo2014\\\footnotesize chance $14.3$}}
  & SPDNet \textsc{ProjAvg} & $46.9 \pm 1.9$ & $23.6 \pm 2.0$ & $20.8 \pm 1.7$ & $\mathbf{17.5 \pm 2.0}$ \\
  & SPDNet \textsc{RLAvg} & $46.9 \pm 1.9$ & $23.8 \pm 2.0$ & $21.2 \pm 2.2$ & $17.4 \pm 1.5$ \\
  & SPDNet \textsc{RetAvg} & $\mathbf{47.0 \pm 1.9}$ & $\mathbf{24.6 \pm 1.8}$ & $\mathbf{21.5 \pm 1.9}$ & $17.1 \pm 1.0$ \\
  & EEGNet & $38.4 \pm 1.4$ & $15.6 \pm 1.6$ & $14.8 \pm 0.9$ & $14.7 \pm 1.1$ \\
\bottomrule
\end{tabular}
\end{table}

SPDNet keeps this advantage under partial participation (Table~\ref{tab:final-half}). With
$\lceil n/2\rceil$ clients sampled per round, it still leads EEGNet on every dataset and at
every budget. Partial participation costs SPDNet at most $2.4$ points under privacy, since
each round averages the noise of fewer clients, and at most $0.5$ point without privacy.

\begin{table}[h]
\centering
\caption{Federated test accuracy (\%) at half client participation, mean $\pm$ standard deviation over 10 seeds, last round, $\delta = 10^{-5}$. Learning rate, batch size and $T$ are those of full participation. Best per column within a dataset in bold.}
\label{tab:final-half}
\small
\begin{tabular}{llrrrr}
\toprule
& & & \multicolumn{3}{c}{Private} \\
\cmidrule(lr){4-6}
Dataset & Arm & Non-private & $\varepsilon = 20$ & $\varepsilon = 10$ & $\varepsilon = 5$ \\
\midrule
\multirow{2}{*}{\shortstack[l]{BNCI2014\_001\\\footnotesize chance $25.0$}}
  & SPDNet \textsc{ProjAvg} & $\mathbf{54.7 \pm 1.6}$ & $\mathbf{41.5 \pm 2.0}$ & $\mathbf{37.8 \pm 2.1}$ & $\mathbf{35.0 \pm 1.3}$ \\
  & EEGNet  & $52.8 \pm 0.9$ & $28.5 \pm 2.1$ & $27.9 \pm 2.0$ & $25.3 \pm 1.5$ \\
\midrule
\multirow{2}{*}{\shortstack[l]{Cho2017\\\footnotesize chance $50.0$}}
  & SPDNet \textsc{ProjAvg} & $62.7 \pm 0.9$ & $\mathbf{54.6 \pm 1.7}$ & $\mathbf{52.4 \pm 1.9}$ & $\mathbf{51.8 \pm 1.6}$ \\
  & EEGNet  & $\mathbf{67.1 \pm 0.7}$ & $49.9 \pm 0.3$ & $50.4 \pm 0.5$ & $50.3 \pm 0.6$ \\
\midrule
\multirow{2}{*}{\shortstack[l]{Weibo2014\\\footnotesize chance $14.3$}}
  & SPDNet \textsc{ProjAvg} & $\mathbf{47.1 \pm 2.5}$ & $\mathbf{21.2 \pm 1.6}$ & $\mathbf{19.3 \pm 1.8}$ & $\mathbf{17.7 \pm 1.3}$ \\
  & EEGNet  & $38.1 \pm 2.0$ & $15.7 \pm 1.1$ & $14.8 \pm 1.3$ & $14.9 \pm 1.0$ \\
\bottomrule
\end{tabular}
\end{table}

\providecolor{C2}{HTML}{2E8B57}
\providecolor{C4}{HTML}{6A4C93}
\providecommand{\curveheight}{3.3cm}
\providecommand{\curverow}[2]{\pgfplotsset{every axis/.append style={ymin=#1, ymax=#2}}}
\providecommand{\curvenoylabel}{\pgfplotsset{every axis post/.append style={ylabel={}}}}
\providecommand{\curverowlabel}[1]{%
  \begin{minipage}[c]{0.035\linewidth}\centering\rotatebox{90}{\small #1}\end{minipage}}
\begin{figure}[h]
\centering
\edef\curveaxiswidth{\the\dimexpr0.25\textwidth\relax}
\providecommand{\curveaxisheight}{2.5cm}
\pgfplotsset{every axis post/.append style={scale only axis, width=\curveaxiswidth,
  height=\curveaxisheight}}%
\begin{minipage}{0.035\linewidth}\hfill\end{minipage}%
\begin{minipage}{0.335\linewidth}\centering\small Non-private\end{minipage}%
\begin{minipage}{0.305\linewidth}\centering\small SPDNet, private\end{minipage}%
\begin{minipage}{0.305\linewidth}\centering\small EEGNet, private\end{minipage}

\curverowlabel{BNCI2014\_001}%
\begin{minipage}[c]{0.335\linewidth}\centering
  \input{curves/curve_BNCI2014_001_nonprivate}\end{minipage}%
{\curverow{23}{45}\curvenoylabel%
\begin{minipage}[c]{0.305\linewidth}\centering
  \input{curves/curve_BNCI2014_001_spdnet}\end{minipage}%
\begin{minipage}[c]{0.305\linewidth}\centering
  \input{curves/curve_BNCI2014_001_eegnet_nobn}\end{minipage}}

\medskip
\curverowlabel{Weibo2014}%
\begin{minipage}[c]{0.335\linewidth}\centering
  \input{curves/curve_Weibo2014_nonprivate}\end{minipage}%
{\curverow{12}{26}\curvenoylabel%
\begin{minipage}[c]{0.305\linewidth}\centering
  \input{curves/curve_Weibo2014_spdnet}\end{minipage}%
\begin{minipage}[c]{0.305\linewidth}\centering
  \input{curves/curve_Weibo2014_eegnet_nobn}\end{minipage}}

\medskip
\curverowlabel{Cho2017}%
\begin{minipage}[c]{0.335\linewidth}\centering
  \input{curves/curve_Cho2017_nonprivate}\end{minipage}%
{\curverow{48}{57}\curvenoylabel%
\begin{minipage}[c]{0.305\linewidth}\centering
  \input{curves/curve_Cho2017_spdnet}\end{minipage}%
\begin{minipage}[c]{0.305\linewidth}\centering
  \input{curves/curve_Cho2017_eegnet_nobn}\end{minipage}}

\medskip
\begin{tikzpicture}
  \begin{axis}[hide axis, width=2cm, height=2cm, xmin=0, xmax=1, ymin=0, ymax=1,
    legend columns=6, legend cell align=left,
    legend style={font=\scriptsize, draw=none, fill=none, column sep=1.2ex}]
    \addlegendimage{C4, thick}\addlegendentry{SPDNet}
    \addlegendimage{C2, thick}\addlegendentry{EEGNet (no BN)}
    \addlegendimage{C0, thick}\addlegendentry{$\varepsilon = 20$}
    \addlegendimage{C1, thick}\addlegendentry{$\varepsilon = 10$}
    \addlegendimage{C3, thick}\addlegendentry{$\varepsilon = 5$}
    \addlegendimage{black!40, dotted}\addlegendentry{chance}
  \end{axis}
\end{tikzpicture}
\caption{Test accuracy against communication round at full client participation, mean over
10 seeds with a band spanning $\pm 1$ standard deviation. One row per dataset. Left:
non-private runs of both models. Middle and right: private runs of SPDNet
(\textsc{ProjAvg}) and EEGNet, with different $y$-axis graduations between non-private and private curves.}
\label{fig:curves}
\end{figure}

At a fixed budget, a longer run needs more noise, since $\sdp$ grows roughly as $\sqrt{T\tau}$ (Section~\ref{ssec:reading}). On BNCI2014\_001 and Weibo2014, halving or doubling $T$ with $\sdp$ recalibrated changes the accuracy of SPDNet by at most 2 points, within one standard deviation, and that of EEGNet by less than one point (Appendix~\ref{app:T-ablation}).
 
\subsection{Riemannian aggregation rules}\label{ssec:agg-results}
The three Riemannian rules reach the same accuracy but differ in cost. They land within
$2.3$ points of one another in every cell of Table~\ref{tab:final}, and the two
projection-based rules within $0.6$ point, below the seed-to-seed standard deviation.
Computationally, Table~~\ref{tab:aggtime} in Appendix~\ref{app:aggtime} shows that \textsc{RetAvg} costs $9$ to $18$ times more per round than \textsc{ProjAvg}. The projection-based rules thus give the same
accuracy at a fraction of the cost.

\subsection{Comparison with PriRFed}\label{ssec:prirfed}
On a compact submanifold with the induced metric, PriRFed and \textsc{DP-RFedProj} privatise
each gradient with the same mechanism. With the same rule \textsc{RetAvg} and same optimiser, they differ only in how they account for the whole
run. We fix $(C,B,q,\tau,T)$ and calibrate $\sdp$ under each accountant. PriRFed gets its
best case: its composition receives our per-round privacy profile instead of its published
closed form (Appendix~\ref{app:dp-settings}).

PriRFed needs five to six times as much noise (Figure~\ref{fig:prirfed-ratio}), and this noise costs accuracy. With the same rule, \textsc{RetAvg}, and the same
hyperparameters, SPDNet trained at PriRFed's noise loses $12.7$, $3.7$ and $8.2$ points on
BNCI2014\_001, Cho2017 and Weibo2014, and ends within $6$ points of chance.
\begin{figure}[t]
\centering
\begin{subfigure}[b]{0.5\linewidth}
\centering
\footnotesize
\setlength{\tabcolsep}{3pt}
\begin{tabular}{lrrr}
\toprule
& \multicolumn{2}{c}{Ours} & \\
\cmidrule(lr){2-3}
Dataset & \textsc{ProjAvg} & \textsc{RetAvg} & PriRFed \\
\midrule
BNCI2014\_001 ($25.0$) & $\mathbf{43.3 \pm 1.9}$ & $\mathbf{43.3 \pm 0.8}$ & $30.6 \pm 2.4$ \\
Cho2017 ($50.0$)       & $\mathbf{55.6 \pm 1.5}$ & $54.1 \pm 1.9$ & $50.4 \pm 1.2$ \\
Weibo2014 ($14.3$)     & $23.6 \pm 2.0$ & $\mathbf{24.6 \pm 1.8}$ & $16.4 \pm 1.8$ \\
\bottomrule
\end{tabular}
\vspace{14pt}
\caption{Accuracy at $\varepsilon=20$}
\label{fig:prirfed-acc}
\end{subfigure}\hfill
\begin{subfigure}[b]{0.45\linewidth}
\centering
\begin{tikzpicture}
\begin{axis}[width=\linewidth, height=4.2cm, xmode=log, xmin=0.42, xmax=24,
  xtick={0.5,1,2,5,10,20}, xticklabels={0.5,1,2,5,10,20},
  ymin=5, ymax=6, ytick={5,5.5,6},
  xlabel={target $\varepsilon$},
  ylabel={$\sigma_{\mathrm{PriRFed}}/\sigma_{\mathrm{ours}}$},
  tick label style={font=\scriptsize}, label style={font=\scriptsize},
  grid=major, grid style={black!10},
  legend style={font=\tiny, at={(0.5,1.02)}, anchor=south, legend columns=3,
    draw=none, column sep=0.6ex},
  legend cell align=left]
\addplot[C0, thick, mark=*, mark size=1.2pt] coordinates {(0.5,5.509) (1,5.454) (2,5.439) (3,5.441) (5,5.457) (7.5,5.482) (10,5.525) (15,5.556) (20,5.571)};
\addlegendentry{BNCI2014\_001}
\addplot[C1, thick, mark=square*, mark size=1.2pt] coordinates {(0.5,5.712) (1,5.636) (2,5.624) (3,5.635) (5,5.690) (7.5,5.734) (10,5.774) (15,5.815) (20,5.885)};
\addlegendentry{Cho2017}
\addplot[C2, thick, mark=triangle*, mark size=1.2pt] coordinates {(0.5,5.593) (1,5.527) (2,5.473) (3,5.457) (5,5.384) (7.5,5.331) (10,5.281) (15,5.215) (20,5.151)};
\addlegendentry{Weibo2014}
\end{axis}
\end{tikzpicture}
\caption{Noise ratio across budgets}
\label{fig:prirfed-ratio}
\end{subfigure}
\caption{Comparison with PriRFed on SPDNet, $\delta=10^{-5}$. (a) Test accuracy (\%) at
$\varepsilon=20$ over 10 seeds, chance in parentheses. PriRFed is \textsc{RetAvg} at the noise
of its own accounting. (b) Ratio
$\sigma_{\mathrm{PriRFed}}/\sigma_{\mathrm{ours}}$ of the noise levels against the target
$\varepsilon$, for the mechanism of (a).}
\label{fig:prirfed}
\end{figure}

\section{Conclusion}\label{sec:conclusion}
We introduced \textsc{DP-RFedProj}, a record-level differentially private federated learning
algorithm for compact submanifolds. Its privacy guarantee comes from privatising gradients in
a fixed ambient geometry, so it holds for any manifold, aggregator and local optimiser. Its
convergence analysis comes from aggregating through closed-form projections, and covers
multiple local steps, partial participation and heterogeneous clients together. On federated
PCA, the experiments match the theory: the cost of privacy grows with the intrinsic dimension
and decreases with the number of clients per round. On EEG motor-imagery classification,
SPDNet outperforms EEGNet at every privacy budget, even on the dataset where EEGNet leads
without privacy. Our accounting needs five to six times less noise than the closest prior
method, and projection-based aggregation reaches the accuracy of retraction-based aggregation
at a fraction of the cost.

Three limitations bound these results. The framework requires a compact submanifold under the
Euclidean-induced metric. This excludes iterate-dependent metrics without an ambient
projection, such as the affine-invariant metric on SPD matrices. The convergence rate is proved
for Riemannian SGD at a constant stepsize, while the EEG experiments use Adam with a schedule.
The accounting charges the full stream of privatised gradients rather than the submissions the
server observes. This keeps it optimiser-agnostic, but leaves room for tighter,
optimiser-specific analyses.

Several directions extend this work. Subject-level privacy is a natural next step for EEG: a
subject records strongly correlated trials, so protecting one trial bounds what the server
learns about that trial, not about the subject. Personalisation, where each client keeps part
of the model local~\citep{pautrel2026personalized}, raises the question of how the shared and
local parts split the privacy budget. Finally, batch normalisation, Euclidean or
Riemannian~\citep{brooks2019riemannian}, breaks the per-record sensitivity bound. Privatising
its statistics, or using per-sample normalisations compatible with differential
privacy~\citep{ponomareva2023dpfy}, could recover accuracy without weakening the guarantee.

\section*{Broader Impact Statement}
This work aims to make collaborative training on sensitive records, such as EEG recordings,
safer. Its guarantee holds only under the stated threat model: an honest-but-curious server
and record-level protection. Privacy also costs accuracy, so the budget should be chosen
for the protection it affords, not for the accuracy it permits.

\bibliographystyle{tmlr}
\bibliography{references}
\clearpage
\appendix
\section{The Stiefel, Grassmann and SPD manifolds}\label{app:manifolds}

\subsection{The Stiefel manifold}\label{app:stiefel}

The Stiefel manifold $\mathrm{St}(d,p) = \{X \in \R^{d\times p} : X^{\top} X = I_p\}$,
$p \le d$, is the set of matrices with orthonormal columns, a compact embedded submanifold
of $\R^{d \times p}$ of dimension $dp - \tfrac{p(p+1)}{2}$. It reduces to the orthogonal
group when $p = d$ and to the unit sphere when $p = 1$. Compactness follows from
$\|X\|_F^2 = p$.

The tangent space at $X$ is
\begin{equation}\label{eq:stiefel-tangent}
  T_X\mathrm{St}(d,p) = \{\xi \in \R^{d\times p} : X^{\top}\xi + \xi^{\top}X = 0\},
\end{equation}
that is, the matrices whose $X$-component is skew-symmetric. Equipping
$\mathrm{St}(d,p)$ with the metric induced by the ambient Frobenius inner product
$\langle \xi, \eta \rangle = \operatorname{tr}(\xi^{\top}\eta)$, the orthogonal projector
onto \eqref{eq:stiefel-tangent} is
\begin{equation}\label{eq:stiefel-orth-proj}
  P_X(\xi) = \xi - X\,\mathrm{sym}(X^{\top}\xi),
  \qquad \mathrm{sym}(A) := \tfrac{1}{2}(A + A^{\top}),
\end{equation}
and the Riemannian gradient of a smooth $f$ is $\mathrm{grad} f(X) = P_X(\nabla f(X))$
with $\nabla f$ the ambient gradient.

For $A \in \R^{d\times p}$ of full column rank, two retractions are standard. The polar
retraction uses the orthogonal polar factor
\begin{equation}\label{eq:stiefel-proj}
  \Pi(A) = A(A^{\top}A)^{-1/2} = UV^{\top},
  \qquad A = USV^{\top} \ \text{(thin SVD)},
\end{equation}
which is also the nearest-point projection onto $\mathrm{St}(d,p)$, unique and smooth on
that set~\citep{absil2012projection,bouchard2025beyond}. The QR retraction instead takes the
$Q$ factor of a thin QR decomposition, which has the same complexity with a smaller constant
and is numerically more stable, but is not the nearest-point map. Both are idempotent on $\mathrm{St}(d,p)$ and define
projection-based retractions $\mathrm{R}_X(\xi) = \Pi(X + \xi)$. We use the polar factor
throughout, since it is the projection our aggregation rules require.

The Riemannian logarithm on $\mathrm{St}(d,p)$ requires an
iterative algorithm~\citep{zimmermann2022computing}, and so does the inverse polar
retraction, which solves the Sylvester equation $MS + SM^{\top} = 2I_p$ with
$M = X^{\top}Y$ at $O(dp^2 + p^3)$ per iteration. That solution exists only when
$M + M^{\top}$ is positive definite, so the inverse retraction is moreover defined only in
a neighbourhood of $X$.

\subsection{The Grassmann manifold}\label{app:grassmann}

We represent the Grassmann manifold $\mathrm{Gr}(d,p)$ of $p$-dimensional subspaces of
$\R^d$ by the orthogonal projectors onto them
\begin{equation}\label{eq:grassmann-projector-model}
  \mathrm{Gr}(d,p) \;\cong\; \{P \in \mathcal{S}_d : P^2 = P,\ \operatorname{tr} P = p\}.
\end{equation}
It is a compact embedded submanifold of the symmetric matrices $\mathcal{S}_d$ of dimension
$p(d-p)$.

Under the metric induced by the ambient Frobenius inner product, the tangent space at $P$ is
\begin{equation}\label{eq:grassmann-tangent}
  T_P\mathrm{Gr}(d,p) = \{\xi \in \mathcal{S}_d : P\xi P = 0
    \ \text{and}\ (I-P)\xi(I-P) = 0\},
\end{equation}
that is, the symmetric matrices supported off the two diagonal blocks, and the orthogonal
projector onto \eqref{eq:grassmann-tangent} keeps exactly those blocks,
\begin{equation}\label{eq:grassmann-orth-proj}
  \mathrm{Proj}_P(\xi) = P\xi(I-P) + (I-P)\xi P .
\end{equation}
For $A \in \mathcal{S}_d$ with eigenvalues $\lambda_1 \ge \dots \ge \lambda_d$ and
associated orthonormal eigenvectors $u_1,\dots,u_d$, the nearest-point projection is the
spectral projector onto a top-$p$ eigenspace,
\begin{equation}\label{eq:grassmann-proj}
  \Pi(A) = \sum_{i=1}^{p} u_i u_i^{\top},
\end{equation}
unique when $\lambda_p > \lambda_{p+1}$, at the cost of one symmetric eigendecomposition.
\subsection{The manifold of SPD matrices}\label{app:spd}

Let $\mathcal{S}_n$ denote the space of symmetric $n \times n$ matrices. The set
\begin{equation}
  \mathcal{S}^{++}_n = \{\Sigma \in \mathcal{S}_n : x^{\top}\Sigma x > 0
  \ \text{for all } x \in \R^n \setminus \{0\}\}
\end{equation}
of symmetric positive definite matrices is an open convex cone in $\mathcal{S}_n$, hence a
smooth manifold of dimension $\tfrac{n(n+1)}{2}$ whose tangent space at every $\Sigma$ is
$\mathcal{S}_n$ itself. It is not compact. Two metrics are standard.

\paragraph{Affine-invariant.} With
$\langle \xi, \eta \rangle_\Sigma = \operatorname{tr}(\Sigma^{-1}\xi\,\Sigma^{-1}\eta)$ the
manifold is complete with non-positive curvature, and
\begin{equation}
  \mathrm{Exp}_\Sigma(\xi) =
  \Sigma^{1/2}\exp\bigl(\Sigma^{-1/2}\xi\Sigma^{-1/2}\bigr)\Sigma^{1/2}.
\end{equation}
The metric is invariant under $\Sigma \mapsto G\Sigma G^{\top}$ for any invertible $G$,
which for covariance descriptors means invariance to linear mixing of the sensors. This property makes it the standard choice in EEG classification.

\paragraph{Log-Euclidean.} Pulling the Euclidean metric of $\mathcal{S}_n$ back through the
matrix logarithm gives
\begin{equation}
  \langle \xi, \eta \rangle_\Sigma
  = \bigl\langle D_\Sigma\!\log(\xi),\, D_\Sigma\!\log(\eta) \bigr\rangle_F ,
\end{equation}
with $D_\Sigma\!\log$ the differential of the matrix logarithm at $\Sigma$. Under this
metric $\log$ is an isometry onto $\mathcal{S}_n$, so the manifold is flat and all
operations reduce to Euclidean ones in the logarithmic domain. It is cheaper than the
affine-invariant metric and invariant to orthogonal transformations and scaling, but not to
general linear mixing.

\section{Proofs for the convergence analysis}\label{app:conv}

\subsection{Geometric preliminaries}\label{app:geom-prelims}
Let $\M$ be a compact smooth embedded $\dM$-dimensional submanifold of $\mathcal{E}$, with
$\dim(\mathcal{E})=D$, and let it be $2\gamma$-proximally smooth
(Assumption~\ref{ass:prox}). In finite dimension, this notion matches the proximal
smoothness of~\citet{clarke1995proximal}. We use two regularity
properties~\citep{clarke1995proximal,deng2025decentralized}.

\paragraph{Step-to-projection bound.} $\Pi$ is $2$-Lipschitz on
$\overline{U_\M(\gamma)}$~\citep[Thm.~4.8]{clarke1995proximal}. Since $\Pi(x)=x$ for
$x\in\M$, for every $x\in\M$ and $u\in\mathcal{E}$ with $\|u\|\le\gamma$,
\begin{equation}\label{eq:proj-step}
  \|\Pi(x+u)-x\| \le 2\|u\|.
\end{equation}

\paragraph{Lipschitz tangent projectors.} The map $x\mapsto P_x$ is smooth on the compact
$\M$, hence Lipschitz~\citep{deng2025decentralized}: there is $\Ldp>0$, depending only on
$\M$, such that
\begin{equation}\label{eq:Px-lip}
  \opnorm{P_x - P_y} \le \Ldp\,\|x-y\|,
  \qquad x,y\in\M.
\end{equation}

\paragraph{Clipping.} The clipping operator~\eqref{eq:clip-generic} is the metric projection
onto the ball $\{v\in\mathcal{E}:\|v\|\le C\}$. Hence, for all $u,v\in\mathcal{E}$,
\begin{equation}\label{eq:clip-props}
  \|\clip_C(v)\| \le C,
  \qquad
  \|\clip_C(u) - \clip_C(v)\| \le \|u - v\| .
\end{equation}
Since $\clip_C$ only rescales its argument, $\gradC f_i(x)\in T_x\M$ and
$\hgradC f_i(x;\mathcal{B})\in T_x\M$, and both have norm at most $C$.

\subsection{Auxiliary lemmas}\label{app:aux-lemmas}

\begin{lemma}[Clipped stochastic gradient]\label{lem:clip-var}
  Under Assumption~\ref{ass:sgd}, for every $i\in[n]$ with $m_i\ge2$ and every $x\in\M$, the minibatch
  estimator~\eqref{eq:clipped-mb-grad} with $\mathcal{B}$ drawn uniformly without
  replacement satisfies $\E_{\mathcal B}[\hgradC f_i(x;\mathcal B)]=\gradC f_i(x)$ and
  \begin{equation}\label{eq:clip-var-bound}
    \E_{\mathcal B}\bigl\|\hgradC f_i(x;\mathcal B) - \gradC f_i(x)\bigr\|^2
    \le \rho_i\,\sigma_{\mathrm{st}}^2,
    \qquad \rho_i := \frac{m_i - B}{B(m_i - 1)} .
  \end{equation}
\end{lemma}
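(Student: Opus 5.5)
The plan is to treat the minibatch estimator as a simple-random-sampling mean of a fixed finite population of vectors and to reuse the classical finite-population variance identity. Fix $i$ and $x\in\M$, and write $a_\ell:=\clip_C(\grad f(x;z_{i,\ell}))\in T_x\M\subset\mathcal{E}$ for $\ell\in[m_i]$. Let $\bar a:=\frac1{m_i}\sum_\ell a_\ell=\gradC f_i(x)$. Since $\grad f(x;z)=P_x\nabla f(x;z)$, the estimator~\eqref{eq:clipped-mb-grad} is $\frac1B\sum_{\ell\in\mathcal B}a_\ell$. For unbiasedness, note that each index belongs to a uniform $B$-subset with probability $B/m_i$. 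By linearity, $\E_{\mathcal B}\bigl[\frac1B\sum_{\ell\in\mathcal B}a_\ell\bigr]=\frac1B\cdot\frac{B}{m_i}\sum_\ell a_\ell=\bar a$.

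For the variance, introduce the indicators $\chi_\ell=\mathbbm{1}[\ell\in\mathcal B]$ and expand
\[
\E\Bigl\|\tfrac1B\textstyle\sum_\ell \chi_\ell(a_\ell-\bar a)\Bigr\|^2=\tfrac1{B^2}\textstyle\sum_{\ell,\ell'}\E[\chi_\ell\chi_{\ell'}]\langle a_\ell-\bar a,a_{\ell'}-\bar a\rangle .
\]
We use $\E[\chi_\ell]=B/m_i$ and, for $\ell\neq\ell'$, $\E[\chi_\ell\chi_{\ell'}]=\frac{B(B-1)}{m_i(m_i-1)}$. The centring $\sum_\ell(a_\ell-\bar a)=0$ gives $\sum_{\ell\ne\ell'}\langle\cdot,\cdot\rangle=-\sum_\ell\|a_\ell-\bar a\|^2$. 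Combining these yields the standard identity
\[
\E_{\mathcal B}\bigl\|\hgradC f_i(x;\mathcal B)-\gradC f_i(x)\bigr\|^2=\rho_i\,\tfrac1{m_i}\textstyle\sum_\ell\|a_\ell-\bar a\|^2,
\]
with $\rho_i=\frac{m_i-B}{B(m_i-1)}$. This uses $m_i\ge2$ to divide by $m_i-1$; the case $B=m_i$ gives zero consistently.

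It remains to bound the clipped population variance by $\sigma_{\mathrm{st}}^2$. This is the only step where clipping matters, and I expect it to be the main point to get right. The argument uses two facts. First, the mean minimises the average squared distance, so
\[
\tfrac1{m_i}\sum_\ell\|a_\ell-\bar a\|^2\le\tfrac1{m_i}\sum_\ell\|a_\ell-\clip_C(\grad f_i(x))\|^2 .
\]
Second, $\clip_C$ is $1$-Lipschitz by~\eqref{eq:clip-props}, so each term satisfies $\|\clip_C(\grad f(x;z_{i,\ell}))-\clip_C(\grad f_i(x))\|\le\|\grad f(x;z_{i,\ell})-\grad f_i(x)\|$. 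Averaging and applying Assumption~\ref{ass:sgd}, i.e.~\eqref{eq:within-var}, gives a bound of $\sigma_{\mathrm{st}}^2$, which concludes~\eqref{eq:clip-var-bound}. The subtlety is that the clipped mean $\bar a$ is not the clip of the unclipped mean; the variational characterisation of the mean sidesteps this, so no bias term appears.
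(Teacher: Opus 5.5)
Your proposal is correct and follows the paper's own proof. Unbiasedness comes from the inclusion probability $B/m_i$; the variance is the finite-population identity $\rho_i\cdot\frac{1}{m_i}\sum_\ell\|a_\ell-\bar a\|^2$; and that spread is bounded by comparing against $\clip_C(\grad f_i(x))$, using the minimising property of the mean and the $1$-Lipschitz clip. The only difference is that you derive the sampling-without-replacement variance identity by hand with indicators, whereas the paper cites it (Cochran, Thm.~2.2).
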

\begin{proof}
Write $c_\ell:=\clip_C(\grad f(x;z_{i,\ell}))$, so that $\gradC f_i(x)$ is the mean of the
$m_i$ vectors $c_\ell$ and $\hgradC f_i(x;\mathcal B)$ their mean over $\mathcal{B}$. Each
index is drawn with probability $B/m_i$, which gives unbiasedness. The variance of a sample
mean drawn without replacement~\citep[Thm.~2.2]{cochran1977sampling} is
$\rho_i V_i$, with $V_i:=\frac1{m_i}\sum_\ell\|c_\ell-\gradC f_i(x)\|^2$. The mean minimises
the mean squared deviation, so evaluating at $\clip_C(\grad f_i(x))$ instead and
using~\eqref{eq:clip-props} and Assumption~\ref{ass:sgd},
\[
  V_i\le\frac1{m_i}\sum_{\ell=1}^{m_i}\bigl\|c_\ell-\clip_C(\grad f_i(x))\bigr\|^2
  \le\frac1{m_i}\sum_{\ell=1}^{m_i}\bigl\|\grad f(x;z_{i,\ell})-\grad f_i(x)\bigr\|^2
  \le\sigma_{\mathrm{st}}^2 .\qedhere
\]
\end{proof}

Write $\bar b^C:=\frac1n\sum_{i}b_i^C$ for the mean clipping bias and
$\overline{\gradC f}:=\frac1n\sum_{i}\gradC f_i=\grad F+\bar b^C$ for the mean clipped
gradient, so that $\beta_C=\sup_{x\in\M}\|\bar b^C(x)\|$.

\begin{lemma}[Clipped-gradient heterogeneity]\label{lem:clip-hetero}
  Under Assumption~\ref{ass:hetero}, for every $x\in\M$,
  \begin{equation}\label{eq:clip-hetero-bound}
    \frac{1}{n}\sum_{i=1}^{n}\bigl\|\gradC f_i(x) - \overline{\gradC f}(x)\bigr\|^2
    \le (\sighet + \sigma_{\mathrm{clip}})^2.
  \end{equation}
\end{lemma}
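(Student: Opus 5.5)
The plan is to split each clipped client gradient into its unclipped part and its clipping bias, and then apply Minkowski's inequality in the space of $n$-tuples of vectors under the averaged norm. Fix $x\in\M$. By definition $\gradC f_i(x)=\grad f_i(x)+b_i^C(x)$, and averaging gives $\overline{\gradC f}(x)=\grad F(x)+\bar b^C(x)$. Subtracting these two identities yields, for every $i$,
\[
  \gradC f_i(x)-\overline{\gradC f}(x)
  =\bigl(\grad f_i(x)-\grad F(x)\bigr)+\bigl(b_i^C(x)-\bar b^C(x)\bigr).
\]

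Next, we equip the product space $\mathcal{E}^n$ with the norm $\|(v_1,\dots,v_n)\|_\star:=\bigl(\frac1n\sum_i\|v_i\|^2\bigr)^{1/2}$. This is a Euclidean norm, being a scaled $\ell_2$ norm of the stacked vector, so the triangle inequality holds for it. Applying the triangle inequality to the decomposition above gives
\[
  \Bigl(\tfrac1n\textstyle\sum_i\|\gradC f_i(x)-\overline{\gradC f}(x)\|^2\Bigr)^{1/2}
  \le\Bigl(\tfrac1n\textstyle\sum_i\|\grad f_i(x)-\grad F(x)\|^2\Bigr)^{1/2}
  +\Bigl(\tfrac1n\textstyle\sum_i\|b_i^C(x)-\bar b^C(x)\|^2\Bigr)^{1/2}.
\]

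We then bound each term on the right separately. Assumption~\ref{ass:hetero} bounds the first term by $\sighet$. The definition~\eqref{eq:sigma-clip-def} bounds the second term by $\sigma_{\mathrm{clip}}$, since the supremum over $x$ dominates the value at the fixed $x$. Squaring both sides then gives~\eqref{eq:clip-hetero-bound}.

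This argument has no real obstacle. The only point needing care is to use Minkowski's inequality rather than the cruder bound $(a+b)^2\le2a^2+2b^2$. The cruder bound would lose a factor of two, whereas the statement asks for the sharp form $(\sighet+\sigma_{\mathrm{clip}})^2$.
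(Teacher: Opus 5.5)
Your proof is correct and follows the paper's argument. You use the same decomposition $(\grad f_i-\grad F)+(b_i^C-\bar b^C)$; where you apply Minkowski in the averaged product norm, the paper applies Young's inequality $\|a+b\|^2\le(1+\lambda)\|a\|^2+(1+\lambda^{-1})\|b\|^2$, averages over $i$ and takes the infimum over $\lambda>0$, which is the squared form of the same triangle inequality and gives the same bound $(\sighet+\sigma_{\mathrm{clip}})^2$.
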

\begin{proof}
At fixed $x$,
$\gradC f_i-\overline{\gradC f}=(\grad f_i-\grad F)+(b_i^C-\bar b^C)$. Young's inequality
$\|a+b\|^2\le(1+\lambda)\|a\|^2+(1+\lambda^{-1})\|b\|^2$, averaged over $i$, gives the bound
$(1+\lambda)\sighet^2+(1+\lambda^{-1})\sigma_{\mathrm{clip}}^2$ by
Assumption~\ref{ass:hetero} and~\eqref{eq:sigma-clip-def}. Its infimum over $\lambda>0$ is
$(\sighet+\sigma_{\mathrm{clip}})^2$.
\end{proof}

\begin{lemma}[Projected Gaussian under norm truncation]\label{lem:projected-noise}
  Let $\xi \sim \N(0, s^2 \Idp)$ on $\mathcal{E}$ and let $P$ be an orthogonal projector of
  rank $\dM\le D$. Then $\E\|P\xi\|^2 = \dM s^2$ and, for every $R>0$,
  $\E\bigl[\|P\xi\|^2 \bigm| \|\xi\|\le R\bigr] \le \dM s^2$.
\end{lemma}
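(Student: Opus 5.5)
The unconditional identity comes first. I diagonalise the projector: choose an orthonormal basis $(e_1,\dots,e_D)$ of $\mathcal{E}$ with $P e_a=e_a$ for $a\le\dM$ and $Pe_a=0$ otherwise. Write $\xi=\sum_a \xi_a e_a$. The coordinates $\xi_a=\langle\xi,e_a\rangle$ are i.i.d.\ $\N(0,s^2)$ by rotation invariance of the isotropic Gaussian, exactly as in the proof of Lemma~\ref{lem:proj-gaussian}. Then $\|P\xi\|^2=\sum_{a\le\dM}\xi_a^2$, whose expectation is $\dM s^2$.

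For the truncated bound, symmetry across coordinates does the work. The event $E_R:=\{\|\xi\|\le R\}=\{\sum_{a=1}^D\xi_a^2\le R^2\}$ is invariant under permutations of the coordinates, and the joint law of $(\xi_1,\dots,\xi_D)$ is exchangeable. So the conditional law given $E_R$ is also exchangeable, and $\E[\xi_a^2\mid E_R]$ takes a common value $v_R$ for every $a$. Summing over all $D$ coordinates gives $D\,v_R=\E[\|\xi\|^2\mid E_R]\le R^2$. More usefully, I will compare $\E[\|\xi\|^2\mid E_R]$ with $\E\|\xi\|^2=Ds^2$. Here $\Pr(E_R)>0$ for every $R>0$, so the conditioning is well defined. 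This yields $\E[\|P\xi\|^2\mid E_R]=\dM v_R=\tfrac{\dM}{D}\E[\|\xi\|^2\mid E_R]$.

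The remaining step, which I expect to be the only real content, is to show $\E[\|\xi\|^2\mid \|\xi\|\le R]\le \E\|\xi\|^2=Ds^2$. Conditioning a nonnegative random variable $Y=\|\xi\|^2$ on the lower set $\{Y\le R^2\}$ cannot increase its mean. One way to see this is that $Y$ and the indicator $\mathbbm{1}\{Y\le R^2\}$ are oppositely monotone functions of $Y$, hence negatively correlated by Chebyshev's association inequality: $\E[Y\mathbbm{1}_{E_R}]\le\E[Y]\Pr(E_R)$. A direct alternative is the split $\E[Y]=\E[Y\mid E_R]\Pr(E_R)+\E[Y\mid E_R^c](1-\Pr(E_R))$, using $\E[Y\mid E_R^c]\ge R^2\ge\E[Y\mid E_R]$, which makes $\E[Y]$ a convex combination of two numbers with the conditional mean the smaller one. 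Combining the steps gives $\E[\|P\xi\|^2\mid E_R]\le \tfrac{\dM}{D}\cdot Ds^2=\dM s^2$. The degenerate case $\Pr(E_R^c)=0$ cannot occur for a Gaussian, so it needs no separate treatment.
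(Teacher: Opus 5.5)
Your proof is correct, and its skeleton matches the paper's. Both reduce the truncated moment to $\frac{\dM}{D}\,\E[\|\xi\|^2\mid\|\xi\|\le R]$. Both then use that conditioning the nonnegative variable $\|\xi\|^2$ on a lower set cannot raise its mean; the paper invokes exactly the negative correlation of $\nu^2$ and $\mathbf{1}\{\nu\le R\}$.

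The difference is in how the factor $\dM/D$ is extracted. The paper writes $\xi=\nu U$ with $\nu=\|\xi\|$ and uses that the direction $U$ is uniform on the sphere and independent of $\nu$. The truncation then acts only on $\nu$, and $\E\|PU\|^2=\dM/D$ factors out. The unconditional identity comes for free as the case $R=\infty$. You instead work in an eigenbasis of $P$ and use only two facts: the coordinates are i.i.d., and the event $\{\|\xi\|\le R\}$ is permutation-invariant. Hence all conditional second moments $\E[\xi_a^2\mid E_R]$ coincide.

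Your route needs a weaker symmetry, exchangeability rather than radius--direction independence, and is fully elementary. The price is a separate computation for the unconditional claim, so the paper's version is shorter. Your convex-combination alternative for the last step is also valid and avoids citing Chebyshev's association inequality.
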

\begin{proof}
Write $\xi=\nu U$ with $\nu:=\|\xi\|$. By rotational invariance, $U$ is uniform on the unit
sphere and independent of $\nu$, and $\E\|PU\|^2=\dM/D$. Hence
$\E[\|P\xi\|^2\mid\nu\le R]=\frac{\dM}{D}\E[\nu^2\mid\nu\le R]\le\frac{\dM}{D}\E[\nu^2]
=\dM s^2$. The inequality holds because $\nu^2$ is nondecreasing and
$\mathbf{1}\{\nu\le R\}$ nonincreasing in $\nu$, so they are negatively correlated. The case
$R=\infty$ gives the first claim.
\end{proof}

\subsection{Probabilistic setting}\label{app:notation}
By definition of $\Pi$~\eqref{eq:nearest-p-proj}, every iterate lies in $\M$, even when a
pre-projection point leaves the tube. We regard the minibatches and the noise of round $t$
as drawn for every client $i\in[n]$, and used only if $i\in S_t$. Let $\mathcal{F}_t$ be the
$\sigma$-algebra generated by all the randomness of rounds $0,\dots,t-1$. It fixes $x_t$,
and leaves $S_t$, the minibatches and the noise of round $t$ random. We write
$\E_t[\cdot]:=\E[\cdot\mid\mathcal{F}_t]$ and
$\EtG[\cdot]:=\E[\cdot\mid\mathcal{F}_t,\mathcal{G}_t]$, with $\mathcal{G}_t$ the event of
Definition~\ref{def:good-event}.

\subsection{Proof of Lemma~\ref{lem:pfail}}\label{ssec:proof-pfail}
Fix $t$. Given $\mathcal{F}_t$ and $S_t$, the $k\tau$ noise vectors
$\xi_{t,j}^{(i)}$, $(i,j)\in S_t\times\{0,\dots,\tau-1\}$, are i.i.d.\ $\N(0,\sdp^2\Idp)$, so
$\Pr(\mathcal{G}_t\mid\mathcal{F}_t,S_t)=\Pr(\|\xi\|\le R)^{k\tau}$ with
$\xi\sim\N(0,\sdp^2\Idp)$. This constant depends neither on $t$ nor on $S_t$. For one draw,
$\|\xi\|/\sdp$ has the law of $\|W\|$ with $W\sim\N(0,\Idp)$. The norm is $1$-Lipschitz and
$\E\|W\|\le\sqrt D$, so Gaussian concentration~\citep[Thm.~5.6]{boucheron2013concentration}
gives $\Pr(\|W\|\ge\sqrt D+\sqrt{c_0})\le e^{-c_0/2}$, that is
$\Pr(\|\xi\|>R)\le e^{-c_0/2}$ by~\eqref{eq:R-def}. A union bound over the $k\tau$ draws gives
$\pi\le k\tau e^{-c_0/2}$. In particular, $\Pr(\mathcal{G}_t^c\mid\mathcal{F}_t)=\pi$ almost
surely, and $S_t$ is independent of $\mathcal{G}_t$ given $\mathcal{F}_t$.

\subsection{Proof of Theorem~\ref{thm:finite-T}}\label{app:proof-main}

For a selected client $i\in S_t$ and a local step $j$, the
minibatch error and its anchored version are
\begin{equation}\label{eq:delta-zeta}
  \delta_{t,j}^{(i)}:=\hgradC f_i\bigl(x_{t,j}^{(i)};\mathcal{B}_{t,j}^{(i)}\bigr)
    -\gradC f_i\bigl(x_{t,j}^{(i)}\bigr)\in T_{x_{t,j}^{(i)}}\M,
  \qquad
  \zeta_{t,j}^{(i)}:=P_{x_t}\delta_{t,j}^{(i)} .
\end{equation}
At the server, we write
\begin{equation}\label{eq:server-notation}
  h_t:=\frac1k\sum_{i\in S_t}\gradC f_i(x_t),\qquad
  \zeta_t^{\mathrm{st}}:=-\frac{\eta}{k}\sum_{i\in S_t}\sum_{j=0}^{\tau-1}\zeta_{t,j}^{(i)},\qquad
  \zeta_t^{\mathrm{dp}}:=-\frac{\eta}{k}\sum_{i\in S_t}\sum_{j=0}^{\tau-1}P_{x_t}\xi_{t,j}^{(i)},
\end{equation}
and $y_t:=\frac1k\sum_{i\in S_t}x_{t,\tau}^{(i)}$ for the mean of the returned iterates. By
\eqref{eq:projavg} and~\eqref{eq:rlavg}, the server sets $x_{t+1}=\Pi(y_t)$ under
\textsc{ProjAvg} and $x_{t+1}=\Pi(x_t+P_{x_t}(y_t-x_t))$ under \textsc{RLAvg}.

The proof follows the three steps of Section~\ref{ssec:proof-outline}. Steps~1 to~3 hold on
$\mathcal{G}_t$. The last part handles the complement and sums over rounds.

\subsubsection*{Step 1: tube stability}
\begin{lemma}[Tube stability]\label{lem:tube-stability}
  Let $\alpha\le\gamma/(4\Geff)$. On $\mathcal{G}_t$, for every $i\in S_t$:
  \begin{enumerate}[label=(\alph*),leftmargin=*]
    \item $\|x_{t,j}^{(i)} - x_t\| \le 2\Geff\eta j\le\gamma/2$ for $j=0,\dots,\tau$;
    \item every local pre-projection point $x_{t,j}^{(i)}-\eta\tilde g_{t,j}^{(i)}$ lies in
      $U_\M(\gamma)$;
    \item $\|P_{x_t}(y_t-x_t)\|\le\|y_t-x_t\|\le 2\Geff\alpha\le\gamma/2$, so both server
      pre-projection points lie in $U_\M(\gamma)$.
  \end{enumerate}
  In particular every projection in round $t$ is single-valued.
\end{lemma}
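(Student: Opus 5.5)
The argument is an induction on the local step index $j$ for part (a). Parts (b) and (c) then follow directly, using only the gradient bound \eqref{eq:grad-bound-geff} on $\mathcal{G}_t$, the step-to-projection bound \eqref{eq:proj-step}, and $\eta j\le\eta\tau=\alpha\le\gamma/(4\Geff)$.

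\emph{Induction for (a).} The base case $j=0$ is trivial, since $x_{t,0}^{(i)}=x_t$. Suppose (a) holds at $j<\tau$. On $\mathcal{G}_t$ we have $\|\eta\tilde g_{t,j}^{(i)}\|\le\eta\Geff\le\alpha\Geff\le\gamma/4\le\gamma$. Hence \eqref{eq:proj-step} applies at the base point $x_{t,j}^{(i)}\in\M$ with $u=-\eta\tilde g_{t,j}^{(i)}$, giving
\[
\|x_{t,j+1}^{(i)}-x_{t,j}^{(i)}\|\le 2\eta\Geff .
\]
The triangle inequality then yields $\|x_{t,j+1}^{(i)}-x_t\|\le 2\Geff\eta(j+1)$. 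Finally $2\Geff\eta(j+1)\le 2\Geff\alpha\le\gamma/2$.

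\emph{Part (b).} The pre-projection point is $x_{t,j}^{(i)}+u$ with $x_{t,j}^{(i)}\in\M$ and $\|u\|\le\gamma/4<\gamma$. Therefore $\mathrm{dist}(x_{t,j}^{(i)}+u,\M)\le\|u\|<\gamma$, so the point lies in $U_\M(\gamma)$. This is in fact the same fact that justified invoking \eqref{eq:proj-step} in the inductive step. Since $U_\M(\gamma)\subset U_\M(2\gamma)$, Assumption~\ref{ass:prox} makes $\Pi$ single-valued there.

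\emph{Part (c).} Because $P_{x_t}$ is an orthogonal projector, $\opnorm{P_{x_t}}\le1$, which gives the first inequality. By convexity of the norm and (a) at $j=\tau$,
\[
\|y_t-x_t\|\le\frac1k\sum_{i\in S_t}\|x_{t,\tau}^{(i)}-x_t\|\le 2\Geff\alpha\le\gamma/2 .
\]
The \textsc{ProjAvg} pre-projection point is $y_t=x_t+(y_t-x_t)$, and the \textsc{RLAvg} point is $x_t+P_{x_t}(y_t-x_t)$. Both are $x_t\in\M$ plus a vector of norm at most $\gamma/2$, so both lie in $U_\M(\gamma)$. Single-valuedness of every projection in round $t$ follows.

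The main point of care is that \eqref{eq:proj-step} is only valid when the base point lies on $\M$ and $\|u\|\le\gamma$. The induction must therefore be organised so that each step uses the current iterate, which is on $\M$ by construction of $\Pi$, rather than $x_t$. The displacement is then accumulated by the triangle inequality instead of applying the Lipschitz bound over the whole trajectory.
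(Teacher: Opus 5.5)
Your proof is correct and matches the paper's argument step for step. You obtain the per-step bound $\|x_{t,j+1}^{(i)}-x_{t,j}^{(i)}\|\le 2\eta\Geff$ from \eqref{eq:proj-step} at the current iterate and accumulate it by the triangle inequality, then use $\eta\Geff\le\gamma/4$ for (b) and averaging together with $\opnorm{P_{x_t}}\le 1$ for (c), exactly as the paper does.
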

\begin{proof}
On $\mathcal{G}_t$, $\|\eta\tilde g_{t,j}^{(i)}\|\le\eta\Geff$
by~\eqref{eq:grad-bound-geff}, and $\eta\Geff=\alpha\Geff/\tau\le\gamma/4$.
(a)~By~\eqref{eq:proj-step},
$\|x_{t,j+1}^{(i)}-x_{t,j}^{(i)}\|\le2\eta\Geff$. Summing over the first $j$ steps gives the
bound, and $2\Geff\eta\tau=2\Geff\alpha\le\gamma/2$.
(b)~The point lies within $\eta\Geff\le\gamma/4$ of $x_{t,j}^{(i)}\in\M$.
(c)~$y_t-x_t$ is the mean of the $x_{t,\tau}^{(i)}-x_t$, each of norm at most $2\Geff\alpha$
by~(a), and $P_{x_t}$ is an orthogonal projector. Both pre-projection points lie within
$\gamma/2$ of $x_t\in\M$.
\end{proof}

\subsubsection*{Step 2: linearisation of the local trajectory}
\begin{lemma}[Linearisation]\label{lem:linearisation}
  Let $\alpha\le\gamma/(4\Geff)$. On $\mathcal{G}_t$, every $i\in S_t$ satisfies
  \begin{equation}\label{eq:linearisation}
    x_{t,\tau}^{(i)} - x_t
    = -\alpha\,\gradC f_i(x_t)
      - \eta\sum_{j=0}^{\tau-1}\zeta_{t,j}^{(i)}
      - \eta\sum_{j=0}^{\tau-1} P_{x_t}\xi_{t,j}^{(i)}
      + r_t^{(i)},
    \qquad \|r_t^{(i)}\|\le A_1\alpha^2,
  \end{equation}
  with
  \begin{equation}\label{eq:A1}
    A_1 := (\ell_f + \Ldp G)\Geff + 2\Ldp C\Geff + \Ldp\Geff R + \Cqr\Geff^2 .
  \end{equation}
\end{lemma}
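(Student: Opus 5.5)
We plan to expand each local step with the quadratic-remainder expansion \eqref{eq:proj-quad-rem}, telescope the resulting ambient displacements, and then replace every base-point-dependent quantity by its value at the anchor $x_t$. Throughout we work on $\mathcal{G}_t$, so Lemma~\ref{lem:tube-stability} applies. For a single step we set $u=-\eta\tilde g_{t,j}^{(i)}$; then $\|u\|\le\eta\Geff\le\gamma/4$, so \eqref{eq:proj-quad-rem} is valid and gives
\[
  x_{t,j+1}^{(i)}-x_{t,j}^{(i)}=-\eta P_{x_{t,j}^{(i)}}\tilde g_{t,j}^{(i)}+w_{t,j}^{(i)},\qquad \|w_{t,j}^{(i)}\|\le\Cqr\eta^2\Geff^2 .
\]
Since $\tilde g_{t,j}^{(i)}$ already lies in $T_{x_{t,j}^{(i)}}\M$, the projector acts as the identity on it. We then use \eqref{eq:delta-zeta} to split the gradient as
\[
  \tilde g_{t,j}^{(i)}=\gradC f_i(x_{t,j}^{(i)})+\delta_{t,j}^{(i)}+P_{x_{t,j}^{(i)}}\xi_{t,j}^{(i)} .
\]
Summing over $j=0,\dots,\tau-1$ telescopes the left side to $x_{t,\tau}^{(i)}-x_t$. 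The $w$ terms contribute at most $\tau\Cqr\eta^2\Geff^2\le\Cqr\Geff^2\alpha^2$, which is the last summand of $A_1$.

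The three pieces of $\tilde g_{t,j}^{(i)}$ are then re-anchored at $x_t$ one by one, and each error is charged using $\|x_{t,j}^{(i)}-x_t\|\le 2\Geff\eta j\le2\Geff\alpha$ from Lemma~\ref{lem:tube-stability}(a).

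For the noise, we write $P_{x_{t,j}^{(i)}}\xi-P_{x_t}\xi=(P_{x_{t,j}^{(i)}}-P_{x_t})\xi$. By \eqref{eq:Px-lip} and $\|\xi\|\le R$ on $\mathcal{G}_t$, its norm is at most $\Ldp\cdot 2\Geff\eta j\cdot R$. Summing with the factor $\eta$ gives at most $\Ldp\Geff R\,\eta^2\tau^2=\Ldp\Geff R\alpha^2$, using $\sum_j j\le\tau^2/2$.

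For the minibatch error, we write $\delta_{t,j}^{(i)}-\zeta_{t,j}^{(i)}=(P_{x_{t,j}^{(i)}}-P_{x_t})\delta_{t,j}^{(i)}$, using that $\delta$ is tangent at $x_{t,j}^{(i)}$. Here $\|\delta\|\le2C$, since both terms have norm at most $C$. This produces the $2\Ldp C\Geff$ term.

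For the clipped gradient, we bound $\|\gradC f_i(x_{t,j}^{(i)})-\gradC f_i(x_t)\|$ per record using the nonexpansiveness \eqref{eq:clip-props}:
\[
  \|P_x\nabla f(x;z)-P_y\nabla f(y;z)\|\le\|P_x(\nabla f(x;z)-\nabla f(y;z))\|+\|(P_x-P_y)\nabla f(y;z)\|\le(\ell_f+\Ldp G)\|x-y\| .
\]
This uses Assumptions~\ref{ass:smooth} (valid because $\M\subset\conv(\M)$) and~\ref{ass:gd-bd}, and gives the $(\ell_f+\Ldp G)\Geff$ term. Finally $\eta\sum_j\gradC f_i(x_t)=\alpha\,\gradC f_i(x_t)$.

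The main obstacle is keeping the constants uniform, so that no factor of $\tau$ leaks in. Each re-anchoring error is $O(\eta\cdot\eta j)$, and we must sum it as $\eta^2\sum_j j\le\alpha^2/2$ rather than crudely as $\eta\tau\cdot\alpha$; the factors $2$ from Lemma~\ref{lem:tube-stability}(a) and $1/2$ from this sum then cancel to give exactly \eqref{eq:A1}. A second point to check is that the noise re-anchoring uses the deterministic bound $\|\xi\|\le R$ from $\mathcal{G}_t$, which is why $R$ appears in $A_1$. Collecting all four remainders into $r_t^{(i)}$ yields \eqref{eq:linearisation}.
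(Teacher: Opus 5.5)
Your proposal is correct and follows the paper's proof essentially step by step. You use the per-step expansion via \eqref{eq:proj-quad-rem} and the split $\tilde g_{t,j}^{(i)}=\gradC f_i(x_{t,j}^{(i)})+\delta_{t,j}^{(i)}+P_{x_{t,j}^{(i)}}\xi_{t,j}^{(i)}$, then re-anchor each piece at $x_t$ using $\|x_{t,j}^{(i)}-x_t\|\le 2\Geff\eta j$ and $\eta^2\sum_{j<\tau}j\le\alpha^2/2$, which yields exactly the four summands of \eqref{eq:A1}.
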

\begin{proof}
Write $P_{t,j}:=P_{x_{t,j}^{(i)}}$ and work on $\mathcal{G}_t$.

\emph{Per-step expansion.} The step $-\eta\tilde g_{t,j}^{(i)}$ is tangent at
$x_{t,j}^{(i)}$, of norm at most $\eta\Geff\le\gamma$. The expansion~\eqref{eq:proj-quad-rem}
gives
\begin{equation}\label{eq:perstep}
  x_{t,j+1}^{(i)} - x_{t,j}^{(i)} = -\eta\,\tilde g_{t,j}^{(i)} + w_{t,j}^{(i)},
  \qquad \|w_{t,j}^{(i)}\| \le \Cqr\eta^2\Geff^2 .
\end{equation}
By~\eqref{eq:rec-noisy-grad},
$\tilde g_{t,j}^{(i)}=\gradC f_i(x_{t,j}^{(i)})+\delta_{t,j}^{(i)}+P_{t,j}\xi_{t,j}^{(i)}$.
Summing~\eqref{eq:perstep} over $j$, we replace each of the three terms by its value
anchored at $x_t$ and bound the error. We use $\|x_{t,j}^{(i)}-x_t\|\le2\Geff\eta j$
(Lemma~\ref{lem:tube-stability}(a)) and $\eta^2\sum_{j<\tau}j\le\alpha^2/2$.

\emph{Signal drift.} With $\grad f(x;z)=P_x\nabla f(x;z)$, \eqref{eq:clip-props},
Assumption~\ref{ass:smooth}, \eqref{eq:Px-lip} and $\|\nabla f\|\le G$
(Assumption~\ref{ass:gd-bd}),
\[
  \bigl\|\gradC f_i(x_{t,j}^{(i)})-\gradC f_i(x_t)\bigr\|
  \le\frac1{m_i}\sum_{\ell=1}^{m_i}\bigl\|P_{t,j}\nabla f(x_{t,j}^{(i)};z_{i,\ell})
    -P_{x_t}\nabla f(x_t;z_{i,\ell})\bigr\|
  \le(\ell_f+\Ldp G)\,\|x_{t,j}^{(i)}-x_t\|,
\]
so $\eta\sum_j\|\gradC f_i(x_{t,j}^{(i)})-\gradC f_i(x_t)\|\le(\ell_f+\Ldp G)\Geff\alpha^2$.

\emph{Minibatch error.} $\delta_{t,j}^{(i)}$ is tangent at $x_{t,j}^{(i)}$, so
$\delta_{t,j}^{(i)}-\zeta_{t,j}^{(i)}=(P_{t,j}-P_{x_t})\delta_{t,j}^{(i)}$. With
$\|\delta_{t,j}^{(i)}\|\le2C$ and~\eqref{eq:Px-lip}, its norm is at most
$4\Ldp C\Geff\eta j$, and $\eta$ times its sum over $j$ is at most $2\Ldp C\Geff\alpha^2$.

\emph{DP noise.} On $\mathcal{G}_t$, $\|\xi_{t,j}^{(i)}\|\le R$, so
$\|(P_{t,j}-P_{x_t})\xi_{t,j}^{(i)}\|\le2\Ldp\Geff R\eta j$, and $\eta$ times its sum over
$j$ is at most $\Ldp\Geff R\alpha^2$.

\emph{Projection remainder.} $\|\sum_jw_{t,j}^{(i)}\|\le\tau\Cqr\eta^2\Geff^2\le\Cqr\Geff^2\alpha^2$.

Since $\eta\tau=\alpha$, the four bounds give~\eqref{eq:linearisation}.
\end{proof}

\subsubsection*{Step 3: server update and one-round descent}
\begin{lemma}[Server update]\label{lem:server-decomp}
  Let $\alpha\le\gamma/(4\Geff)$. On $\mathcal{G}_t$, both \textsc{ProjAvg} and
  \textsc{RLAvg} satisfy
  \begin{equation}\label{eq:server-update}
    x_{t+1} - x_t
    = -\alpha\,h_t + \zeta_t^{\mathrm{st}} + \zeta_t^{\mathrm{dp}} + r_t,
    \qquad \|r_t\|\le A_2\alpha^2,\quad A_2:=A_1+4\Cqr\Geff^2 .
  \end{equation}
\end{lemma}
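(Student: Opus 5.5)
Averaging Lemma~\ref{lem:linearisation} over the cohort gives the whole proof; the only remaining work is to pass from $y_t$ to $x_{t+1}$ through one application of the quadratic expansion~\eqref{eq:proj-quad-rem} at the base point $x_t$. Summing~\eqref{eq:linearisation} over $i\in S_t$ and dividing by $k$, with $\eta\tau=\alpha$ and the notation~\eqref{eq:server-notation},
\[
  y_t-x_t=-\alpha h_t+\zeta_t^{\mathrm{st}}+\zeta_t^{\mathrm{dp}}+\bar r_t,
  \qquad \bar r_t:=\frac1k\sum_{i\in S_t}r_t^{(i)},\quad \|\bar r_t\|\le A_1\alpha^2 .
\]
Every term except $\bar r_t$ lies in $T_{x_t}\M$: $h_t$ because each $\gradC f_i(x_t)$ is tangent at $x_t$, and $\zeta_t^{\mathrm{st}},\zeta_t^{\mathrm{dp}}$ because both are images of $P_{x_t}$. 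Hence $P_{x_t}(y_t-x_t)=-\alpha h_t+\zeta_t^{\mathrm{st}}+\zeta_t^{\mathrm{dp}}+P_{x_t}\bar r_t$.

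\emph{RLAvg.} Set $u:=P_{x_t}(y_t-x_t)$. By Lemma~\ref{lem:tube-stability}(c), $\|u\|\le2\Geff\alpha\le\gamma/2$, so~\eqref{eq:proj-quad-rem} applies and gives
\[
  x_{t+1}-x_t=P_{x_t}u+w(x_t,u)=u+w(x_t,u).
\]
The remainder is therefore $r_t=P_{x_t}\bar r_t+w(x_t,u)$, with
\[
  \|r_t\|\le A_1\alpha^2+\Cqr(2\Geff\alpha)^2=(A_1+4\Cqr\Geff^2)\alpha^2 .
\]

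\emph{ProjAvg.} Set $u:=y_t-x_t$, which need not be tangent. Lemma~\ref{lem:tube-stability}(c) still gives $\|u\|\le2\Geff\alpha\le\gamma/2$, and~\eqref{eq:proj-quad-rem} is stated for general ambient $u$ with $\|u\|\le\gamma$. Then $x_{t+1}-x_t=P_{x_t}u+w(x_t,u)$. Substituting the tangent decomposition of $P_{x_t}u$ from the first paragraph yields exactly the same $r_t=P_{x_t}\bar r_t+w(x_t,u)$ and the same bound. The step that hands us the non-tangent part of $\bar r_t$ for free is the projector $P_{x_t}$: it is nonexpansive, so $\|P_{x_t}\bar r_t\|\le\|\bar r_t\|$.

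I expect the only delicate point to be that~\eqref{eq:proj-quad-rem} must hold for non-tangent $u$ in the ProjAvg case. As quoted from \citet[Lem.~4.3]{deng2025decentralized}, it covers any $\|u\|\le\gamma$, so this is fine.

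Should that lemma be available only for tangent $u$, the fallback is to split $u=P_{x_t}u+(I-P_{x_t})u$, where $(I-P_{x_t})u=(I-P_{x_t})\bar r_t$ is $O(\alpha^2)$. I would then use the $2$-Lipschitz continuity of $\Pi$ on the tube (Section~\ref{app:geom-prelims}) to absorb it. That costs a factor $2$ on that piece, a constant change that the $\lesssim$ of the theorem tolerates, though it would not match $A_2$ exactly.

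Both aggregators thus give~\eqref{eq:server-update} with $A_2=A_1+4\Cqr\Geff^2$.
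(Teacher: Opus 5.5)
Your proposal is correct and follows the paper's proof essentially verbatim. Both average the linearisation over $S_t$ and note that $h_t$, $\zeta_t^{\mathrm{st}}$, $\zeta_t^{\mathrm{dp}}$ are tangent at $x_t$. Both then apply \eqref{eq:proj-quad-rem} once at $x_t$: to the general ambient displacement $y_t-x_t$ for \textsc{ProjAvg} and to its tangent projection for \textsc{RLAvg}. This gives $r_t=P_{x_t}\bar r_t+w_t^{\mathrm{srv}}$ with $\|w_t^{\mathrm{srv}}\|\le 4\Cqr\Geff^2\alpha^2$. Your fallback is not needed, because the paper uses \eqref{eq:proj-quad-rem} for non-tangent $u$ exactly as stated.
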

\begin{proof}
Averaging~\eqref{eq:linearisation} over $i\in S_t$ gives
$y_t-x_t=-\alpha h_t+\zeta_t^{\mathrm{st}}+\zeta_t^{\mathrm{dp}}+\bar r_t$, with
$\|\bar r_t\|\le A_1\alpha^2$. The vectors $h_t$, $\zeta_t^{\mathrm{st}}$ and
$\zeta_t^{\mathrm{dp}}$ lie in $T_{x_t}\M$, so
$P_{x_t}(y_t-x_t)=-\alpha h_t+\zeta_t^{\mathrm{st}}+\zeta_t^{\mathrm{dp}}+P_{x_t}\bar r_t$.
Applying~\eqref{eq:proj-quad-rem} at $x_t$, with $\|y_t-x_t\|\le2\Geff\alpha$
(Lemma~\ref{lem:tube-stability}(c)),
\[
  \textsc{ProjAvg}:\ x_{t+1}-x_t=\Pi(y_t)-x_t=P_{x_t}(y_t-x_t)+w_t^{\mathrm{srv}},\qquad
  \textsc{RLAvg}:\ x_{t+1}-x_t=P_{x_t}(y_t-x_t)+w_t^{\mathrm{srv}},
\]
with $\|w_t^{\mathrm{srv}}\|\le4\Cqr\Geff^2\alpha^2$ in both cases. Set $r_t:=P_{x_t}\bar r_t+w_t^{\mathrm{srv}}$.
\end{proof}

\begin{lemma}[Moments]\label{lem:moments}
  Let $\alpha\le\gamma/(4\Geff)$. Then
  \begin{enumerate}[label=(\alph*),leftmargin=*]
    \item $\EtG[h_t]=\grad F(x_t)+\bar b^C(x_t)$ and
      \begin{equation}\label{eq:ht-sq}
        \EtG\|h_t\|^2\le2\|\grad F(x_t)\|^2+2\beta_C^2
        +\frac{n-k}{k(n-1)}(\sighet+\sigma_{\mathrm{clip}})^2;
      \end{equation}
    \item $\EtG[\zeta_t^{\mathrm{st}}]=0$ and
      $\EtG\|\zeta_t^{\mathrm{st}}\|^2\le\frac{\alpha^2}{k\tau}\bar\rho\,\sigma_{\mathrm{st}}^2$;
    \item $\EtG[\zeta_t^{\mathrm{dp}}]=0$ and
      $\EtG\|\zeta_t^{\mathrm{dp}}\|^2\le\frac{\alpha^2}{k\tau}\dM\sdp^2$;
    \item $\EtG\langle h_t,\zeta_t^{\mathrm{st}}\rangle
      =\EtG\langle h_t,\zeta_t^{\mathrm{dp}}\rangle
      =\EtG\langle\zeta_t^{\mathrm{st}},\zeta_t^{\mathrm{dp}}\rangle=0$.
  \end{enumerate}
\end{lemma}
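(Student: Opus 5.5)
The plan is to prove each item by conditioning in the right order, using three facts. First, the server schedule $S_t$ is drawn independently of the minibatches and the noise, and by the proof of Lemma~\ref{lem:pfail} it stays independent of $\mathcal{G}_t$ given $\mathcal{F}_t$. Second, $\mathcal{G}_t$ is an event on the noise vectors only, and it factorises as an intersection of the events $\{\|\xi_{t,j}^{(i)}\|\le R\}$ over the draws. Third, the minibatches are independent of all noise. Conditioning on $(\mathcal{F}_t,\mathcal{G}_t)$ therefore leaves $S_t$ uniform and the minibatches uniform. It turns the $\xi_{t,j}^{(i)}$ into i.i.d.\ draws from a norm-truncated Gaussian, which is still centrally symmetric. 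Since $x_t$ is $\mathcal{F}_t$-measurable, $P_{x_t}$ is a fixed projector throughout.

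For (a), $h_t$ depends on the round-$t$ randomness only through $S_t$, so it is the mean of $k$ vectors $\gradC f_i(x_t)$ drawn uniformly without replacement from a population of $n$. Its expectation is the population mean $\overline{\gradC f}(x_t)=\grad F(x_t)+\bar b^C(x_t)$. The finite-population variance formula gives $\EtG\|h_t-\overline{\gradC f}(x_t)\|^2=\frac{n-k}{k(n-1)}\cdot\frac1n\sum_i\|\gradC f_i(x_t)-\overline{\gradC f}(x_t)\|^2$, which I bound by Lemma~\ref{lem:clip-hetero}. The bias--variance split, followed by $\|a+b\|^2\le2\|a\|^2+2\|b\|^2$ applied to the mean and $\|\bar b^C\|\le\beta_C$, then yields~\eqref{eq:ht-sq}.

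For (c), I condition additionally on $S_t$. The noise vectors remain independent and centrally symmetric, so each $P_{x_t}\xi_{t,j}^{(i)}$ has mean zero and the cross terms vanish. Each diagonal term is at most $\dM\sdp^2$ by Lemma~\ref{lem:projected-noise} applied with $P=P_{x_t}$. With $k\tau$ terms, this gives $\frac{\eta^2}{k^2}k\tau\,\dM\sdp^2=\frac{\alpha^2}{k\tau}\dM\sdp^2$.

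Part (b) is the main obstacle, because the iterates $x_{t,j}^{(i)}$, and hence the $\delta_{t,j}^{(i)}$, depend on the noise we condition on. I will handle this with a martingale argument. Let $\mathcal{H}_{t,j}$ be generated by $\mathcal{F}_t$, $S_t$, all round-$t$ noise, and the minibatches of steps $<j$ of every client. Then $x_{t,j}^{(i)}$ is $\mathcal{H}_{t,j}$-measurable, and $\mathcal{B}_{t,j}^{(i)}$ is still uniform given $\mathcal{H}_{t,j}$; since $\mathcal{G}_t\in\mathcal{H}_{t,0}$, conditioning on it changes nothing. Lemma~\ref{lem:clip-var} then gives $\E[\delta_{t,j}^{(i)}\mid\mathcal{H}_{t,j},\mathcal{G}_t]=0$ and a conditional second moment of at most $\rho_i\sigma_{\mathrm{st}}^2$. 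Cross terms with $j<j'$ vanish by the tower property, and cross terms between distinct clients vanish because their minibatches are independent given $\mathcal{H}$. Since $\|P_{x_t}\delta\|\le\|\delta\|$, I get $\EtG[\|\zeta^{\mathrm{st}}_t\|^2\mid S_t]\le\frac{\eta^2\tau}{k^2}\sum_{i\in S_t}\rho_i\sigma_{\mathrm{st}}^2$. Each client enters $S_t$ with probability $k/n$, so averaging over $S_t$ gives $\frac{\alpha^2}{k\tau}\bar\rho\,\sigma_{\mathrm{st}}^2$.

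Part (d) follows from the same conditionings. Both $h_t$ and $\zeta_t^{\mathrm{dp}}$ are $\mathcal{H}_{t,0}$-measurable, while $\zeta_t^{\mathrm{st}}$ has zero mean given $\mathcal{H}_{t,0}$, which kills $\langle h_t,\zeta_t^{\mathrm{st}}\rangle$ and $\langle\zeta_t^{\mathrm{st}},\zeta_t^{\mathrm{dp}}\rangle$. The noise has zero mean given $S_t$ and $\mathcal{G}_t$, which kills $\langle h_t,\zeta_t^{\mathrm{dp}}\rangle$.
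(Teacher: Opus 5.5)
Your proof is correct and follows the paper's argument essentially step by step: the finite-population variance formula plus Lemma~\ref{lem:clip-hetero} for (a), and conditional independence of the truncated draws plus Lemma~\ref{lem:projected-noise} for (c). For (b) and (d) you likewise condition on all round-$t$ noise, so that $\mathcal{G}_t$ and the local iterates become measurable while the relevant minibatch stays uniform.

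The only difference is the choice of $\sigma$-algebra in (b) and (d). You use the step-indexed filtration $\mathcal{H}_{t,j}$, which splits the cross terms into different-step and same-step cases. The paper uses the leave-one-out $\sigma$-algebra $\mathcal{A}_{t,j}^{(i)}$, generated by everything except $\mathcal{B}_{t,j}^{(i)}$, which disposes of all cross terms with a single measurability argument.
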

\begin{proof}
By Section~\ref{ssec:proof-pfail}, $S_t$ is independent of $\mathcal{G}_t$ given
$\mathcal{F}_t$, so it keeps its uniform law under $\EtG$.

(a)~$h_t$ depends on $(x_t,S_t)$ only. Each client is selected with probability $k/n$, so
$\EtG[h_t]=\overline{\gradC f}(x_t)=\grad F(x_t)+\bar b^C(x_t)$. The variance of a mean of
$k$ values drawn without replacement among $n$ is
$\frac{n-k}{k(n-1)}\cdot\frac1n\sum_i\|\gradC f_i(x_t)-\overline{\gradC f}(x_t)\|^2$, which
Lemma~\ref{lem:clip-hetero} bounds. Adding
$\|\overline{\gradC f}(x_t)\|^2\le2\|\grad F(x_t)\|^2+2\beta_C^2$ gives~\eqref{eq:ht-sq}.

(b)~Fix $(i,j)$ and let $\mathcal{A}_{t,j}^{(i)}$ be the $\sigma$-algebra generated by
$\mathcal{F}_t$, $S_t$, all round-$t$ noise draws and all round-$t$ minibatches except
$\mathcal{B}_{t,j}^{(i)}$. Then $\mathcal{G}_t$ and $x_{t,j}^{(i)}$ are
$\mathcal{A}_{t,j}^{(i)}$-measurable, and $\mathcal{B}_{t,j}^{(i)}$ is uniform and
independent of $\mathcal{A}_{t,j}^{(i)}$. Lemma~\ref{lem:clip-var} and
$\opnorm{P_{x_t}}\le1$ give
$\E[\zeta_{t,j}^{(i)}\mid\mathcal{A}_{t,j}^{(i)}]=0$ and
$\E[\|\zeta_{t,j}^{(i)}\|^2\mid\mathcal{A}_{t,j}^{(i)}]\le\rho_i\sigma_{\mathrm{st}}^2$. For
$(i',j')\neq(i,j)$, with $j'<j$ when $i'=i$, $\zeta_{t,j'}^{(i')}$ is
$\mathcal{A}_{t,j}^{(i)}$-measurable, so every cross term has zero conditional mean. Since
$\sigma(\mathcal{F}_t,S_t)\subseteq\mathcal{A}_{t,j}^{(i)}$ and
$\mathcal{G}_t\in\mathcal{A}_{t,j}^{(i)}$, the tower property carries these identities to
conditioning on $(\mathcal{F}_t,\mathcal{G}_t,S_t)$. Hence
\[
  \E\bigl[\|\zeta_t^{\mathrm{st}}\|^2\bigm|\mathcal{F}_t,\mathcal{G}_t,S_t\bigr]
  \le\frac{\eta^2\tau\sigma_{\mathrm{st}}^2}{k^2}\sum_{i\in S_t}\rho_i ,
\]
and averaging over $S_t$, with $\EtG\sum_{i\in S_t}\rho_i=k\bar\rho$ and
$\eta^2\tau/k=\alpha^2/(k\tau)$, gives the bound.

(c)~Given $(\mathcal{F}_t,S_t)$, the draws are i.i.d.\ and $\mathcal{G}_t$ is the
intersection of the events $\{\|\xi_{t,j}^{(i)}\|\le R\}$. Under $\EtG$ and given $S_t$, the
draws are therefore independent, each with the law of $\N(0,\sdp^2\Idp)$ truncated to the
ball of radius $R$. This law is symmetric, so each $P_{x_t}\xi_{t,j}^{(i)}$ has zero mean, and
Lemma~\ref{lem:projected-noise} bounds its second moment by $\dM\sdp^2$. Summing the $k\tau$
independent terms gives
$\EtG\|\zeta_t^{\mathrm{dp}}\|^2\le\frac{\eta^2}{k^2}k\tau\dM\sdp^2=\frac{\alpha^2}{k\tau}\dM\sdp^2$.

(d)~$h_t$ is $(\mathcal{F}_t,S_t)$-measurable and, by (b) and (c), $\zeta_t^{\mathrm{st}}$
and $\zeta_t^{\mathrm{dp}}$ have zero mean given $(\mathcal{F}_t,\mathcal{G}_t,S_t)$. For the
last term, each $P_{x_t}\xi_{t,j'}^{(i')}$ is $\mathcal{A}_{t,j}^{(i)}$-measurable, so
$\E[\langle\zeta_{t,j}^{(i)},P_{x_t}\xi_{t,j'}^{(i')}\rangle\mid\mathcal{A}_{t,j}^{(i)}]=0$.
\end{proof}

\begin{lemma}[One-round descent]\label{lem:descent-good}
  Let $\alpha\le\min\{\gamma/(4\Geff),1/(4\LM)\}$. Then
  \begin{align}\label{eq:descent-good-precise}
    \EtG[F(x_{t+1})]
    &\le F(x_t) - \frac{9\alpha}{16}\|\grad F(x_t)\|^2
       + (2\alpha + \LM\alpha^2)\,\beta_C^2 \notag\\
    &\quad + \frac{\LM\alpha^2}{2}\cdot\frac{n-k}{k(n-1)}(\sighet + \sigma_{\mathrm{clip}})^2
       + \frac{\LM\alpha^2}{2k\tau}\bigl(\bar\rho\,\sigma_{\mathrm{st}}^2 + \dM\sdp^2\bigr)
       + A_3\,\alpha^3,
  \end{align}
  with $A_3 := 5 A_2^2 + 3\LM A_2 \Geff$.
\end{lemma}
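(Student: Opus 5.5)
The plan is to plug the server decomposition of Lemma~\ref{lem:server-decomp} into the descent inequality~\eqref{eq:descent-ineq} at $x=x_t$, $y=x_{t+1}$. This is legitimate on $\mathcal{G}_t$, since both points lie on $\M$. Write $v_t:=-\alpha h_t+\zeta_t^{\mathrm{st}}+\zeta_t^{\mathrm{dp}}$, so that $x_{t+1}-x_t=v_t+r_t$ with $\|r_t\|\le A_2\alpha^2$. Then
\[
  F(x_{t+1})\le F(x_t)+\langle\grad F(x_t),v_t\rangle+\langle\grad F(x_t),r_t\rangle+\tfrac{\LM}{2}\|v_t+r_t\|^2 .
\]
Take $\EtG$ and treat the linear term first. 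By Lemma~\ref{lem:moments}(a)--(c), $\EtG\langle\grad F(x_t),v_t\rangle=-\alpha\|\grad F(x_t)\|^2-\alpha\langle\grad F(x_t),\bar b^C(x_t)\rangle$. Young's inequality bounds the cross term by $\tfrac{\alpha}{4}\|\grad F\|^2+\alpha\beta_C^2$. The remainder term is bounded by $\tfrac{\alpha}{16}\|\grad F\|^2+4A_2^2\alpha^3$, using $|\langle a,r\rangle|\le\tfrac{\alpha}{16}\|a\|^2+\tfrac{4}{\alpha}\|r\|^2$ and $\|r_t\|^2\le A_2^2\alpha^4$. The constants are chosen so that the total loss from the linear term is at most $\tfrac{5\alpha}{16}\|\grad F\|^2$, which leaves a budget of $\tfrac{2\alpha}{16}$ for the quadratic term.

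For the quadratic term, expand $\|v_t+r_t\|^2=\|v_t\|^2+2\langle v_t,r_t\rangle+\|r_t\|^2$. By the orthogonality relations of Lemma~\ref{lem:moments}(d),
\[
  \EtG\|v_t\|^2=\alpha^2\EtG\|h_t\|^2+\EtG\|\zeta_t^{\mathrm{st}}\|^2+\EtG\|\zeta_t^{\mathrm{dp}}\|^2 .
\]
Inserting~\eqref{eq:ht-sq} and (b), (c) gives $\tfrac{\LM\alpha^2}{2}\bigl(2\|\grad F\|^2+2\beta_C^2\bigr)$ plus the heterogeneity, stochastic and DP terms with exactly the coefficients in~\eqref{eq:descent-good-precise}. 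The $\|\grad F\|^2$ part, $\LM\alpha^2\|\grad F\|^2$, is at most $\tfrac{\alpha}{4}\|\grad F\|^2$ by $\alpha\le 1/(4\LM)$. The $\beta_C^2$ part contributes $\LM\alpha^2\beta_C^2$, matching the stated $(2\alpha+\LM\alpha^2)\beta_C^2$ once the slack from the linear term is accounted for.

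The cross term $\LM\langle v_t,r_t\rangle$ must not be handled by Cauchy--Schwarz with second moments, since that would create $\|\grad F\|^2$ terms. Instead it is bounded deterministically. On $\mathcal{G}_t$, Lemma~\ref{lem:tube-stability}(c) gives $\|x_{t+1}-x_t\|\le 2\|P_{x_t}(y_t-x_t)\|\le 4\Geff\alpha$ via~\eqref{eq:proj-step}, and hence $\|v_t\|\le 4\Geff\alpha+A_2\alpha^2$. The cross term and $\tfrac{\LM}{2}\|r_t\|^2$ are then $O(\LM A_2\Geff\alpha^3)$ plus an $\LM A_2^2\alpha^4$ piece, which $\alpha\le 1/(4\LM)$ absorbs into $A_2^2\alpha^3$. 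Collecting everything gives the $A_3\alpha^3$ term with $A_3=5A_2^2+3\LM A_2\Geff$.

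The gradient coefficient then comes out as $1-\tfrac14-\tfrac1{16}-\tfrac14$, which is $\tfrac{7}{16}$ rather than $\tfrac{9}{16}$. I therefore expect the main obstacle to be the exact constant bookkeeping. To reach $\tfrac{9}{16}$, I would use the sharper split $|\langle\grad F,\bar b^C\rangle|\le\tfrac18\|\grad F\|^2+2\beta_C^2$, which produces the $2\alpha\beta_C^2$ of the statement, and tune the remaining Young parameters. The structural steps (decomposition, unbiasedness, orthogonality, deterministic remainder) are routine given Lemmas~\ref{lem:server-decomp} and~\ref{lem:moments}.
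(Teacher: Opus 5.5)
Your structure matches the paper's proof. You insert Lemma~\ref{lem:server-decomp} into~\eqref{eq:descent-ineq}, treat the linear part with Lemma~\ref{lem:moments}(a)--(c) and two Young inequalities, expand the quadratic part with the orthogonality of Lemma~\ref{lem:moments}(d), and bound the remainder terms deterministically. Your corrected split $\alpha|\langle\grad F,\bar b^C\rangle|\le\frac{\alpha}{8}\|\grad F\|^2+2\alpha\beta_C^2$ is exactly the paper's. It closes the gradient bookkeeping with no further tuning: $1-\frac18-\frac1{16}-\frac14=\frac9{16}$, and the $\beta_C^2$ coefficient is $2\alpha+\LM\alpha^2$.

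The step that does not deliver the lemma as stated is your bound on $\|v_t\|$, so the claimed $A_3$ is not established.
\begin{itemize}
\item You go through $\|x_{t+1}-x_t\|\le4\Geff\alpha$. For \textsc{ProjAvg} this follows from~\eqref{eq:proj-step} applied to $y_t-x_t$, not to $P_{x_t}(y_t-x_t)$, though the bound is the same.
\item This gives $\|v_t\|\le4\Geff\alpha+A_2\alpha^2$, hence $\LM\|v_t\|\|r_t\|\le4\LM A_2\Geff\alpha^3+\frac14A_2^2\alpha^3$.
\item Adding $\frac{\LM}{2}\|r_t\|^2\le\frac18A_2^2\alpha^3$ and the $4A_2^2\alpha^3$ from the linear part, your $\alpha^3$ coefficient is $\frac{35}{8}A_2^2+4\LM A_2\Geff$, not $5A_2^2+3\LM A_2\Geff$.
\item The excess $\LM A_2\Geff$ cannot be absorbed into the slack $\frac58A_2^2$. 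That would require $A_2\ge\frac85\LM\Geff$, which nothing guarantees; when $\ell_f$ dominates, $A_2\approx\ell_f\Geff$.
\end{itemize}

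The fix, which is what the paper does, avoids the factor $2$ from~\eqref{eq:proj-step} by bounding $v_t$ through its components on $\mathcal{G}_t$. Each clipped client gradient has norm at most $C$, so $\|h_t\|\le C$. Since $\|\delta_{t,j}^{(i)}\|\le2C$, you get $\|\zeta_t^{\mathrm{st}}\|\le2C\alpha$, and $\|\zeta_t^{\mathrm{dp}}\|\le R\alpha$. Hence $\|v_t\|\le(C+2\Geff)\alpha\le3\Geff\alpha$, using $C\le\Geff$. This gives $\frac{\LM}{2}\bigl|2\langle v_t,r_t\rangle+\|r_t\|^2\bigr|\le3\LM A_2\Geff\alpha^3+\frac18A_2^2\alpha^3$. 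Together with the $4A_2^2\alpha^3$ from the linear term, the total stays within $A_3\alpha^3$.
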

\begin{proof}
All quantities are evaluated at $x_t$. Apply the descent inequality~\eqref{eq:descent-ineq}
to $x_t,x_{t+1}\in\M$, take $\EtG$ and insert~\eqref{eq:server-update}.

\emph{Linear term.} By Lemma~\ref{lem:moments}(a)--(c),
$\EtG\langle\grad F,x_{t+1}-x_t\rangle=-\alpha\|\grad F\|^2-\alpha\langle\grad F,\bar
b^C\rangle+\langle\grad F,\EtG r_t\rangle$. Young's inequality and
$\|\bar b^C\|\le\beta_C$, $\|r_t\|\le A_2\alpha^2$ give
$\alpha|\langle\grad F,\bar b^C\rangle|\le\frac\alpha8\|\grad F\|^2+2\alpha\beta_C^2$ and
$\|\grad F\|A_2\alpha^2\le\frac{\alpha}{16}\|\grad F\|^2+4A_2^2\alpha^3$. Hence
\begin{equation}\label{eq:linear-total}
  \EtG\langle\grad F,x_{t+1}-x_t\rangle
  \le-\frac{13\alpha}{16}\|\grad F\|^2+2\alpha\beta_C^2+4A_2^2\alpha^3 .
\end{equation}

\emph{Quadratic term.} By Lemma~\ref{lem:moments}(d),
\[
  \EtG\|x_{t+1}-x_t\|^2=\alpha^2\EtG\|h_t\|^2+\EtG\|\zeta_t^{\mathrm{st}}\|^2
  +\EtG\|\zeta_t^{\mathrm{dp}}\|^2+Q_t,
\]
where $Q_t:=\EtG\bigl[2\langle-\alpha h_t+\zeta_t^{\mathrm{st}}+\zeta_t^{\mathrm{dp}},r_t\rangle
+\|r_t\|^2\bigr]$ collects the terms in $r_t$. On $\mathcal{G}_t$, $\|h_t\|\le C$,
$\|\zeta_t^{\mathrm{st}}\|\le2C\alpha$ since $\|\delta_{t,j}^{(i)}\|\le2C$, and
$\|\zeta_t^{\mathrm{dp}}\|\le R\alpha$, so
$\|\zeta_t^{\mathrm{st}}+\zeta_t^{\mathrm{dp}}\|\le2\Geff\alpha$ and
$|Q_t|\le2A_2(C+2\Geff)\alpha^3+A_2^2\alpha^4$. With Lemma~\ref{lem:moments}(a)--(c) and
$\LM\alpha\le1/4$,
\begin{equation}\label{eq:quad-total}
\begin{split}
  \frac{\LM}{2}\EtG\|x_{t+1}-x_t\|^2
  &\le\frac\alpha4\|\grad F\|^2+\LM\alpha^2\beta_C^2
  +\frac{\LM\alpha^2}{2}\cdot\frac{n-k}{k(n-1)}(\sighet+\sigma_{\mathrm{clip}})^2\\
  &\quad+\frac{\LM\alpha^2}{2k\tau}\bigl(\bar\rho\sigma_{\mathrm{st}}^2+\dM\sdp^2\bigr)
  +\LM A_2(C+2\Geff)\alpha^3+\tfrac18A_2^2\alpha^3 .
\end{split}
\end{equation}

\emph{Sum.} The coefficient of $\|\grad F\|^2$ is $-13/16+1/4=-9/16$. Using $C\le\Geff$, the
$\alpha^3$ terms add up to at most $(5A_2^2+3\LM A_2\Geff)\alpha^3$.
\end{proof}

\subsubsection*{Bad rounds and summation}
Every iterate lies in $\M$ (Section~\ref{app:notation}), so on $\mathcal{G}_t^c$,
$F(x_{t+1})-F(x_t)\le\delta_F$ by~\eqref{eq:Delta-F}. Since
$\Pr(\mathcal{G}_t^c\mid\mathcal{F}_t)=\pi$ (Section~\ref{ssec:proof-pfail}),
Lemma~\ref{lem:descent-good} gives
\begin{equation}\label{eq:combined}
  \E_t[F(x_{t+1})]
  =(1-\pi)\EtG[F(x_{t+1})]+\pi\,\E[F(x_{t+1})\mid\mathcal{F}_t,\mathcal{G}_t^c]
  \le F(x_t)-(1-\pi)\frac{9\alpha}{16}\|\grad F(x_t)\|^2+(1-\pi)\Lambda+\pi\delta_F,
\end{equation}
where $\Lambda$ collects the last four terms of~\eqref{eq:descent-good-precise}. Taking full
expectations, summing over $t=0,\dots,T-1$ and using $F(x_T)\ge F_\star$,
\[
  \frac1T\sum_{t=0}^{T-1}\E\|\grad F(x_t)\|^2
  \le\frac{16(F(x_0)-F_\star)}{9(1-\pi)\alpha T}+\frac{16\Lambda}{9\alpha}
  +\frac{16\pi\delta_F}{9(1-\pi)\alpha}.
\]
Since $\LM\alpha\le1/4$, $\frac{16}{9}(2+\LM\alpha)\le4$, and
\[
  \frac{16\Lambda}{9\alpha}\le4\beta_C^2
  +\frac{8\LM\alpha}{9k}\Bigl(\frac{n-k}{n-1}(\sighet+\sigma_{\mathrm{clip}})^2
  +\frac1\tau\bigl(\bar\rho\sigma_{\mathrm{st}}^2+\dM\sdp^2\bigr)\Bigr)
  +\frac{16A_3}{9}\alpha^2 .
\]
This is~\eqref{eq:finite-T-rate}, with numerical constant $4$ behind $\lesssim$ and
$A:=16A_3/9$. \qed

\subsection{Proof of Corollary~\ref{cor:main}}\label{app:clean-rate}
Take $c_0=4\log T+2\log(k\tau)$ and $\alpha=c_\alpha/\sqrt T$ in~\eqref{eq:finite-T-rate}, with
$T\ge2$.
\begin{itemize}[leftmargin=*]
\item \emph{Bad rounds.} Lemma~\ref{lem:pfail} gives $\pi\le T^{-2}$, so
  $\pi\delta_F/((1-\pi)\alpha)\le\delta_F T^{-3/2}/(c_\alpha(1-T^{-2}))=o(T^{-1/2})$.
\item \emph{Remainder.} $\Geff=C+\sdp(\sqrt D+\sqrt{c_0})=O(\sqrt{\log T})$. Since $A_1$
  and $A_2$ are $O(\Geff^2)$, $A_3=O((\log T)^2)$ and
  $A\alpha^2=O((\log T)^2/T)=o(T^{-1/2})$.
\item \emph{Stepsize condition.} $\alpha=c_\alpha/\sqrt T$ satisfies
  $\alpha\le\min\{\gamma/(4\Geff),1/(4\LM)\}$ once
  $T\ge\max\{16c_\alpha^2\Geff^2/\gamma^2,16c_\alpha^2\LM^2\}$. As $\Geff^2=O(\log T)$, this holds for all
  $T$ beyond some $T_0$.
\end{itemize}
The remaining terms of~\eqref{eq:finite-T-rate} give~\eqref{eq:main-rate}. The constants
hidden in the $O(\cdot)$ depend on $F(x_0)-F_\star$, $\delta_F$, $\ell_f$, $\LM$, $\Ldp$,
$\Cqr$, $\gamma$, $C$, $G$, $\sdp$, $D$, $k\tau$ and $c_\alpha$, but not on $T$. The ambient
dimension $D$ enters only through $\Geff$, in the stepsize condition and the subdominant
remainder.

\section{Details of the synthetic experiments}\label{app:synthetic}

Code: \codeurl.

\paragraph{Data.}\label{app:synth-data} Client $i$ draws its records from~\eqref{eq:spiked}, then projects them
onto the unit ball, $z\leftarrow z/\max(1,\|z\|)$. All clients share a basis $U$, the $Q$
factor of the QR decomposition of a $d\times p$ matrix with independent standard Gaussian
entries. Client $i$ perturbs it: $U_i$ is the $Q$ factor of $U+0.3\,G_i/\sqrt{d-p}$, with
$G_i$ a $d\times p$ matrix with independent standard Gaussian entries, drawn independently
across clients. The signal variances are $\lambda_j=0.06\,w_j/\bar w$ for
$j=1,\dots,p$, with $w_j=1+(p-j)/(2p)$ and $\bar w$ their mean, so they average $0.06$
whatever $p$. The noise energy is $\nu^2=0.3$ whatever $d$.

\paragraph{Closed-form solution.}\label{app:synth-solution} Let $\bar\lambda_1\ge\dots\ge\bar\lambda_d$ be the
eigenvalues of $\bar\Sigma$. Then $F(P)=-\frac12\trace(P\bar\Sigma)$. By Ky Fan's
maximum principle~\citep{fan1949theorem}, $\trace(P\bar\Sigma)\le\sum_{j\le p}\bar\lambda_j$
for every rank-$p$ orthogonal projector $P$, with equality for the projector onto the
top-$p$ eigenvectors of $\bar\Sigma$. This projector is the minimiser $P^\star$, and
$F^\star=-\frac12\sum_{j\le p}\bar\lambda_j$. 

\paragraph{Clip constant and stepsize.}\label{app:synth-tuning} The Riemannian gradient of $f(\cdot\,;z)$ at $P$ is
$-\frac12\bigl(Pz(z-Pz)^\top+(z-Pz)(Pz)^\top\bigr)$. Its norm is at most $1/(2\sqrt2)$ for
$\|z\|\le1$. We therefore set $C=1/(2\sqrt2)$, the smallest clip constant for which
clipping is inactive. The stepsize of each method is selected at each $\varepsilon$ over
$\eta\in\{2^{-3},\dots,2^3\}/\tau$, on the mean optimality gap of 3 runs disjoint from the
10 reported ones. Tuning is not charged to the privacy budget. \textsc{DP-RFedProj}
selects $\eta=0.2$ at every $\varepsilon$ with each aggregation scheme, and
Figure~\ref{fig:synth}(a--c) uses this value. PriRFed selects $\eta=0.05$ at
$\varepsilon=0.5$, $\eta=0.1$ at $\varepsilon=1$, and $\eta=0.2$ above. \textsc{RetAvg} uses the projection-based retraction $\mathrm{R}_P(X)=\Pi(P+X)$, whose inverse is a linear solve of size $p(d-p)$.

\paragraph{Values.}\label{app:synth-values} Tables~\ref{tab:synth-cost} and~\ref{tab:synth-gaps} give the values of
Figure~\ref{fig:synth}. In all configurations, the median cost of privacy is between
$0.23$ and $0.27$ times $\eta\sdp^2\dM/k$, for both aggregation schemes.

\begin{table}[h]
\centering
\caption{Median cost of privacy at $\varepsilon=1$ over 10 seeds, in units of $10^{-4}$.}
\label{tab:synth-cost}
\small
\begin{tabular}{c rr rrr c cc}
\toprule
Panel & $d$ & $p$ & $\dM$ & $\dimE$ & $k$ & $\eta\sdp^2\dM/k$ & \textsc{ProjAvg} & \textsc{RLAvg} \\
\midrule
(a) & 40 & 1 & 39 & 820 & 10 & 4.57 & 1.24 & 1.23 \\
    & 40 & 2 & 76 & 820 & 10 & 8.91 & 2.43 & 2.42 \\
    & 40 & 4 & 144 & 820 & 10 & 16.9 & 4.18 & 4.16 \\
    & 40 & 8 & 256 & 820 & 10 & 30.0 & 7.56 & 7.52 \\
\midrule
(b) & 26 & 8 & 144 & 351 & 10 & 16.9 & 4.02 & 4.01 \\
    & 40 & 4 & 144 & 820 & 10 & 16.9 & 4.18 & 4.16 \\
    & 74 & 2 & 144 & 2775 & 10 & 16.9 & 4.20 & 4.16 \\
    & 145 & 1 & 144 & 10585 & 10 & 16.9 & 3.94 & 3.87 \\
\midrule
(c) & 40 & 4 & 144 & 820 & 1 & 169 & 45.4 & 44.9 \\
    & 40 & 4 & 144 & 820 & 2 & 84.5 & 21.2 & 21.0 \\
    & 40 & 4 & 144 & 820 & 5 & 33.8 & 8.39 & 8.34 \\
    & 40 & 4 & 144 & 820 & 10 & 16.9 & 4.18 & 4.16 \\
    & 40 & 4 & 144 & 820 & 25 & 6.76 & 1.71 & 1.70 \\
    & 40 & 4 & 144 & 820 & 50 & 3.38 & 0.85 & 0.84 \\
\bottomrule
\end{tabular}
\end{table}

\begin{table}[h]
\centering
\caption{Noise level and median optimality gap over 10 seeds of Figure~\ref{fig:synth}(d).
\textsc{ProjAvg}, \textsc{RLAvg} and \textsc{RetAvg} use our accounting and $\eta=0.2$.
PriRFed uses \textsc{RetAvg}, its round-level accounting and its tuned $\eta$.}
\label{tab:synth-gaps}
\small
\setlength{\tabcolsep}{4pt}
\begin{tabular}{r cc ccc cc}
\toprule
& \multicolumn{2}{c}{$\sdp$} & \multicolumn{3}{c}{our accounting} & \multicolumn{2}{c}{PriRFed} \\
\cmidrule(lr){2-3}\cmidrule(lr){4-6}\cmidrule(lr){7-8}
$\varepsilon$ & ours & PriRFed & \textsc{ProjAvg} & \textsc{RLAvg} & \textsc{RetAvg} & $\eta$ & gap \\
\midrule
0.5 & 0.0452 & 0.247 & $1.5\cdot10^{-3}$ & $1.5\cdot10^{-3}$ & $1.5\cdot10^{-3}$ & 0.05 & $3.1\cdot10^{-2}$ \\
1   & 0.0242 & 0.131 & $4.4\cdot10^{-4}$ & $4.4\cdot10^{-4}$ & $4.4\cdot10^{-4}$ & 0.1 & $7.7\cdot10^{-3}$ \\
2   & 0.0132 & 0.0710 & $1.5\cdot10^{-4}$ & $1.5\cdot10^{-4}$ & $1.5\cdot10^{-4}$ & 0.2 & $3.9\cdot10^{-3}$ \\
5   & 0.00617 & 0.0328 & $4.9\cdot10^{-5}$ & $4.9\cdot10^{-5}$ & $4.9\cdot10^{-5}$ & 0.2 & $8.0\cdot10^{-4}$ \\
10  & 0.00368 & 0.0193 & $3.0\cdot10^{-5}$ & $3.0\cdot10^{-5}$ & $3.0\cdot10^{-5}$ & 0.2 & $2.9\cdot10^{-4}$ \\
\bottomrule
\end{tabular}
\end{table}

\section{Details of the EEG experiments}\label{app:experiments}

Code, with the command behind each table and figure: \codeurl.
\subsection{Architectures}
    \EEGNetFigureWithNotes   
    \SPDNetFigureWithNotes 
\subsection{The batch-normalization problem}\label{app:bn-pb}
Lemma~\ref{lem:subsamp-step} bounds the sensitivity by $2C/B$ because each per-sample
gradient depends on its own record only, so substituting one record changes one term of
the minibatch average. In training mode, batch normalization uses the minibatch mean and
variance, so every per-sample gradient depends on all $B$ records. Substituting one record
then changes all $B$ terms, and the sensitivity grows to
$2C$~\citep[Sec.~5.5.1]{ponomareva2023dpfy}. We therefore use EEGNet
without batch normalization in every run, so that the guarantee covers all
training records. This choice also puts the two architectures on an
equal footing, since SPDNet uses no batch normalization either. 

\subsection{Hyperparameter selection}\label{app:grids}

Hyperparameters are selected on the validation accuracy of the last iterate, averaged over
three tuning seeds disjoint from the ten reported ones. For SPDNet with \textsc{ProjAvg} and
for EEGNet, we select $(\eta_0, B)$ on logarithmic grids at fixed $T$, then the horizon
$T$ as the smallest one within one standard error of the best. We do this centrally, then in
the federated setting without privacy, then again for $(\eta_0, B)$ at each privacy budget,
with $T$ kept at its non-private value. A grid is extended when its selected value lies on
its edge with a clear margin. \textsc{RLAvg} inherits the settings of \textsc{ProjAvg}. We
screen \textsc{RetAvg} separately: it keeps the batch size of \textsc{ProjAvg} and gets its
own learning rate, the best among those for which every tuning seed completes. Half
participation and the horizon ablation reuse the settings of full participation.
Tables~\ref{tab:config-central} and~\ref{tab:config-fed} list the selected configurations.

\begin{table}[h]
\centering
\caption{Selected centralized configurations: learning rate $\eta_0$, batch size $B$ and horizon $T$ in epochs.}
\label{tab:config-central}
\small
\begin{tabular}{llrrr}
\toprule
Dataset & Model & $\eta_0$ & $B$ & $T$ \\
\midrule
\multirow{3}{*}{BNCI2014\_001} & SPDNet & $10^{-2}$ & 64 & 200 \\
 & EEGNet  & $10^{-3}$ & 16 & 300 \\
\midrule
\multirow{3}{*}{Cho2017} & SPDNet & $3\cdot 10^{-2}$ & 16 & 300 \\
 & EEGNet  & $10^{-3}$ & 64 & 300 \\
\midrule
\multirow{3}{*}{Weibo2014} & SPDNet & $10^{-3}$ & 32 & 300 \\
 & EEGNet & $3\cdot 10^{-3}$ & 128 & 300 \\
\bottomrule
\end{tabular}
\end{table}

\begin{table}[h]
\centering
\caption{Selected federated configurations, as ($\eta_0$, $B$), for each privacy level. $T$ is the number of rounds, shared by all levels. \textsc{RetAvg} uses the batch size of \textsc{ProjAvg} and its own learning rate.}
\label{tab:config-fed}
\small
\begin{adjustbox}{max width=\linewidth}
\begin{tabular}{lllcccc}
\toprule
Dataset & Arm & $T$ & Non-private & $\varepsilon = 20$ & $\varepsilon = 10$ & $\varepsilon = 5$ \\
\midrule
\multirow{3}{*}{BNCI2014\_001} & SPDNet \textsc{ProjAvg}, \textsc{RLAvg} & 200 & $10^{-2}$, 16 & $10^{-1}$, 256 & $3\cdot 10^{-2}$, 32 & $3\cdot 10^{-2}$, 256 \\
 & SPDNet \textsc{RetAvg} & 200 & $6\cdot 10^{-3}$, 16 & $6\cdot 10^{-2}$, 256 & $2\cdot 10^{-2}$, 32 & $2\cdot 10^{-2}$, 256 \\
 & EEGNet & 450 & $3\cdot 10^{-3}$, 32 & $3\cdot 10^{-3}$, 16 & $3\cdot 10^{-3}$, 16 & $10^{-2}$, 32 \\
\midrule
\multirow{3}{*}{Cho2017} & SPDNet \textsc{ProjAvg}, \textsc{RLAvg} & 600 & $3\cdot 10^{-3}$, 16 & $2\cdot 10^{-1}$, 64 & $2\cdot 10^{-1}$, 256 & $10^{-1}$, 128 \\
 & SPDNet \textsc{RetAvg} & 600 & $3\cdot 10^{-3}$, 16 & $6\cdot 10^{-2}$, 64 & $2\cdot 10^{-1}$, 256 & $10^{-1}$, 128 \\
 & EEGNet & 450 & $10^{-2}$, 64 & $10^{-3}$, 128 & $3\cdot 10^{-2}$, 128 & $3\cdot 10^{-2}$, 128 \\
\midrule
\multirow{3}{*}{Weibo2014} & SPDNet \textsc{ProjAvg}, \textsc{RLAvg} & 450 & $3\cdot 10^{-3}$, 32 & $3\cdot 10^{-2}$, 32 & $3\cdot 10^{-2}$, 32 & $3\cdot 10^{-2}$, 256 \\
 & SPDNet \textsc{RetAvg} & 450 & $3\cdot 10^{-3}$, 32 & $1.5\cdot 10^{-2}$, 32 & $10^{-2}$, 32 & $2\cdot 10^{-2}$, 256 \\
 & EEGNet & 600 & $10^{-2}$, 64 & $10^{-2}$, 64 & $3\cdot 10^{-3}$, 64 & $10^{-2}$, 64 \\
\bottomrule
\end{tabular}
\end{adjustbox}
\end{table}

\begin{table}[h]
\centering
\caption{Number of clients, smallest and largest client training sizes over the 10 seeds,
and number of parameters of each model.}
\label{tab:fed-setting}
\small
\begin{tabular}{lrrrrr}
\toprule
& & & & \multicolumn{2}{c}{Parameters} \\
\cmidrule(lr){5-6}
Dataset & $n$ & $m_\star$ & $m_{\max}$ & EEGNet & SPDNet \\
\midrule
BNCI2014\_001 & 3  & $1{,}257$ & $1{,}326$ & $2{,}404$ & $900$ \\
Cho2017       & 26 & $277$     & $358$     & $2{,}434$ & $1{,}298$ \\
Weibo2014     & 5  & $762$     & $873$     & $3{,}783$ & $2{,}677$ \\
\bottomrule
\end{tabular}
\end{table}

\subsection{Learning curves and horizon}\label{app:T-ablation}
Figure~\ref{fig:curves} shows the learning curves of every budget at the selected horizon.
For the horizon ablation, we halve and double $T$ at each budget (rounded to a multiple of $10$ rounds) on BNCI2014\_001 and Weibo2014, at the same $(\eta_0,B)$ and with $\sdp$ recalibrated (Table~\ref{tab:T-ablation} and Figure \ref{fig:T-ablation-eps5}). Doubling $T$ changes the accuracy of SPDNet by $-0.9$ to $+1.4$ points, and halving it by $-2.0$ to $+0.8$.

\begin{table}[h]
\centering
\caption{Horizon under privacy, $\delta = 10^{-5}$, test accuracy (\%) over 10 seeds. SPDNet uses \textsc{ProjAvg}. The middle column is the horizon $T_{\mathrm{fed}}$ calibrated without noise and used in every private run. The other two halve and double it at the learning rate and batch size tuned for that budget, with the noise $\sigma$ recalibrated. $\sigma$ is the noise on the averaged minibatch gradient, so it also depends on the batch size, which differs across budgets.}
\label{tab:T-ablation}
\small
\begin{adjustbox}{max width=\linewidth}
\begin{tabular}{lllllll}
\toprule
Dataset & Model & $\varepsilon$ & $T$ & $T_{\mathrm{fed}}/2$ & $T_{\mathrm{fed}}$ & $2\,T_{\mathrm{fed}}$ \\
\midrule
\multirow{6}{*}{\shortstack[l]{BNCI2014\_001\\\footnotesize chance $25.0$}} & SPDNet & $20$ & 100/200/400 & $41.3 \pm 2.1$ \scriptsize($\sigma=0.013$) & $43.3 \pm 1.9$ \scriptsize($\sigma=0.018$) & $\mathbf{43.6 \pm 2.4}$ \scriptsize($\sigma=0.025$) \\
 &  & $10$ & 100/200/400 & $\mathbf{39.7 \pm 1.8}$ \scriptsize($\sigma=0.075$) & $39.0 \pm 2.2$ \scriptsize($\sigma=0.097$) & $38.5 \pm 2.1$ \scriptsize($\sigma=0.128$) \\
 &  & $5$ & 100/200/400 & $34.6 \pm 2.4$ \scriptsize($\sigma=0.037$) & $35.7 \pm 2.0$ \scriptsize($\sigma=0.051$) & $\mathbf{37.1 \pm 2.7}$ \scriptsize($\sigma=0.071$) \\
\cmidrule(lr){2-7}
 & EEGNet  & $20$ & 220/450/900 & $\mathbf{30.3 \pm 3.0}$ \scriptsize($\sigma=0.111$) & $30.0 \pm 2.2$ \scriptsize($\sigma=0.139$) & $29.9 \pm 1.5$ \scriptsize($\sigma=0.178$) \\
 &  & $10$ & 220/450/900 & $27.8 \pm 2.1$ \scriptsize($\sigma=0.158$) & $28.6 \pm 1.2$ \scriptsize($\sigma=0.210$) & $\mathbf{29.0 \pm 1.8}$ \scriptsize($\sigma=0.295$) \\
 &  & $5$ & 220/450/900 & $27.0 \pm 1.9$ \scriptsize($\sigma=0.164$) & $\mathbf{27.4 \pm 2.1}$ \scriptsize($\sigma=0.226$) & $27.3 \pm 1.4$ \scriptsize($\sigma=0.310$) \\
\midrule
\multirow{6}{*}{\shortstack[l]{Weibo2014\\\footnotesize chance $14.3$}} & SPDNet & $20$ & 220/450/900 & $23.0 \pm 1.2$ \scriptsize($\sigma=0.081$) & $\mathbf{23.6 \pm 2.0}$ \scriptsize($\sigma=0.106$) & $22.7 \pm 1.7$ \scriptsize($\sigma=0.140$) \\
 &  & $10$ & 220/450/900 & $20.4 \pm 0.8$ \scriptsize($\sigma=0.123$) & $\mathbf{20.8 \pm 1.7}$ \scriptsize($\sigma=0.167$) & $20.0 \pm 1.5$ \scriptsize($\sigma=0.226$) \\
 &  & $5$ & 220/450/900 & $\mathbf{18.3 \pm 1.5}$ \scriptsize($\sigma=0.069$) & $17.5 \pm 2.0$ \scriptsize($\sigma=0.098$) & $17.5 \pm 2.2$ \scriptsize($\sigma=0.138$) \\
\cmidrule(lr){2-7}
 & EEGNet  & $20$ & 300/600/1200 & $\mathbf{16.3 \pm 1.6}$ \scriptsize($\sigma=0.059$) & $15.6 \pm 1.6$ \scriptsize($\sigma=0.079$) & $15.3 \pm 1.6$ \scriptsize($\sigma=0.107$) \\
 &  & $10$ & 300/600/1200 & $14.1 \pm 0.6$ \scriptsize($\sigma=0.094$) & $14.8 \pm 0.9$ \scriptsize($\sigma=0.128$) & $\mathbf{15.7 \pm 1.5}$ \scriptsize($\sigma=0.177$) \\
 &  & $5$ & 300/600/1200 & $\mathbf{14.9 \pm 1.2}$ \scriptsize($\sigma=0.159$) & $14.7 \pm 1.1$ \scriptsize($\sigma=0.220$) & $14.6 \pm 1.6$ \scriptsize($\sigma=0.307$) \\
\bottomrule
\end{tabular}
\end{adjustbox}
\end{table}

\begin{figure}[h]
\centering
{\pgfplotsset{every axis post/.append style={height=3.4cm, xlabel={round},
  tick label style={font=\scriptsize}, label style={font=\scriptsize}}}%
\begin{subfigure}[t]{0.49\linewidth}\centering
  \pgfplotsset{every axis post/.append style={legend to name=ablleg-a}}%
  \input{figs_ablation/ablation_BNCI2014_001_spdnet_eps5}
  \caption{BNCI2014\_001, SPDNet}\end{subfigure}\hfill
\begin{subfigure}[t]{0.49\linewidth}\centering
  \pgfplotsset{every axis post/.append style={ylabel={}, legend to name=ablleg-b}}%
  \input{figs_ablation/ablation_BNCI2014_001_eegnet_nobn_eps5}
  \caption{BNCI2014\_001, EEGNet }\end{subfigure}

\begin{subfigure}[t]{0.49\linewidth}\centering
  \pgfplotsset{every axis post/.append style={legend to name=ablleg-c}}%
  \input{figs_ablation/ablation_Weibo2014_spdnet_eps5}
  \caption{Weibo2014, SPDNet}\end{subfigure}\hfill
\begin{subfigure}[t]{0.49\linewidth}\centering
  \pgfplotsset{every axis post/.append style={ylabel={}, legend to name=ablleg-d}}%
  \input{figs_ablation/ablation_Weibo2014_eegnet_nobn_eps5}
  \caption{Weibo2014, EEGNet }\end{subfigure}}

\begin{tikzpicture}
  \begin{axis}[hide axis, width=2cm, height=2cm, xmin=0, xmax=1, ymin=0, ymax=1,
    legend columns=4, legend cell align=left,
    legend style={font=\scriptsize, draw=none, fill=none, column sep=1.4ex}]
    \addlegendimage{C0, thick}\addlegendentry{$T_{\mathrm{fed}}/2$}
    \addlegendimage{black!80, thick}\addlegendentry{$T_{\mathrm{fed}}$}
    \addlegendimage{C3, thick}\addlegendentry{$2\,T_{\mathrm{fed}}$}
    \addlegendimage{black!40, dotted}\addlegendentry{chance}
  \end{axis}
\end{tikzpicture}
\caption{Test accuracy against communication round at $\varepsilon = 5$, at the horizon
$T_{\mathrm{fed}}$ used in every private run and at half and twice that horizon. Mean over
10 seeds, $\pm 1$ standard deviation, smoothed over 5 rounds.}
\label{fig:T-ablation-eps5}
\end{figure}

\subsection{Aggregation cost}\label{app:aggtime}
Table~\ref{tab:aggtime} gives the aggregation times,
measured on the nodes of a shared cluster, with AMD EPYC (Zen~2 to Zen~4) and Intel Xeon
(Skylake-SP to Emerald Rapids) processors and $4$ cores per job. The runs of every rule
were spread over the same pool, so the ratios between rules are comparable, while absolute
times depend on the hardware.
\begin{table}[h]
\centering
\caption{Aggregation time per round (ms) of the Riemannian rules: median and interquartile range over all rounds of the 10 non-private runs, with the number of clients in parentheses. The last column is the ratio of the medians of \textsc{RetAvg} and \textsc{ProjAvg}.}
\label{tab:aggtime}
\footnotesize
\setlength{\tabcolsep}{4pt}
\begin{tabular}{lrrrr}
\toprule
Dataset & \textsc{ProjAvg} & \textsc{RLAvg} & \textsc{RetAvg} & \textsc{RetAvg}/\textsc{ProjAvg} \\
\midrule
BNCI2014\_001 (3) & $0.50$ \scriptsize$[0.32, 0.58]$ & $0.71$ \scriptsize$[0.62, 0.84]$ & $4.47$ \scriptsize$[3.74, 4.98]$ & $9\times$ \\
Weibo2014 (5) & $0.67$ \scriptsize$[0.57, 0.84]$ & $1.39$ \scriptsize$[1.09, 1.63]$ & $10.8$ \scriptsize$[9.77, 12.0]$ & $16\times$ \\
Cho2017 (26) & $1.61$ \scriptsize$[1.38, 2.05]$ & $4.87$ \scriptsize$[2.08, 6.04]$ & $29.3$ \scriptsize$[24.0, 37.8]$ & $18\times$ \\
\bottomrule
\end{tabular}
\end{table}

\subsection{Privacy accounting}\label{app:dp-settings}

The accounting charges the subsampling rate $q = B/m_\star$ of the smallest client and the
$\tau = \lceil m_{\max}/B \rceil$ local steps of the largest, over $T\tau$ steps. Under
privacy, client $i$ runs $\lfloor m_i/B\rfloor\le\tau$ steps on fresh minibatches drawn
without replacement, so $\tau$ is an upper bound. For each target $(\varepsilon,\delta)$,
$\sdp$ is the smallest noise scale whose certified $\varepsilon$ for
$C_q(G_{\mu_0})^{\otimes T\tau}$ meets the target. It is calibrated for each seed, since
$m_\star$ and $m_{\max}$ depend on the split.

PriRFed accounting (Figure~\ref{fig:prirfed-acc}) holds the mechanism $(C, B, q, \tau, T)$ fixed and changes only the
outer accounting. PriRFed~\citep[Thm.~7]{huang2026federated} turns each round into a pair
$(\varepsilon_r, \delta_r)$ and composes the $T$ rounds, optimized over the split $\delta = \hat\delta + T\delta_r$. We feed it our numerically certified per-round privacy profile
$C_q(G_{\mu_0})^{\otimes\tau}$ in place of its published closed form. 
PriRFed also amplifies each round by client sampling~\citep[Lem.~1]{huang2026federated}.
We do not amplify by client subsampling, since the server draws and records $S_t$
(Remark~\ref{rem:hidden-indices}).  PriRFed reuses the learning rate and batch size selected
for our \textsc{RetAvg} at the same budget.

\end{document}